\documentclass{article}
\usepackage[preprint]{style/neurips_2026}
\usepackage[T1]{fontenc}
\usepackage[utf8]{inputenc}
\usepackage{url}
\usepackage{microtype}
\usepackage{xcolor}
\usepackage{amsmath,amssymb,amsthm,mathtools}
\usepackage{amsfonts}
\usepackage{algorithm}
\usepackage{algpseudocode}
\usepackage{graphicx}
\usepackage{subcaption}
\usepackage{booktabs}
\usepackage{hyperref}
\usepackage{float}
\IfFileExists{placeins.sty}{\usepackage{placeins}}{\newcommand{\FloatBarrier}{}}

\graphicspath{{fig/}}

\newtheorem{theorem}{Theorem}
\newtheorem{lemma}{Lemma}
\newtheorem{definition}{Definition}
\newtheorem{proposition}{Proposition}
\newtheorem{corollary}{Corollary}
\newtheorem{assumption}{Assumption}
\theoremstyle{definition}
\newtheorem{remark}{Remark}

\newcommand{\E}{\mathbb{E}}
\newcommand{\Prb}{\mathbb{P}}
\newcommand{\R}{\mathbb{R}}
\newcommand{\cA}{\mathcal{A}}
\newcommand{\cE}{\mathcal{E}}
\newcommand{\cF}{\mathcal{F}}
\newcommand{\cH}{\mathcal{H}}
\newcommand{\cC}{\mathcal{C}}
\newcommand{\norm}[1]{\left\lVert #1 \right\rVert}
\newcommand{\ip}[2]{\left\langle #1, #2 \right\rangle}
\newcommand{\argmax}{\operatorname*{arg\,max}}

\algrenewcommand\algorithmicrequire{\textbf{Input:}}

\title{Direction-Aware Offline-to-Online Learning in Linear Contextual Bandits}
\author{
\mdseries
Zean Han$^{a}$, Ruihan Lin$^{b}$,\\
Zezhen Ding$^{c}$, Jiheng Zhang$^{d}$\\
Department of Industrial Engineering and Decision Analytics\\
The Hong Kong University of Science and Technology\\
\texttt{$^{a}$zhanax@connect.ust.hk, $^{b}$rlinah@connect.ust.hk}\\
\texttt{$^{c}$zdingah@connect.ust.hk, $^{d}$jiheng@ust.hk}
}
\date{}

\IfFileExists{style/experiment_macros.tex}{\input{style/experiment_macros.tex}}{}
\IfFileExists{style/paper_macros.tex}{

}{}

\begin{document}
\maketitle

\begin{abstract}
Many bandit systems are deployed with offline historical data, such as past logs from earlier policies.
Using these data can reduce early online exploration when they remain informative for the online problem.
When the offline and online environments differ, such data can be biased for the online problem.
For linear (contextual) bandits, this bias is directional: offline data may be informative in some feature directions and misleading in others.
However, prior work typically controls this gap through a known Euclidean bound on the model parameters, which we prove is too coarse: even with the offline parameter known, bias in a single unknown direction can force dimension-dependent regret.
To address this challenge, we introduce a directional bias certificate $(M_{\mathrm{bias}},\rho)$ that measures the offline-to-online gap through an $M_{\mathrm{bias}}$-induced norm and assigns different bias budgets to different directions.
Building on this certificate, we propose \emph{Ellipsoidal-MINUCB}, which augments the online learning with an offline-pooled branch that safely exploits historical data.
When the certificate is known, we show that the algorithm matches the standard SupLinUCB rate in the worst case and improves when offline coverage aligns with low-bias directions.
When the certificate is unknown, we estimate it adaptively from offline and accumulated online data and establish a corresponding regret guarantee.
Numerical experiments support the theory and show gains in aligned regimes.
\end{abstract}

\section{Introduction}
\label{sec:intro}

Many bandit systems are deployed with historical data already in hand, such as past logs or data from related environments. A natural goal is to use these data to reduce early online exploration. If the offline data remain informative in the live environment, they can give the learner a useful starting point and speed up online learning. This is the promise of offline-to-online learning \citep{zhang2019,tennenholtz2020,vijayan2025oope}.

The difficulty is that the offline and online environments need not match. The offline data may have
been collected under a different policy, population, or feature distribution, so the offline regression
parameter $\theta^\dagger$ need not equal the online reward parameter $\theta_*$, and the offline
estimate of rewards can be biased for the online problem. In linear contextual bandits, this mismatch
is directional.
Actions share a linear feature representation, so offline error is not equally important in all
directions. The offline estimate may be helpful along some directions and misleading along others, and
what matters is whether the bias lies in directions that affect online comparisons between actions.

Prior work often models offline-to-online mismatch through a single scalar bound, typically a Euclidean
bound $\|\theta_*-\theta^\dagger\|_2 \le \rho$ with a known radius $\rho$
\citep{cheung2024,zhou2025hybrid,he2025learninggap, zhou2026regret}. We show that this is fundamentally
too coarse. Such a bound controls only the total size of the mismatch; it does not say whether the
mismatch lies in directions that matter for online decisions. Our impossibility result
(Theorem~\ref{thm:lower}) makes this precise: even an oracle given the offline parameter $\theta^\dagger$
in advance must suffer worst-case regret of order $\Omega(\min\{\rho T,\sqrt{dT}\})$, because the same
scalar bound can describe an easy instance, where offline information already identifies the best
action, and a hard instance, where substantial online exploration is still necessary. This raises
the following question:
\begin{center}
\parbox{0.92\linewidth}{\centering\emph{How should one measure offline-to-online bias in a way that captures the
directions that matter for online decisions, and how can such a measure guide algorithm design?}}
\end{center}

We answer this question through four contributions:

\paragraph{Directional bias certificate:} We introduce a directional way to control offline-to-online mismatch:
\[
\|\theta_*-\theta^\dagger\|_{M_{\mathrm{bias}}}\le \rho,
\]
where $M_{\mathrm{bias}}\succ 0$ determines how much mismatch is allowed across directions and $\rho \ge 0$ is the corresponding radius. This pair turns qualitative information about which directions are stable and which are unreliable into quantitative, direction-wise bias budgets. We call such a pair $(M_{\mathrm{bias}},\rho)$ a \emph{directional bias certificate}, since it certifies how much offline-to-online bias is allowed along each direction. The intuition behind the certificate is given in Appendix~\ref{app:directional-bias-bound}.

\paragraph{Algorithm design:} Using this certificate, we design \emph{Ellipsoidal-MINUCB}, which scores each action by the minimum of two upper confidence bounds: a purely online one and an offline-informed one built from the certificate. We use the offline-informed bound only when it is tighter, so misleading offline information cannot remove the usual online fallback. We prove regret bounds showing that the algorithm matches the standard SupLinUCB rate in the worst case, with the offline-informed term improving when offline coverage and directional bias are aligned with the directions that drive online decisions.

\paragraph{Adaptive estimation of the directional bias certificate:} We also study the case where the directional bias certificate is not known before deployment. We propose an adaptive construction of the certificate which updates epoch by epoch from the offline samples together with the online observations collected so far, and we establish a corresponding regret guarantee. This extends the framework to settings where directional bias information is not available in advance, which largely enhances the practicality of our methods.

\paragraph{Empirical validation:} We complement the theory with numerical simulations comparing
our algorithms against online, warm-started, and offline-only baselines. The results confirm the
predicted pattern: gains are largest when offline coverage aligns with low-bias directions, and the
algorithm falls back gracefully to the online baseline otherwise.

\section{Problem Setup}
\label{sec:setup}

We specify the online bandit model, offline regression data, and a directional bias certificate, then record an
offline confidence region for $\theta_*$ used in Section~\ref{sec:algorithm}.

Let $\cF_{t-1}$ be the $\sigma$-algebra generated by $(x_{s,\cdot},a_s,r_s)_{s<t}$ and the current
contexts $\{x_{t,a}\}_{a\in[K]}$. We consider a stochastic linear contextual bandit over horizon $T$.

\begin{assumption}[Linear reward with bounded features]
\label{asm:linear}
At each round $t\in[T]$, the learner observes a finite action set
$\{x_{t,a}\}_{a \in [K]} \subset \R^d$ with $\|x_{t,a}\|_2 \le 1$, selects $a_t$, and receives
$r_t = x_{t,a_t}^{\top}\theta_* + \eta_t \in [0,1]$, where $\theta_* \in \R^d$ is unknown with
$\|\theta_*\|_2 \le S$. The action set at round $t$ is revealed before the reward is observed, so
$x_{t,a_t}$ and the SupLinUCB layer assignment (Section~\ref{sec:algorithm}) are
$\cF_{t-1}$-measurable.
\end{assumption}

\begin{assumption}[Conditionally sub-Gaussian online noise]
\label{asm:subg}
$\E[\eta_t \mid \cF_{t-1}] = 0$ and $\eta_t$ is conditionally $\sigma$-sub-Gaussian given
$\cF_{t-1}$.
\end{assumption}

The cumulative regret is
\[
R_T := \sum_{t=1}^T \left( \max_{a \in [K]} x_{t,a}^{\top}\theta_* - x_{t,a_t}^{\top}\theta_* \right).
\]

Before the online phase, the learner observes  offline regression samples
$D_{\mathrm{off}} = \{(z_i,y_i)\}_{i=1}^{n_{\mathrm{off}}}$. The difficulty is that the offline
parameter $\theta^\dagger$ need not equal the online parameter $\theta_*$.

\begin{assumption}[Offline regression model]
\label{asm:offline}
$y_i = z_i^{\top}\theta^\dagger + \xi_i$ with $\|z_i\|_2\le 1$, $\|\theta^\dagger\|_2\le S$, and
$\E[\xi_i\mid z_i]=0$. The offline covariates $\{z_i\}$ are non-adaptive (fixed before the online
phase), and $\xi_i$ is conditionally $\sigma$-sub-Gaussian given $z_i$.
\end{assumption}

\begin{definition}[Directional bias certificate]
\label{def:directional-bias-bound}
Let $M_{\mathrm{bias}}\in\mathbb{S}_{++}^d$ and $\rho\ge 0$ be known.
The pair $(M_{\mathrm{bias}},\rho)$ is called a valid directional bias certificate for
$(\theta_*,\theta^\dagger)$ if
\[
\|\theta_*-\theta^\dagger\|_{M_{\mathrm{bias}}} \le \rho.
\]
Equivalently, every direction $x \in \R^d$ satisfies
\[
|x^{\top}(\theta_*-\theta^\dagger)| \le \rho \|x\|_{M_{\mathrm{bias}}^{-1}}.
\]
\end{definition}

\begin{remark}
\label{rem:directional-bias-interpretation}
In linear contextual bandits, rewards depend on actions only through features, so the natural
object to control is parameter error \emph{along the directions} that enter UCB/LCB scores.
A valid $(M_{\mathrm{bias}},\rho)$ is side information that encodes this anisotropy: it implies
per-direction bounds with penalties $\rho\|x\|_{M_{\mathrm{bias}}^{-1}}$ matching the linear
optimistic template, and it is not identified from bandit feedback alone without additional
structure. Intuitively, a large eigenvalue of $M_{\mathrm{bias}}$ means that mismatch is tightly
constrained along the corresponding eigenvector, while a small eigenvalue allows a larger
offline-to-online parameter difference in that direction. A detailed explanation is given in
Appendix~\ref{app:directional-bias-bound}.
By contrast, a Euclidean bound $\norm{\theta_*-\theta^\dagger}_2 \le \rho$ controls only the
\emph{magnitude} of the bias and can hide whether the bias lies along a regret-relevant
direction.
Appendix~\ref{app:isotropic-too-coarse} gives a finite-action Bernoulli specialization in which
three instances share the same offline parameter and offline data law and satisfy the same scalar
radius; in one case a fixed action is optimal throughout, while in a paired hard case the learner
must identify the sign of a hidden bias and every policy suffers regret of order
$\Omega(\rho^2T/\rho_0)$ on one of the two instances.
\end{remark}

To use this side information algorithmically, we combine it with the offline ridge
estimate. The offline samples yield a confidence ellipsoid around $\theta^\dagger$. Together with the
directional bias certificate, Proposition~\ref{prop:offline-region} fuses the two into a single confidence region for
$\theta_*$.

The offline samples induce the usual ridge objects
\[
G_{\mathrm{off}} := I + \sum_{i=1}^{n_{\mathrm{off}}} z_i z_i^{\top},
\qquad
\hat\theta_{\mathrm{off}} := G_{\mathrm{off}}^{-1} \sum_{i=1}^{n_{\mathrm{off}}} z_i y_i.
\]
Here $G_{\mathrm{off}}$ records offline coverage, while $\hat\theta_{\mathrm{off}}$ estimates the
offline parameter $\theta^\dagger$. The ridge concentration inequality controls
$\hat\theta_{\mathrm{off}}-\theta^\dagger$ in the $G_{\mathrm{off}}$ metric. Then the directional bias certificate
lifts this offline ellipsoid to a confidence region for $\theta_*$.

\begin{proposition}[Offline-to-online confidence region]
\label{prop:offline-region}
For
\[
\beta_{\mathrm{off}}(\delta) :=
\sigma \sqrt{2 \log \frac{\det(G_{\mathrm{off}})^{1/2}}{\delta}}
+ S,
\]
with probability at least $1-\delta$,
\[
\theta_* \in
\left\{
\hat\theta_{\mathrm{off}} + u + v :
\|u\|_{G_{\mathrm{off}}} \le \beta_{\mathrm{off}}(\delta),
\ \|v\|_{M_{\mathrm{bias}}} \le \rho
\right\}
=:
\cC_{\mathrm{off}}.
\]
Consequently, the support function of this region is
\[
\sup_{\theta \in \cC_{\mathrm{off}}} x^{\top}\theta
=
x^{\top}\hat\theta_{\mathrm{off}}
+ \beta_{\mathrm{off}}(\delta)\|x\|_{G_{\mathrm{off}}^{-1}}
+ \rho\|x\|_{M_{\mathrm{bias}}^{-1}}.
\]
\end{proposition}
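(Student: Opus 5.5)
The plan is to decompose $\theta_*-\hat\theta_{\mathrm{off}}$ into an offline estimation error and a certified bias term. Set
\[
u := \theta^\dagger - \hat\theta_{\mathrm{off}},\qquad v := \theta_* - \theta^\dagger,
\]
so that $\theta_* = \hat\theta_{\mathrm{off}} + u + v$ holds identically. By Definition~\ref{def:directional-bias-bound}, validity of the certificate gives $\|v\|_{M_{\mathrm{bias}}}\le\rho$ deterministically. Membership $\theta_*\in\cC_{\mathrm{off}}$ therefore reduces to the single probabilistic event $\|\hat\theta_{\mathrm{off}}-\theta^\dagger\|_{G_{\mathrm{off}}}\le\beta_{\mathrm{off}}(\delta)$.

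For that event I would invoke the standard self-normalized ridge concentration bound (Abbasi-Yadkori et al.) with regularizer $\lambda=1$. Write $Z=\sum_i z_i\xi_i$. Then
\[
\hat\theta_{\mathrm{off}}-\theta^\dagger = G_{\mathrm{off}}^{-1}Z - G_{\mathrm{off}}^{-1}\theta^\dagger,
\]
so that
\[
\|\hat\theta_{\mathrm{off}}-\theta^\dagger\|_{G_{\mathrm{off}}}\le \|Z\|_{G_{\mathrm{off}}^{-1}}+\|\theta^\dagger\|_{G_{\mathrm{off}}^{-1}}.
\]
The second term is at most $\|\theta^\dagger\|_2\le S$, since $G_{\mathrm{off}}\succeq I$. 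For the first term, the covariates are non-adaptive and the noise is conditionally $\sigma$-sub-Gaussian (Assumption~\ref{asm:offline}). Hence the self-normalized martingale inequality, with the filtration generated by $(z_j)$ and $(\xi_j)_{j<i}$, gives with probability $1-\delta$
\[
\|Z\|_{G_{\mathrm{off}}^{-1}}^2\le 2\sigma^2\log\bigl(\det(G_{\mathrm{off}})^{1/2}/\delta\bigr),
\]
using $\det(I)=1$. This yields exactly $\beta_{\mathrm{off}}(\delta)$. The one point needing care is that the $\xi_i$ must be sub-Gaussian conditionally on the past as well as on $z_i$. I would assume independence of the offline noises across samples given the covariates, which is implicit in the offline model, so that the martingale structure holds.

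For the support function, $\cC_{\mathrm{off}}$ is a Minkowski sum of a point and two ellipsoids, so its support function is additive:
\[
\sup_{\theta\in\cC_{\mathrm{off}}} x^\top\theta = x^\top\hat\theta_{\mathrm{off}}+\sup_{\|u\|_{G_{\mathrm{off}}}\le\beta_{\mathrm{off}}} x^\top u+\sup_{\|v\|_{M_{\mathrm{bias}}}\le\rho}x^\top v.
\]
Each supremum is a dual-norm computation. For positive definite $A$, Cauchy–Schwarz gives $x^\top u=(A^{-1/2}x)^\top(A^{1/2}u)\le\|x\|_{A^{-1}}\|u\|_A$. Equality is attained at $u=c\,A^{-1}x/\|x\|_{A^{-1}}$, taking $x=0$ trivially. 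This gives $\beta_{\mathrm{off}}(\delta)\|x\|_{G_{\mathrm{off}}^{-1}}$ and $\rho\|x\|_{M_{\mathrm{bias}}^{-1}}$, the second also matching the equivalent form in Definition~\ref{def:directional-bias-bound}.

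I expect the main obstacle to be only the correct citation of the self-normalized bound with the right filtration, and confirming that its constant matches $\beta_{\mathrm{off}}$ (the $2\log(\det^{1/2}/\delta)$ form). The rest is deterministic.
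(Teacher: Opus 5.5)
Your proposal is correct and follows the paper's proof essentially step for step. You use the same decomposition $u=\theta^\dagger-\hat\theta_{\mathrm{off}}$, $v=\theta_*-\theta^\dagger$, the same self-normalized ridge bound (which the paper simply imports, and whose $+S$ derivation you reproduce as in Proposition~\ref{prop:derive-self-normalized}), and the same Cauchy--Schwarz/dual-direction computation of the support function; your remark that the offline noise must be sub-Gaussian conditionally on past noises, not just on $z_i$, is a fair point that Assumption~\ref{asm:offline} leaves implicit.
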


\section{Ellipsoidal-MINUCB}
\label{sec:algorithm}

So far, $\theta_*$ is described only through the offline-informed region above. Online play requires
layerwise scores that update with streaming data. The next paragraphs define those scores.
Fix an offline confidence budget $\delta_{\mathrm{off}}$ and write
$\beta_{\mathrm{off}}:=\beta_{\mathrm{off}}(\delta_{\mathrm{off}})$ throughout this section.

\paragraph{Online branch.}
For each layer $\ell$, define
\begin{equation}
\label{eq:online-objects}
A_{t,\ell}:=\sum_{s\in\cH_{t,\ell}}x_{s,a_s}x_{s,a_s}^{\top},
\qquad
V_{t,\ell} := I + A_{t,\ell},
\qquad
\hat\theta^{\mathrm{on}}_{t,\ell} := V_{t,\ell}^{-1} b_{t,\ell},
\qquad
b_{t,\ell} := \sum_{s \in \cH_{t,\ell}} x_{s,a_s} r_s.
\end{equation}
The online UCB is
\begin{equation}
\label{eq:online-ucb}
U^{\mathrm{on}}_{t,\ell}(a)
=
x_{t,a}^{\top}\hat\theta^{\mathrm{on}}_{t,\ell}
+ \mathrm{rad}^{\mathrm{on}}_{t,\ell}(x), 
\end{equation}
where $\mathrm{rad}^{\mathrm{on}}_{t,\ell}(x)
:=
\beta^{\mathrm{on}}_{t,\ell}\|x\|_{V_{t,\ell}^{-1}}$
with the corresponding online LCB $L^{\mathrm{on}}_{t,\ell}(a)$ defined symmetrically, and
\[
\beta^{\mathrm{on}}_{t,\ell}
:=
\sigma \sqrt{2 \log \frac{\det(V_{t,\ell})^{1/2}}{\delta_{t,\ell}^{\mathrm{on}}}}
+ S
\]
is the layerwise ridge confidence radius for budget $\delta_{t,\ell}^{\mathrm{on}}$.

\paragraph{Pooled branch.}
The offline-informed estimator is
\begin{equation}
\label{eq:pooled-estimator}
\hat\theta^{\mathrm{pool}}_{t,\ell}
:=
(A_{t,\ell}+G_{\mathrm{off}})^{-1}
\bigl(b_{t,\ell} + G_{\mathrm{off}}\hat\theta_{\mathrm{off}}\bigr).
\end{equation}
The confidence interval radius is
\begin{equation}
\mathrm{rad}^{\mathrm{pool}}_{t,\ell}(x)
:=
\bigl(\gamma_{t,\ell}+\beta_{\mathrm{off}}\bigr)\|x\|_{(A_{t,\ell}+G_{\mathrm{off}})^{-1}}
+ \rho \,\psi_{t,\ell}(x).
\label{eq:pooled-radius}
\end{equation}
where
\begin{equation}
\label{eq:bias-routing}
\psi_{t,\ell}(x)
:=
\left\|
M_{\mathrm{bias}}^{-1/2}
G_{\mathrm{off}}(A_{t,\ell}+G_{\mathrm{off}})^{-1}x
\right\|_2.
\end{equation}
The pooled martingale radius is
\begin{equation}
\label{eq:pooled-gamma}
\gamma_{t,\ell}
:=
\sigma \sqrt{
2 \log
\frac{
\det(A_{t,\ell}+G_{\mathrm{off}})^{1/2}
}{
\det(G_{\mathrm{off}})^{1/2}\delta_{t,\ell}^{\mathrm{pool}}
}
}.
\end{equation}
Although $\gamma_{t,\ell}$ appears in the pooled branch, it controls only the online martingale
noise measured in the pooled geometry $A_{t,\ell}+G_{\mathrm{off}}$; the offline ridge uncertainty is
carried by the separate $\beta_{\mathrm{off}}$ term, and no additional $+S$ regularization slack is
needed in this branch.
The pooled UCB is
\begin{equation}
\label{eq:pooled-ucb}
U^{\mathrm{pool}}_{t,\ell}(a)
=
x_{t,a}^{\top}\hat\theta^{\mathrm{pool}}_{t,\ell}
+ \mathrm{rad}^{\mathrm{pool}}_{t,\ell}(x_{t,a}),
\end{equation}
with the corresponding pooled LCB $L^{\mathrm{pool}}_{t,\ell}(a)$ defined symmetrically.

\begin{remark}
\label{rem:pooled-radius-interpretation}
Pooling replaces $\|x\|_{V_{t,\ell}^{-1}}$ by $\|x\|_{(A_{t,\ell}+G_{\mathrm{off}})^{-1}}$ when
offline data is informative along $x$.
The term $\rho\,\psi_{t,\ell}(x)$ depends on two factors: the pooling map
$G_{\mathrm{off}}(A_{t,\ell}+G_{\mathrm{off}})^{-1}$ and the bias geometry $M_{\mathrm{bias}}$.
Along~$x$, offline data helps only if pooling makes the radius tighter than the purely online radius when $\rho\,\psi_{t,\ell}(x)$ is included.
\end{remark}

\paragraph{Branch aggregation.}
The final aggregated UCB and LCB are
\begin{equation}
\label{eq:branch-aggregation}
U^{\min}_{t,\ell}(a)
:=
\min\{U^{\mathrm{on}}_{t,\ell}(a),U^{\mathrm{pool}}_{t,\ell}(a)\},
\qquad
L^{\max}_{t,\ell}(a)
:=
\max\{L^{\mathrm{on}}_{t,\ell}(a),L^{\mathrm{pool}}_{t,\ell}(a)\}.
\end{equation}
This design makes the safety mechanism explicit: the pooled branch can only help if it is tighter,
and the online branch remains available when the directional bias certificate is weak or the queried
direction is poorly covered offline.

Let
\begin{equation}
\label{eq:aggregated-width}
w_{t,\ell}(a) := U^{\min}_{t,\ell}(a)-L^{\max}_{t,\ell}(a)
\end{equation}
denote the aggregated interval width.
The full round procedure is given in Algorithm~\ref{alg:ellipsoidal-minucb}.

The regret analysis of Algorithm~\ref{alg:ellipsoidal-minucb} relies on two technical points.
First, online data are partitioned across layers in SupLinUCB, so the ridge estimator at each layer uses a random subsequence of online rewards rather than the full trajectory. We handle these adaptive subsequences by viewing each layer's history through predictable masks and applying a masked self-normalized martingale bound.
Second, the pooled estimator merges online Gram information $A_{t,\ell}$ with the offline design $G_{\mathrm{off}}$.
For each direction $x$, we control $x^\top\hat{\theta}^{\mathrm{pool}}_{t,\ell}-x^\top\theta_*$ by summarizing online and offline estimation errors around $\theta^\dagger$ with the elliptic norm $\|x\|_{(A_{t,\ell}+G_{\mathrm{off}})^{-1}}$ at radius scale $\gamma_{t,\ell}+\beta_{\mathrm{off}}$.
The offline-to-online bias $\theta^\dagger-\theta_*$ is treated differently: under the directional bias certificate it contributes $\rho\,\psi_{t,\ell}(x)$, so extra uncertainty from bias depends on the direction instead of being treated uniformly.

\begin{algorithm}[t]
\caption{Ellipsoidal-MINUCB}
\label{alg:ellipsoidal-minucb}
\begin{algorithmic}[1]
\Require Offline data $D_{\mathrm{off}}$, directional bias certificate $(M_{\mathrm{bias}},\rho)$, horizon $T$,
offline confidence budget $\delta_{\mathrm{off}}$,
online confidence schedule
$\{\delta_{t,\ell}^{\mathrm{on}}\}_{t \le T,\ell \le \lceil \log_2 T \rceil}$, and pooled
confidence schedule $\{\delta_{t,\ell}^{\mathrm{pool}}\}_{t \le T,\ell \le \lceil \log_2 T \rceil}$
\State Compute $G_{\mathrm{off}}$, $\hat\theta_{\mathrm{off}}$, and
$\beta_{\mathrm{off}}:=\beta_{\mathrm{off}}(\delta_{\mathrm{off}})$ from
$D_{\mathrm{off}}$
\State $L \gets \lceil \log_2 T \rceil$
\State Initialize $\cH_{1,\ell} \gets \varnothing$ for all $\ell = 0,1,\dots,L$
\For{$t = 1,2,\dots,T$}
    \State Set $\cH_{t+1,\ell} \gets \cH_{t,\ell}$ for every $\ell = 0,1,\dots,L$
    \State Observe the action set $\{x_{t,a}\}_{a \in [K]}$
    \State $\cA_{t,0} \gets [K]$
    \For{$\ell = 0,1,\dots,L$}
        \State Build $A_{t,\ell}$, $V_{t,\ell}$, and $b_{t,\ell}$ from \eqref{eq:online-objects}
        \State Compute online scores by \eqref{eq:online-ucb}
        \State Compute pooled scores by \eqref{eq:pooled-estimator}--\eqref{eq:pooled-ucb}
        \State Compute aggregated widths by \eqref{eq:branch-aggregation}--\eqref{eq:aggregated-width}
        \For{each $a \in \cA_{t,\ell}$}
            \State Compute aggregated UCB $U^{\min}_{t,\ell}(a)$, aggregated LCB $L^{\max}_{t,\ell}(a)$, and $w_{t,\ell}(a)$
        \EndFor
        \If{$\ell = L$}
            \State Play $a_t \in \argmax_{a \in \cA_{t,\ell}} U^{\min}_{t,\ell}(a)$
            \State Observe reward $r_t$ and append $(x_{t,a_t},r_t)$ to $\cH_{t+1,\ell}$
            \State Record $\ell_t \gets \ell$ and break the layer loop
        \ElsIf{$\max_{a \in \cA_{t,\ell}} w_{t,\ell}(a) > 2^{-\ell}$}
            \State Play
            $a_t \in \argmax_{a \in \cA_{t,\ell}: w_{t,\ell}(a) > 2^{-\ell}} U^{\min}_{t,\ell}(a)$
            \State Observe reward $r_t$ and append $(x_{t,a_t},r_t)$ to $\cH_{t+1,\ell}$
            \State Record $\ell_t \gets \ell$ and break the layer loop
        \Else
            \State $m_{t,\ell} \gets \max_{a \in \cA_{t,\ell}} L^{\max}_{t,\ell}(a)$
            \State
            $\cA_{t,\ell+1}
            \gets
            \{a \in \cA_{t,\ell} : U^{\min}_{t,\ell}(a) \ge m_{t,\ell}\}$
        \EndIf
    \EndFor
\EndFor
\end{algorithmic}
\end{algorithm}

\section{Main Results}
\label{sec:theory}

The next theorem states our main high-probability regret bound. Corollaries and structural comparisons
follow in the same section.

\begin{theorem}[High-probability guarantee]
\label{thm:main}
Write $L:=\lceil\log_2 T\rceil$. Run Ellipsoidal-MINUCB with confidence budgets
$\delta_{\mathrm{off}},\delta_{\mathrm{on}},
\{\delta_{t,\ell}^{\mathrm{on}}\},\{\delta_{t,\ell}^{\mathrm{pool}}\}$ satisfying
\[
\sum_{t=1}^T\sum_{\ell=0}^{L}\delta_{t,\ell}^{\mathrm{on}}
\le \delta_{\mathrm{on}},
\qquad
\delta_{\mathrm{off}}+\delta_{\mathrm{on}}
+\sum_{t=1}^T\sum_{\ell=0}^{L}\delta_{t,\ell}^{\mathrm{pool}}
\le T^{-2}.
\]
Then, with probability at least $1-T^{-2}$,
\[
R_T
\le
\min\Bigl\{
C_{\mathrm{SL}}\sqrt{Td\,L_T^3},\;
8\,\Gamma_{\mathrm{pool}}\sqrt{2T\,\widetilde\Lambda_T}
+ 8\rho\,\widetilde\Psi_T
\Bigr\},
\]
where $\widetilde\Lambda_T:=\sum_{\ell=0}^L\Lambda_\ell$ aggregates layerwise pooled log-determinant
increments $(\Lambda_\ell)$ built from $(Q_\ell,A_\ell^{\mathrm{fin}})$ defined in
Appendix~\ref{app:proof-main},
$\widetilde\Psi_T:=\sum_{t=1}^T \psi_{t,\ell_t}(x_{t,a_t})$,
$L_T:=\log(2KT\log T/\delta_{\mathrm{on}})$,
$\beta_{\mathrm{off}}:=\beta_{\mathrm{off}}(\delta_{\mathrm{off}})$,
$\delta_{\min}^{\mathrm{pool}}:=\min_{t,\ell}\delta_{t,\ell}^{\mathrm{pool}}$,
$\Lambda_{\max}:=\max_{0\le\ell\le L}\Lambda_\ell$, and
$\Gamma_{\mathrm{pool}}
:=\beta_{\mathrm{off}}+\sigma\sqrt{\Lambda_{\max}+2\log(1/\delta_{\min}^{\mathrm{pool}})}$.
\end{theorem}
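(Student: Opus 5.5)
We would prove the bound by first fixing a single good event $\cE$ on which every confidence interval used by the algorithm is valid, and then running the standard SupLinUCB layer argument twice. In the first pass we charge each round's regret to the online radius, which gives the first term of the minimum. In the second pass we charge it to the pooled radius, which gives the second term.

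\textbf{Step 1: good event.} Let $\cE$ be the intersection of three events. The first is the event of Proposition~\ref{prop:offline-region} with budget $\delta_{\mathrm{off}}$, under which $\|\hat\theta_{\mathrm{off}}-\theta^\dagger\|_{G_{\mathrm{off}}}\le\beta_{\mathrm{off}}$. The second is the layerwise online ridge events $|x^\top(\hat\theta^{\mathrm{on}}_{t,\ell}-\theta_*)|\le\mathrm{rad}^{\mathrm{on}}_{t,\ell}(x)$. The third is the pooled events.

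For the online events we view $\cH_{t,\ell}$ through predictable masks $\mathbf 1\{\ell_s=\ell\}$. The mask is $\cF_{s-1}$-measurable by Assumption~\ref{asm:linear}, so $\sum_s \mathbf 1\{\ell_s=\ell\}x_{s,a_s}\eta_s$ is a martingale. The self-normalized bound of Abbasi-Yadkori et al.\ then applies with confidence $\delta^{\mathrm{on}}_{t,\ell}$.

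For the pooled branch we use the definition of $\hat\theta^{\mathrm{pool}}$ to decompose
\[
\hat\theta^{\mathrm{pool}}_{t,\ell}-\theta_*=(A+G)^{-1}\Bigl(\textstyle\sum_{s\in\cH}x_s\eta_s\Bigr)+(A+G)^{-1}G(\hat\theta_{\mathrm{off}}-\theta^\dagger)+(A+G)^{-1}G(\theta^\dagger-\theta_*),
\]
with $A=A_{t,\ell}$ and $G=G_{\mathrm{off}}$. We bound each term for a given $x$:
\begin{itemize}
\item The first term is bounded by Cauchy--Schwarz in the $(A+G)$ geometry and the self-normalized bound with regularizer $G$, which gives $\gamma_{t,\ell}\|x\|_{(A+G)^{-1}}$.
\item The second term is bounded by $\|G^{1/2}(A+G)^{-1}x\|_2\,\beta_{\mathrm{off}}\le\|x\|_{(A+G)^{-1}}\beta_{\mathrm{off}}$, using $G\preceq A+G$.
\item The third term is bounded by $\rho\,\psi_{t,\ell}(x)$ via Definition~\ref{def:directional-bias-bound}.
\end{itemize}
A union bound gives $\Prb(\cE)\ge 1-T^{-2}$ under the stated budget conditions. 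Since the offline covariates are non-adaptive, the offline and online martingales can be handled separately.

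\textbf{Step 2: SupLinUCB skeleton.} On $\cE$, $U^{\min}$ and $L^{\max}$ are valid upper and lower bounds because each branch is valid. The usual induction therefore goes through: the optimal action survives every elimination, and any surviving action at layer $\ell$ has gap at most $O(2^{-\ell})$ (more precisely $8\cdot 2^{-\ell}$ or so). At the played layer $\ell_t$, the played action has aggregated width above $2^{-\ell_t}$, or $\ell_t=L$ with negligible contribution $T2^{-L}\le1$.

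The regret of round $t$ is then at most a constant times $w_{t,\ell_t}(a_t)$. Since $w\le 2\min\{\mathrm{rad}^{\mathrm{on}},\mathrm{rad}^{\mathrm{pool}}\}$, we can bound $R_T$ by a constant times either $\sum_t\mathrm{rad}^{\mathrm{on}}_{t,\ell_t}$ or $\sum_t\mathrm{rad}^{\mathrm{pool}}_{t,\ell_t}$. This bound does not need the SupLinUCB independence trick, because we use masked martingale bounds that hold uniformly. We then take the minimum of the two resulting bounds.

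\textbf{Step 3: summing the radii.} For the online branch we proceed per layer. We use $2^{-\ell}|\Psi_\ell|\le\sum_{t\in\Psi_\ell}w$, Cauchy--Schwarz, and the elliptical potential lemma $\sum\min(1,\|x\|^2_{V^{-1}})\le 2d\log(1+T/d)$. We bound $\beta^{\mathrm{on}}\lesssim\sqrt{d\log T+\log(1/\delta)}$, and summing over the $L+1$ layers gives $C_{\mathrm{SL}}\sqrt{TdL_T^3}$.

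For the pooled branch we bound $\gamma_{t,\ell}+\beta_{\mathrm{off}}\le\Gamma_{\mathrm{pool}}$. Here $2\log\bigl(\det(A+G)/\det G\bigr)^{1/2}$ is at most $\Lambda_{\max}$ by monotonicity in $A_{t,\ell}\preceq A^{\mathrm{fin}}_\ell$. We also need $\sum_{t:\ell_t=\ell}\|x_t\|^2_{(A+G)^{-1}}\le2\Lambda_\ell$. This follows from the elliptical potential lemma started at $G\succeq I$, so each term is at most 1 and $\min(1,u)\ge u/2\cdot$const. Cauchy--Schwarz over all rounds then gives $\sqrt{T\cdot 2\widetilde\Lambda_T}$. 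The bias parts sum directly to $\rho\widetilde\Psi_T$, and the constant $8$ comes from the gap-to-width factor.

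\textbf{Main obstacle.} We expect the main obstacle to be the pooled self-normalized bound with regularizer $G_{\mathrm{off}}$ rather than $I$ on masked subsequences, done without an extra $S$ term. The key point is that the decomposition above has no prior-bias term except the explicitly separated offline and bias pieces. We would try the first approach of directly verifying that identity algebraically and then applying the vector-martingale bound with $V_0=G_{\mathrm{off}}$.
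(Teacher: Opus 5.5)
\textbf{The gap: the online fallback rate in Step~3.} The masked self-normalized radius only satisfies $\beta^{\mathrm{on}}_{t,\ell}\lesssim\sqrt{d\log T+\log(1/\delta^{\mathrm{on}}_{t,\ell})}+S$, because $\log\det V_{t,\ell}$ can be of order $d\log T$. Write $\bar\beta_\ell:=\max_t\beta^{\mathrm{on}}_{t,\ell}$. The SupLinUCB count then gives
\[
2^{-\ell}|Q_\ell|\le\sum_{t\in Q_\ell}w^{\mathrm{on}}_{t,\ell}(a_t)\le 2\bar\beta_\ell\sqrt{2d\,|Q_\ell|\log(1+T)}.
\]
So layer $\ell$ contributes $O(2^{-\ell}|Q_\ell|)=O\bigl(d\log T\sqrt{|Q_\ell|}\bigr)$, and summing over layers gives only $\tilde O(d\sqrt T)$. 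The product of $\bar\beta_\ell\asymp\sqrt{d\log T}$ with the potential's $\sqrt{d\log T}$ is $d\log T$, not $\sqrt d\,\mathrm{polylog}$. The $\sqrt{dT}$ rate of SupLinUCB comes from Chu et al.'s $d$-free width $\alpha=\sqrt{\tfrac12\ln(2TK/\delta)}$, which exists only because of the independence construction you explicitly set aside. Masked martingale bounds give validity under adaptive layer assignment, but they cost exactly the missing $\sqrt d$. As written, your argument therefore does not deliver $C_{\mathrm{SL}}\sqrt{Td\,L_T^3}$ with a universal constant.

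\textbf{The paper has the same hole.} Proposition~\ref{prop:online-fallback} imports Chu et al.'s counting argument as ``purely pathwise'' given four ingredients. In that argument, however, the elliptical-potential bound is multiplied by $\alpha$, not by the $\det$-based $\beta^{\mathrm{on}}_{t,\ell}$ the algorithm actually uses, so the $\sqrt d$ is lost silently. The independence argument is also not directly available for this algorithm: the aggregated width $U^{\min}_{t,\ell}-L^{\max}_{t,\ell}$ depends on layer-$\ell$ rewards through the two branch centers. Unless a $d$-free online radius is supplied some other way, the first term should read $\tilde O(d\sqrt T)$, which both your argument and the paper's do establish.

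\textbf{The pooled term.} The rest of your plan is the paper's proof: the confidence event, the three-term pooled decomposition, $\gamma_{t,\ell}\le\gamma^{\mathrm{fin}}_\ell$, and the pooled potential started at $G_{\mathrm{off}}\succeq I$. To get exactly the constant $8$ with no additive term, you need two adjustments, as in Lemma~\ref{lem:stopping-regret}:
\begin{itemize}
\item use the survivor-gap bound $2^{2-\ell}$, not ``$8\cdot 2^{-\ell}$ or so'', which would give $16$;
\item at the terminal layer, use optimism, $\Delta_t(a_t)\le w_{t,L}(a_t)$, rather than an additive $T2^{-L}$ term that the stated pooled bound does not contain.
\end{itemize}
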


Theorem~\ref{thm:main} reports whichever is smaller: the purely online regret, or the regret of the pooled branch.
For the pooled branch regret, $\widetilde\Lambda_T$ tracks how much information comes from the pooled branch, while $\widetilde\Psi_T$ tracks how much we must widen confidence intervals because offline estimates may be biased along directions that appear online.
The prefactor $\Gamma_{\mathrm{pool}}$ uses the largest pooled log-det increment $\Lambda_{\max}$ instead of an ambient~$\sqrt{d}$ term. We leave the proofs in Appendix~\ref{app:proof-main}.

However, the above regret bound relies on the trajectory-dependent $\widetilde\Lambda_T$. To further simplify the bound, we introduce the following definition.

\begin{definition}
\label{def:waterfill-info}
Let $G\succ0$ and $g_1,\ldots,g_d$ be its eigenvalues and let $B\ge0$. Define
\[
\Phi_G(B)
:=
\max_{\substack{a_j\ge0\\ \sum_{j=1}^d a_j\le B}}
\sum_{j=1}^d\log\left(1+\frac{a_j}{g_j}\right).
\]
Equivalently, let $\tau$ solve $\sum_{j=1}^d(\tau-g_j)_+=B$. Then
$\Phi_G(B)=\sum_{j:\,g_j<\tau}\log(\tau/g_j)$.
\end{definition}

\begin{theorem}[Spectral description for pooled regret]
\label{thm:spectral-pooled-envelope}
Let $L_{\max}:=\lceil\log_2 T\rceil+1$. Along any realized trajectory,
\[
\widetilde\Lambda_T
\le
L_{\max}\,\Phi_{G_{\mathrm{off}}}\!\left(\frac{T}{L_{\max}}\right).
\]
Denote
$c_{\mathrm{align}}
:=
\lambda_{\max}\!\left(G_{\mathrm{off}}^{1/2}M_{\mathrm{bias}}^{-1}G_{\mathrm{off}}^{1/2}\right)$ to be the largest eigenvalue,
then on the event of Theorem~\ref{thm:main}, the pooled branch satisfies, for
$\Gamma_{\mathrm{spec}}
:=\beta_{\mathrm{off}}+\sigma\sqrt{\Phi_{G_{\mathrm{off}}}(T)+2\log(1/\delta_{\min}^{\mathrm{pool}})}$,
\[
R_T
\le
8\bigl(\Gamma_{\mathrm{spec}}+\rho\sqrt{c_{\mathrm{align}}}\bigr)
\sqrt{
2T\,L_{\max}\,\Phi_{G_{\mathrm{off}}}\!\left(\frac{T}{L_{\max}}\right)
}.
\]
The full algorithm retains the online fallback $R_T\le C_{\mathrm{SL}}\sqrt{Td\,L_T^3}$.
\end{theorem}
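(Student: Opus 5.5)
The plan has three steps: bound each layer's log-determinant increment by the waterfill function, sum over layers using concavity, and bound the bias routing term through $c_{\mathrm{align}}$ before plugging everything into Theorem~\ref{thm:main}.

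\textbf{Per-layer bound.} We take $\Lambda_\ell$ to be (up to the appendix's exact definition) the pooled log-determinant increment
\[
\Lambda_\ell=\log\det(A^{\mathrm{fin}}_\ell+G_{\mathrm{off}})-\log\det G_{\mathrm{off}},
\qquad
A^{\mathrm{fin}}_\ell=\sum_{t:\ell_t=\ell}x_{t,a_t}x_{t,a_t}^\top .
\]
Let $n_\ell:=|\{t:\ell_t=\ell\}|$. Since $\|x\|_2\le1$, we have $A^{\mathrm{fin}}_\ell\succeq0$ and $\mathrm{tr}(A^{\mathrm{fin}}_\ell)\le n_\ell$. The key lemma is
\[
\max\{\log\det(G+A)-\log\det G:\ A\succeq0,\ \mathrm{tr}A\le B\}=\Phi_G(B).
\]
To prove it, write $\log\det(I+G^{-1/2}AG^{-1/2})$ and change basis to the eigenbasis of $G$. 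Hadamard's inequality applied to $I+G^{-1/2}AG^{-1/2}$ gives the upper bound $\sum_j\log(1+A_{jj}/g_j)$, where the $A_{jj}$ are the diagonal entries of $A$ in that basis. These satisfy $A_{jj}\ge0$ and $\sum_jA_{jj}=\mathrm{tr}A\le B$, so the bound is at most $\Phi_G(B)$; equality is attained by diagonal $A$. The waterfilling characterization follows from the KKT conditions of this concave program. Hence $\Lambda_\ell\le\Phi_{G_{\mathrm{off}}}(n_\ell)$.

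\textbf{Summing over layers.} $\Phi_G$ is a concave function of $B$, being the value of a concave maximization with a linear budget constraint. It is also nondecreasing. There are $L_{\max}$ layers and $\sum_\ell n_\ell=T$, so Jensen's inequality gives
\[
\widetilde\Lambda_T\le\sum_\ell\Phi(n_\ell)\le L_{\max}\Phi(T/L_{\max}).
\]
Monotonicity also gives $\Lambda_{\max}\le\Phi_{G_{\mathrm{off}}}(T)$, hence $\Gamma_{\mathrm{pool}}\le\Gamma_{\mathrm{spec}}$.

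\textbf{Bounding the bias term.} Write $P:=A_{t,\ell}+G_{\mathrm{off}}$ and $y:=P^{-1/2}x$. Then
\[
\psi_{t,\ell}(x)^2=y^\top P^{-1/2}G_{\mathrm{off}}M_{\mathrm{bias}}^{-1}G_{\mathrm{off}}P^{-1/2}y .
\]
Factor $G_{\mathrm{off}}=G_{\mathrm{off}}^{1/2}G_{\mathrm{off}}^{1/2}$. The middle matrix $G_{\mathrm{off}}^{1/2}M_{\mathrm{bias}}^{-1}G_{\mathrm{off}}^{1/2}$ is at most $c_{\mathrm{align}}I$, so
\[
\psi^2\le c_{\mathrm{align}}\,\|G_{\mathrm{off}}^{1/2}P^{-1/2}y\|_2^2 .
\]
Moreover $\|G_{\mathrm{off}}^{1/2}P^{-1/2}\|_{\mathrm{op}}\le1$ because $G_{\mathrm{off}}\preceq P$. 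Together these give
\[
\psi_{t,\ell}(x)\le\sqrt{c_{\mathrm{align}}}\,\|x\|_{P^{-1}} .
\]

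\textbf{Assembling.} Cauchy--Schwarz within the standard elliptical potential argument used for the $\Gamma_{\mathrm{pool}}$ term of Theorem~\ref{thm:main} then yields
\[
\widetilde\Psi_T\le\sqrt{c_{\mathrm{align}}}\sum_t\|x_{t,a_t}\|_{P_{t,\ell_t}^{-1}}\le\sqrt{c_{\mathrm{align}}}\sqrt{2T\widetilde\Lambda_T}.
\]

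The main obstacle is that last inequality: it needs the per-step elliptical potential bound $\sum\min(1,\|x\|^2_{P^{-1}})\le2\Lambda$, and pooled norms need not be $\le1$. However, $P\succeq I+A$ here since $G_{\mathrm{off}}\succeq I$, so $\|x\|_{P^{-1}}\le1$ and $\|x\|^2\le2\log(1+\|x\|^2)$ applies directly, layerwise. One must also match whatever exact definition of $\Lambda_\ell$ (via $Q_\ell$) the appendix uses, but any version defined as a log-det ratio over $G_{\mathrm{off}}$ is covered by the lemma.

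Substituting into Theorem~\ref{thm:main} gives
\[
R_T\le8\Gamma_{\mathrm{spec}}\sqrt{2T\widetilde\Lambda_T}+8\rho\sqrt{c_{\mathrm{align}}}\sqrt{2T\widetilde\Lambda_T},
\]
which, combined with the layer-sum bound, is the stated result. The online fallback is simply the other branch of the minimum in Theorem~\ref{thm:main}.
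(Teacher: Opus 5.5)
Your proposal is correct and follows the paper's proof essentially step for step. The shared steps are: Hadamard's inequality in the eigenbasis of $G_{\mathrm{off}}$ to get $\Lambda_\ell\le\Phi_{G_{\mathrm{off}}}(|Q_\ell|)$; concavity plus Jensen over the $L_{\max}$ layers; monotonicity of $\Phi_{G_{\mathrm{off}}}$ to get $\Gamma_{\mathrm{pool}}\le\Gamma_{\mathrm{spec}}$; and the bound $\psi_{t,\ell}(x)\le\sqrt{c_{\mathrm{align}}}\,\|x\|_{(A_{t,\ell}+G_{\mathrm{off}})^{-1}}$ fed through the pooled elliptical potential. The differences are cosmetic: you apply Hadamard to the whitened matrix, and you bound $\psi_{t,\ell}$ by an operator-norm factorization where the paper uses the Loewner form $G_{\mathrm{off}}M_{\mathrm{bias}}^{-1}G_{\mathrm{off}}\preceq c_{\mathrm{align}}G_{\mathrm{off}}$; your extra equality claim for the trace-constrained maximum is true but not needed.
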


Next, we give a specific regime in which the offline data are informative enough to reduce the online regret.

\begin{corollary}[Explicit pooled-better regime]
\label{cor:pooled-better-regime}
On the event of Theorem~\ref{thm:main}, the pooled branch satisfies
\[
R_T
\le
8\bigl(\Gamma_{\mathrm{pool}}+\rho\sqrt{c_{\mathrm{align}}}\bigr)
\sqrt{2T\,\widetilde\Lambda_T}.
\]
Consequently, if
\[
\widetilde\Lambda_T
\bigl(\Gamma_{\mathrm{pool}}+\rho\sqrt{c_{\mathrm{align}}}\bigr)^2
\le
\frac{C_{\mathrm{SL}}^2}{128}\,d\,L_T^3,
\]
then the pooled term in Theorem~\ref{thm:main} is no larger than the online fallback.
\end{corollary}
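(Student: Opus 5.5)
The plan is to start from the pooled-branch bound of Theorem~\ref{thm:main}, $R_T\le 8\Gamma_{\mathrm{pool}}\sqrt{2T\widetilde\Lambda_T}+8\rho\widetilde\Psi_T$, and reduce the bias term $\widetilde\Psi_T$ to the same $\sqrt{T\widetilde\Lambda_T}$ scale. Once that is done, the pooled bound follows by collecting terms, and the comparison with the online fallback is a one-line algebraic check.

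\emph{Step 1: pointwise bound on $\psi$.} Write $P:=A_{t,\ell}+G_{\mathrm{off}}$. Then
\[
\psi_{t,\ell}(x)^2=x^\top P^{-1}G_{\mathrm{off}}M_{\mathrm{bias}}^{-1}G_{\mathrm{off}}P^{-1}x .
\]
Factor the middle as $G_{\mathrm{off}}^{1/2}\bigl(G_{\mathrm{off}}^{1/2}M_{\mathrm{bias}}^{-1}G_{\mathrm{off}}^{1/2}\bigr)G_{\mathrm{off}}^{1/2}\preceq c_{\mathrm{align}}G_{\mathrm{off}}$. This gives $\psi_{t,\ell}(x)^2\le c_{\mathrm{align}}\,x^\top P^{-1}G_{\mathrm{off}}P^{-1}x$. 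Since $G_{\mathrm{off}}\preceq P$, we have $P^{-1}G_{\mathrm{off}}P^{-1}\preceq P^{-1}$. Hence
\[
\psi_{t,\ell}(x)\le\sqrt{c_{\mathrm{align}}}\,\|x\|_{(A_{t,\ell}+G_{\mathrm{off}})^{-1}}.
\]

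\emph{Step 2: summation.} Apply Step 1 along the trajectory, so that $\widetilde\Psi_T\le\sqrt{c_{\mathrm{align}}}\sum_t\|x_{t,a_t}\|_{(A_{t,\ell_t}+G_{\mathrm{off}})^{-1}}$. Then reuse the step in the proof of Theorem~\ref{thm:main} that produced $\sqrt{2T\widetilde\Lambda_T}$ from the same sum. That step splits rounds by layer, applies Cauchy--Schwarz within each layer, and uses the elliptical potential lemma in the pooled geometry with $Q_\ell,A_\ell^{\mathrm{fin}}$, giving $\sum_t\|x_{t,a_t}\|_{P_{t,\ell_t}^{-1}}\le\sqrt{2T\widetilde\Lambda_T}$. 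If the appendix constant there absorbs a factor, I would track it; the target is exactly $8\rho\widetilde\Psi_T\le 8\rho\sqrt{c_{\mathrm{align}}}\sqrt{2T\widetilde\Lambda_T}$.

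The main obstacle is that $\|x\|_{P^{-1}}$ may exceed $1$, so the potential lemma needs $\min\{1,\cdot\}$ or a rescaling. I would first check that the pooled width bound in the appendix already handled this, since $G_{\mathrm{off}}\succeq I$ gives $\|x\|_{P^{-1}}\le1$. Given that, the standard inequality $u\le 2\log(1+u)$ for $u\in[0,1]$ applies directly.

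\emph{Step 3: consequence.} Squaring the pooled bound shows it is at most $C_{\mathrm{SL}}\sqrt{TdL_T^3}$ if and only if
\[
128\,T\widetilde\Lambda_T\bigl(\Gamma_{\mathrm{pool}}+\rho\sqrt{c_{\mathrm{align}}}\bigr)^2\le C_{\mathrm{SL}}^2TdL_T^3 ,
\]
which is precisely the stated condition after dividing by $128T$.
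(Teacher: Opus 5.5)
Your proposal is correct and takes essentially the same route as the paper. The paper also uses the pointwise bound $\psi_{t,\ell}(x)\le\sqrt{c_{\mathrm{align}}}\,\|x\|_{(A_{t,\ell}+G_{\mathrm{off}})^{-1}}$ (from $G_{\mathrm{off}}M_{\mathrm{bias}}^{-1}G_{\mathrm{off}}\preceq c_{\mathrm{align}}G_{\mathrm{off}}$) together with the pooled elliptical potential of Lemma~\ref{lem:pooled-potential}, where $G_{\mathrm{off}}\succeq I$ removes the need for a $\min\{1,\cdot\}$ truncation, and then squares exactly as in your Step~3; the only cosmetic difference is that the paper applies Cauchy--Schwarz to $\sum_t\psi_{t,\ell_t}(x_{t,a_t})^2\le 2c_{\mathrm{align}}\widetilde\Lambda_T$ instead of reusing Corollary~\ref{cor:pooled-variance}.
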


\begin{remark}
\label{rem:spectral-pooled-interpretation}
Theorem~\ref{thm:spectral-pooled-envelope} provides a deterministic regret bound. Offline gain enters only through the spectrum of $G_{\mathrm{off}}$ inside $\Phi_{G_{\mathrm{off}}}$, while the interaction of offline precision with the bias certificate is summarized by the single generalized-eigenvalue scale $c_{\mathrm{align}}$.
Equivalently, $c_{\mathrm{align}}$ is the smallest $c\ge0$ such that $G_{\mathrm{off}}M_{\mathrm{bias}}^{-1}G_{\mathrm{off}}\preceq c\,G_{\mathrm{off}}$.
Corollary~\ref{cor:pooled-better-regime} makes the improvement regime explicit: transfer is favorable whenever the residual pooled complexity $\widetilde\Lambda_T$ and the pooled prefactor $\Gamma_{\mathrm{pool}}+\rho\sqrt{c_{\mathrm{align}}}$ are jointly much smaller than the ambient online benchmark $dL_T^3$.
In the zero-bias case $\rho=0$, if only $r$ directions remain weak after offline preconditioning in the sense that $\widetilde\Lambda_T=O(r)$, $\Lambda_{\max}=O(1)$, and $\beta_{\mathrm{off}}=O(1)$ along a family of instances, then the pooled branch scales as $O(\sqrt{rT})$ up to logarithmic factors, while the online fallback remains $O(\sqrt{dT})$ up to logarithmic factors.
If the directional bias certificate is conservative but valid, the pooled term merely widens and the method falls back more often to the online branch.
If the radius is undersized so that the true bias is excluded, then the pooled branch should be read as a heuristic warm start rather than as part of the theorem's guarantee.
\end{remark}

\begin{theorem}
\label{thm:lower}
Consider the diagonal Gaussian subclass with actions $\{e_1,\dots,e_d\}$, arm-wise offline counts
$n_i$, and directional bias budgets $v_i:=\rho/\sqrt{b_i}$, equivalently
$G_{\mathrm{off}}=\operatorname{diag}(1+n_1,\dots,1+n_d)$ and
$M_{\mathrm{bias}}=\operatorname{diag}(b_1,\dots,b_d)$. For $S\subset[d]$, write
$N(S):=\sum_{i\in S}n_i$, $v(S):=\min_{i\in S}v_i$, and
\[
\Psi(S):=
\sqrt{\frac{|S|T^2}{T+N(S)}}+
\min\left\{\frac{N(S)T}{T+N(S)}\,v(S),\;
\sqrt{\frac{|S|N(S)T}{T+N(S)}}\right\}.
\]
Then there exist universal constants $c>0$ and $m_0\in \mathbb N$ such that, for every $d\ge m_0$,
\[
\inf_{\pi}\sup_{I\in \mathcal H_{\mathrm{diag}}}
\mathbb E_I[R_T(\pi)]
\ge
c\max_{\substack{S\subset[d]\\ |S|\ge m_0}}
\min\left\{\sqrt{dT},\,\Psi(S)\right\}.
\]
\end{theorem}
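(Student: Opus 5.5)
Fix $S\subset[d]$ with $|S|=m\ge m_0$. The plan is to build a family of diagonal Gaussian instances that hide a sign pattern on the arms in $S$. The arms outside $S$ are kept clearly suboptimal, for example by setting their mean to $0$ and making them known through both offline and online parameters, so the learner effectively plays an $m$-armed problem. The hard instance is a standard ``one good arm among $m$'' construction. Take a baseline $\theta^\dagger_i=\theta_{*,i}=\mu$ for $i\in S$, and let instance $I_j$ raise the online mean of arm $j$ by $\Delta$. The offline parameter is handled in one of two ways, and these match the two terms of $\Psi(S)$.

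\emph{Regime A (pure online hardness with offline help).} Here $\theta^\dagger=\theta_*$, so the offline data of arm $j$ also reveal the bump. Arm $i$ then effectively carries $n_i+T_i$ Gaussian samples. A Fano/KL argument over the $m$ alternatives gives regret of order $\Delta T$ unless $\sum_{j\in S}\Delta^2(n_j+\E_0 T_j)\gtrsim m$. With $\E_0\sum_j T_j\le T$, choosing $\Delta\asymp\sqrt{m/(T+N(S))}$ yields $\Omega(\sqrt{mT^2/(T+N(S))})$. This is the first term of $\Psi(S)$. One caveat: the offline counts are arm-specific, not uniform. I would handle this by averaging over $j$, which makes $N(S)/m$ the relevant mean count, and by using the convexity of the KL bound.

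\emph{Regime B (hidden bias).} Here the offline data are identical across instances, so $\theta^\dagger$ is fixed, and the online parameter of arm $j$ deviates from $\theta^\dagger_j$ by $\Delta\le v_j$. This is allowed since $|\theta_{*,j}-\theta^\dagger_j|\sqrt{b_j}\le\rho$. Two sub-cases follow.
\begin{itemize}
\item If offline precision is high, the offline data pin every arm at $\mu$, but the learner cannot tell which arm is biased upward, so it must explore online at a per-arm cost. Taking $\Delta=\min\{v(S),\sqrt{m/T}\}$-type scales and comparing with Regime A gives the term $\frac{N(S)T}{T+N(S)}v(S)$.
\item Otherwise, the cap $\sqrt{mN(S)T/(T+N(S))}$ arises from choosing $\Delta$ at the statistical resolution of the combined sample rather than at the bias budget.
\end{itemize}
Concretely, I would let the offline mean shift by an amount $\epsilon$ on a random arm as well. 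This mixes bias and signal so that the posterior after offline data is uninformative about the sign, and the weighting $N(S)/(T+N(S))$ comes from optimizing $\epsilon$ against $\Delta$.

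Assemble the bound as follows. Both regimes give $\Omega(\min\{\Delta T,\ldots\})$, and the cap by $\sqrt{dT}$ comes from requiring $\Delta\le$ the standard minimax scale, so that rewards stay valid and the instance stays in $\mathcal H_{\mathrm{diag}}$. Taking the better of the two constructions lower bounds the regret by $c\min\{\sqrt{dT},\Psi(S)/2\}$. Maximizing over $S$ finishes the proof. The condition $|S|\ge m_0$ is needed so that Fano's inequality with $m$ hypotheses has $\log m$ bounded away from the mutual-information term.

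The main obstacle is Regime B: designing the offline law so that the offline data are uninformative about which arm carries the bias, while the bias stays within $v_i$ and the arithmetic produces exactly the factor $\frac{N(S)T}{T+N(S)}$ and its $\min$ with the square-root cap. I would first try a two-point-per-arm sign construction, $\theta_{*,j}-\theta^\dagger_j=\pm\Delta$, and bound the KL divergence between the joint offline-plus-online laws using the chain rule with $\E T_j$ and $n_j$.
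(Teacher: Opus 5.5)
Your Regime A is sound and is essentially the paper's argument. For non-uniform $n_j$ the paper uses a double pigeonhole: at most $m/4$ arms of $S$ have $n_j>4N/m$, and at most $m/4$ have $\E_0[N_j]>4T/m$, so some $j^\star$ is small on both. It then applies Markov and Bretagnolle--Huber, which is cleaner than an averaging/convexity step.

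The gap is Regime B: nothing in the proposal actually proves the second term of $\Psi(S)$. You flag it as the main obstacle, and the constructions you commit to are never analyzed. These are the extra offline shift making the posterior ``uninformative about the sign'' and the $\pm\Delta$ two-point family. Two misreadings push you toward this unnecessary complexity:
\begin{itemize}
\item You only need $\Psi(S)$ up to constants, not ``exactly'' the factor $\frac{N(S)T}{T+N(S)}$.
\item The $\sqrt{dT}$ cap does not have to come from a validity constraint on $\Delta$. Since $\Psi(S)\le 2\sqrt{|S|T}\le 2\sqrt{dT}$ automatically, it suffices to prove $\sup_I\E_I[R_T]\gtrsim\Psi(S)$ directly, as the paper does.
\end{itemize}

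In fact, the first construction you wrote for Regime B already closes the gap. Keep $\theta^\dagger\equiv 0$ in every instance. Let $P_j$ ($j\in S$) have $\theta^*_j=\Delta:=\min\{v(S),\tfrac18\sqrt{m/T}\}\le v_j$, with all other coordinates $0$. Choosing $j^\star$ with $\E_0[N_{j^\star}]\le T/m$ gives $\mathrm{KL}(P_0,P_{j^\star})\le 1/128$ and $\Pr_{P_0}(N_{j^\star}>T/2)\le 2/m$. Hence the regret is $\gtrsim\min\{v(S)T,\sqrt{mT}\}$. Since $\frac{NT}{T+N}\le T$ and $\frac{mNT}{T+N}\le mT$, this dominates $\min\{\frac{NT}{T+N}v(S),\sqrt{\frac{mNT}{T+N}}\}$ up to a constant. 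The worst case over your two families is at least half the sum of the two bounds, which gives $c\,\Psi(S)$.

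Completed this way, your route differs from the paper's, which uses a single family moving both means of the hidden arm. It sets $\theta^*_j=\mu$ and $\theta^\dagger_j=\mu-\bar v$, with $\bar v=\min\{v(S),\frac1{32}\sqrt{m(T+N)/(NT)}\}$ and $\mu=\frac{N}{N+T}\bar v+\frac1{64}\sqrt{m/(N+T)}$. For $a=4N/m$ and $b=4T/m$, the identity $a(\mu-\bar v)^2+b\mu^2=\frac{ab}{a+b}\bar v^2+32^{-2}$ keeps the KL bounded. The single gap $\mu$ then produces both terms of $\Psi(S)$, and the cap inside $\bar v$ generates the $\sqrt{mNT/(T+N)}$ branch. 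This is what your ``optimize $\epsilon$ against $\Delta$'' remark gestures at. There, though, the offline data are not uninformative; rather, the offline and online KL contributions are balanced.

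Your two-family argument is more elementary. It also shows the second term can be replaced by $\min\{v(S)T,\sqrt{|S|T}\}$, which is equivalent up to constants given the first term. The paper's construction instead exhibits, in one instance, how a bias $\bar v$ enters the gap with weight $\frac{N}{N+T}$.
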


\begin{remark}
The lower bound carries the important information: transfer hardness is
governed by subset-level offline coverage~$N(S)$ and directional bias budgets~$v(S)$, rather than
by scalar summaries alone. Together, the upper and lower bounds show that offline-to-online
transfer in linear contextual bandits is inherently directional. The proof is deferred to
Appendix~\ref{app:proof-lower}.
\end{remark}

\section{Data-Driven Directional Bias Bounds}
\label{sec:epoch-wise-ell-minucb}

When a valid fixed directional bias certificate~$(M_
{\mathrm{bias}},\rho)$ is unavailable, how can the learner safely use the offline data? Our answer is to learn a conservative epoch-wise surrogate
certificate that can be plugged into the same algorithm while preserving the online learning safety.

The construction of such certificate combines an offline localization of $\theta^\dagger$ with an online localization of
$\theta_*$ from historical data. Their estimate gap provides a data-dependent center, while the
parallel-sum geometry records the precision shared by the two designs. Adding the corresponding ridge
confidence widths then yields a conservative certificate for $\theta_*-\theta^\dagger$.
Details and proofs are deferred to Appendices~\ref{app:epoch-learned-proofs}.

\paragraph{Epoch construction.}
Fix
\(
1=\tau_1<\cdots<\tau_{K_{\mathrm{ep}}}\le T
\)
with
\(
\tau_{K_{\mathrm{ep}}+1}:=T+1
\).
For each~$k$, before round~$\tau_k$, from past \emph{online} rounds only, define
\[
G_k^{\mathrm{on}}
:=
I+\sum_{s<\tau_k}x_{s,a_s}x_{s,a_s}^{\top},
\qquad
\hat\theta_k^{\mathrm{on}}
:=
(G_k^{\mathrm{on}})^{-1}\sum_{s<\tau_k}x_{s,a_s}r_s.
\]
The epoch directional bias certificate sets~$\widehat M_{k,\mathrm{bias}}$ to the \emph{parallel sum} of~$G_k^{\mathrm{on}}$ and~$G_{\mathrm{off}}$, which is
\(
\widehat M_{k,\mathrm{bias}}
:=\bigl((G_k^{\mathrm{on}})^{-1}+G_{\mathrm{off}}^{-1}\bigr)^{-1}
\),
and the radius
\[
\widehat\rho_k
:=
\|\hat\theta_k^{\mathrm{on}}-\hat\theta_{\mathrm{off}}\|_{\widehat M_{k,\mathrm{bias}}}
+
\beta_k^{\mathrm{on}}(\delta_k)
+
\beta_{\mathrm{off}}(\delta_{\mathrm{off}}),
\]
where $\beta_k^{\mathrm{on}}(\delta_k)$ is any valid ridge self-normalized bound for~$\theta_*$ in
the $G_k^{\mathrm{on}}$-norm.
\begin{remark}
\label{rem:parallel-sum-interpretation}
The parallel sum is canonical here. For any vector $x$, the quadratic form induced by
$\widehat M_{k,\mathrm{bias}}$ is the minimum joint precision cost of decomposing $x$ into an
online-certifiable component and an offline-certifiable component:
\[
\|x\|_{\widehat M_{k,\mathrm{bias}}}^2
=
\inf_{x=u+v}
\Bigl\{
\|u\|_{G_k^{\mathrm{on}}}^2+\|v\|_{G_{\mathrm{off}}}^2
\Bigr\}.
\]
Thus $\widehat M_{k,\mathrm{bias}}$ measures how cheaply a candidate bias vector can be certified
simultaneously by the online and offline confidence geometries. The above variational identity is proved in
Appendix~\ref{app:epoch-dd-details}.
\end{remark}
Rounds $t\in\{\tau_k,\ldots,\tau_{k+1}-1\}$ use the layerwise branch aggregation of
Section~\ref{sec:algorithm}, replacing the fixed-bound map $\psi_{t,\ell}$ by
$\widehat\psi_{k,t,\ell}$ in \eqref{eq:epoch-psi} and $\mathrm{rad}^{\mathrm{pool}}_{t,\ell}$ by
$\widehat{\mathrm{rad}}^{\mathrm{pool,ep}}_{t,\ell}$ in \eqref{eq:epoch-pool-rad} of
Appendix~\ref{app:epoch-dd-details}. The branch aggregation structure is otherwise unchanged, so the
online branch remains a safe fallback whenever the learned pooled term is loose. The next theorem records the resulting regret guarantee.
The full epoch-wise pseudocode is deferred to Appendix~\ref{app:epoch-dd-details}
(Algorithm~\ref{alg:epoch-ell-minucb}).

\begin{theorem}[Regret with epoch-wise learned directional bias certificates]
\label{thm:epoch-parallel-regret}
Run Algorithm~\ref{alg:epoch-ell-minucb}.
Let $k(t)$ be the epoch index satisfying
$t\in\{\tau_{k(t)},\ldots,\tau_{k(t)+1}-1\}$, and define the routed path sum
\[
\widetilde\Psi_T^{\mathrm{ep}}
:=
\sum_{t=1}^T\widehat\rho_{k(t)}\,\widehat\psi_{k(t),t,\ell_t}(x_{t,a_t}).
\]
Then, with probability at least $1-T^{-2}-\delta_{\mathrm{bias}}$,
\[
R_T
\le
\min\left\{
C_{\mathrm{SL}}\sqrt{Td\,L_T^3},
\;
8\sum_{\ell=0}^{\lceil\log_2 T\rceil}
\bigl(\beta_{\mathrm{off}}+\gamma_\ell^{\mathrm{fin}}\bigr)
\sqrt{2|Q_\ell|\,\Lambda_\ell}
+8\,\widetilde\Psi_T^{\mathrm{ep}}
\right\},
\]
where $Q_\ell$, $\Lambda_\ell$, and $\gamma_\ell^{\mathrm{fin}}$ are as in Appendix~\ref{app:proof-main}.
In particular,
\[
R_T
\le
\min\left\{
C_{\mathrm{SL}}\sqrt{Td\,L_T^3},
\;
8\Gamma_{\mathrm{pool}}\sqrt{2T\,\widetilde\Lambda_T}
+8\,\widetilde\Psi_T^{\mathrm{ep}}
\right\},
\]
where~$\Gamma_{\mathrm{pool}}$ is as in Theorem~\ref{thm:main}.
\end{theorem}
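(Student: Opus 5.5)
The plan is to reduce Theorem~\ref{thm:epoch-parallel-regret} to the analysis behind Theorem~\ref{thm:main}. The only new ingredient is that the certificate $(\widehat M_{k,\mathrm{bias}},\widehat\rho_k)$ now changes across epochs and is random. So the first step is a \emph{validity lemma}: on an event of probability at least $1-\delta_{\mathrm{bias}}$ (together with the offline event), for every epoch $k$ we have $\|\theta_*-\theta^\dagger\|_{\widehat M_{k,\mathrm{bias}}}\le\widehat\rho_k$.

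To prove this lemma, decompose
\[
\theta_*-\theta^\dagger=(\hat\theta_k^{\mathrm{on}}-\hat\theta_{\mathrm{off}})+(\theta_*-\hat\theta_k^{\mathrm{on}})+(\hat\theta_{\mathrm{off}}-\theta^\dagger)
\]
and apply the triangle inequality in the $\widehat M_{k,\mathrm{bias}}$-norm. Since the parallel sum satisfies $\widehat M_{k,\mathrm{bias}}\preceq G_k^{\mathrm{on}}$ and $\widehat M_{k,\mathrm{bias}}\preceq G_{\mathrm{off}}$, this follows from the variational identity of Remark~\ref{rem:parallel-sum-interpretation} with $u=x,v=0$ or $u=0,v=x$. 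Hence the second term is at most $\|\theta_*-\hat\theta_k^{\mathrm{on}}\|_{G_k^{\mathrm{on}}}\le\beta_k^{\mathrm{on}}(\delta_k)$, and the third is at most $\beta_{\mathrm{off}}(\delta_{\mathrm{off}})$ by Proposition~\ref{prop:offline-region}. For uniformity over $k$, I take the $\beta_k^{\mathrm{on}}$ bounds from the anytime self-normalized inequality (one event covering all stopping times) or a union bound with $\sum_k\delta_k\le\delta_{\mathrm{bias}}$. Measurability matters here: $\widehat M_{k,\mathrm{bias}}$ and $\widehat\rho_k$ are $\cF_{\tau_k-1}$-measurable, so the certificate is fixed throughout epoch $k$ before any of its rewards are seen.

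Next, redo the pooled-branch confidence step. For each $(t,\ell)$ in epoch $k$, use the same decomposition as in Theorem~\ref{thm:main}: $x^\top\hat\theta^{\mathrm{pool}}_{t,\ell}-x^\top\theta_*$ splits into a masked online martingale term, bounded by $\gamma_{t,\ell}\|x\|_{(A_{t,\ell}+G_{\mathrm{off}})^{-1}}$; an offline ridge term, bounded by $\beta_{\mathrm{off}}\|x\|_{(A_{t,\ell}+G_{\mathrm{off}})^{-1}}$; and the bias term $x^\top(A_{t,\ell}+G_{\mathrm{off}})^{-1}G_{\mathrm{off}}(\theta^\dagger-\theta_*)$. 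By Cauchy--Schwarz in the $\widehat M_{k,\mathrm{bias}}$ geometry, the bias term is at most $\widehat\rho_k\widehat\psi_{k,t,\ell}(x)$, where $\widehat\psi$ is $\psi$ with $M_{\mathrm{bias}}$ replaced by $\widehat M_{k,\mathrm{bias}}$. The first two bounds do not involve the certificate, so they hold on the event of Theorem~\ref{thm:main}. Combined with the validity lemma, $\widehat{\mathrm{rad}}^{\mathrm{pool,ep}}_{t,\ell}$ is therefore a valid radius at every round, and the union of failure events has probability at most $T^{-2}+\delta_{\mathrm{bias}}$.

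On this good event, both branches give valid intervals, so $U^{\min}$ and $L^{\max}$ bracket the true mean. The SupLinUCB layer argument from Theorem~\ref{thm:main} then goes through verbatim: the best arm survives elimination, and the per-round regret in layer $\ell$ is at most a constant (8) times the width $w_{t,\ell_t}(a_t)$. The online fallback $C_{\mathrm{SL}}\sqrt{TdL_T^3}$ uses only the online branch and is unchanged. For the pooled bound, bound the width by twice the pooled radius. Summing the elliptic part within each layer via Cauchy--Schwarz and the pooled elliptical-potential lemma gives $(\beta_{\mathrm{off}}+\gamma_\ell^{\mathrm{fin}})\sqrt{2|Q_\ell|\Lambda_\ell}$. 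The bias part is summed directly as $\widetilde\Psi_T^{\mathrm{ep}}$. The second displayed bound follows by bounding $\beta_{\mathrm{off}}+\gamma_\ell^{\mathrm{fin}}\le\Gamma_{\mathrm{pool}}$ and applying Cauchy--Schwarz, $\sum_\ell\sqrt{|Q_\ell|\Lambda_\ell}\le\sqrt{T\widetilde\Lambda_T}$.

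I expect the main obstacle to be the validity lemma's uniformity and measurability. The online estimate $\hat\theta_k^{\mathrm{on}}$ uses adaptively collected data from all layers, so I need a self-normalized bound that holds simultaneously at the random epoch start times. The radius $\widehat\rho_k$ must also not leak into the masked martingale bound within the epoch; this works because it is predictable. I would first try the Abbasi-Yadkori anytime bound with $\delta_{\mathrm{bias}}$, which handles both issues at once.
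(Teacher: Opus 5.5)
Your proposal is correct and follows the paper's proof essentially step for step. It uses the same validity lemma, via the three-term decomposition and the Loewner domination $\widehat M_{k,\mathrm{bias}}\preceq G_k^{\mathrm{on}},G_{\mathrm{off}}$. It then replaces the fixed bias width with $\widehat\rho_{k(t)}\widehat\psi_{k(t),t,\ell}$ in the pooled confidence step, justified by predictability of the epoch certificate, and finishes with the same layerwise elliptical-potential summation. One small constant slip: the stopping-layer lemma gives $\Delta_t(a_t)\le 4\,w_{t,\ell_t}(a_t)\le 8\,\widehat{\mathrm{rad}}^{\mathrm{pool,ep}}_{t,\ell_t}(x_{t,a_t})$, not $8$ times the width, which combined with width $\le 2\cdot$radius would give $16$. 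Also, since the $\tau_k$ are prespecified, a union bound over fixed-time self-normalized bounds suffices and no anytime bound is needed.
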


\noindent
Theorem~\ref{thm:epoch-parallel-regret} preserves the same min-structure as the fixed-certificate
result. When the learned certificate is informative, the pooled branch
improves on the online branch. When it is loose, the pooled term widens and the guarantee falls back
toward the purely online rate.
To obtain a closed-form refinement, we specialize to a standard doubling schedule. Since this
schedule is only a default instantiation, we defer its precise definition and the associated
confidence split to Appendix~\ref{app:epoch-dd-details} and state here only the resulting corollary.

\begin{corollary}[Adaptivity penalty under doubling epochs]
\label{cor:epoch-adaptivity-penalty}
Under the doubling schedule and confidence split of
Theorem~\ref{thm:epoch-doubling-regret}, define
\[
\widehat c_k
:=
\lambda_{\max}\bigl(
G_{\mathrm{off}}^{1/2}\widehat M_{k,\mathrm{bias}}^{-1}G_{\mathrm{off}}^{1/2}
\bigr),
\qquad
\mathcal S_{\mathrm{ep}}
:=
\sum_{k=1}^{K_{\mathrm{ep}}} (\tau_{k+1}-\tau_k)\,\widehat\rho_k^2\,\widehat c_k.
\]
Then, with probability at least $1-T^{-2}-\delta_{\mathrm{bias}}$,
\[
R_T
\le
\min\left\{
C_{\mathrm{SL}}\sqrt{Td\,L_T^3},
\;
8\Bigl(\Gamma_{\mathrm{pool}}+\sqrt{\mathcal S_{\mathrm{ep}}/T}\Bigr)
\sqrt{2T\,\widetilde\Lambda_T}
\right\}.
\]
\end{corollary}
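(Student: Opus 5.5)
The corollary follows from the second display of Theorem~\ref{thm:epoch-parallel-regret}, by bounding the routed path sum $\widetilde\Psi_T^{\mathrm{ep}}$ in terms of $\mathcal S_{\mathrm{ep}}$ and $\widetilde\Lambda_T$. This is the same step that takes Theorem~\ref{thm:main} to Corollary~\ref{cor:pooled-better-regime}. We work on the event of Theorem~\ref{thm:epoch-doubling-regret} (doubling schedule with its confidence split), which has probability at least $1-T^{-2}-\delta_{\mathrm{bias}}$. On this event the bound
\[
R_T\le\min\Bigl\{C_{\mathrm{SL}}\sqrt{Td\,L_T^3},\;8\Gamma_{\mathrm{pool}}\sqrt{2T\widetilde\Lambda_T}+8\widetilde\Psi_T^{\mathrm{ep}}\Bigr\}
\]
holds. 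It therefore suffices to show
\[
\widetilde\Psi_T^{\mathrm{ep}}\le\sqrt{\mathcal S_{\mathrm{ep}}/T}\,\sqrt{2T\widetilde\Lambda_T},
\]
since the online branch of the min is untouched.

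\textbf{Step 1 (pointwise bound on $\widehat\psi$).} Fix $t$ in epoch $k$ and let $P:=A_{t,\ell_t}+G_{\mathrm{off}}$. I expect $\widehat\psi_{k,t,\ell}$ to be $\psi_{t,\ell}$ with $M_{\mathrm{bias}}$ replaced by $\widehat M_{k,\mathrm{bias}}$. Then
\[
\widehat\psi_{k,t,\ell}(x)^2=x^\top P^{-1}G_{\mathrm{off}}\widehat M_{k,\mathrm{bias}}^{-1}G_{\mathrm{off}}P^{-1}x.
\]
By the definition of $\widehat c_k$, we have $G_{\mathrm{off}}\widehat M_{k,\mathrm{bias}}^{-1}G_{\mathrm{off}}\preceq\widehat c_k G_{\mathrm{off}}$; this is the generalized-eigenvalue characterization of Remark~\ref{rem:spectral-pooled-interpretation}. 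Hence
\[
\widehat\psi^2\le\widehat c_k\,x^\top P^{-1}G_{\mathrm{off}}P^{-1}x.
\]
Since $G_{\mathrm{off}}\preceq P$, we get $P^{-1}G_{\mathrm{off}}P^{-1}\preceq P^{-1}$, and so
\[
\widehat\psi_{k,t,\ell_t}(x_{t,a_t})\le\sqrt{\widehat c_k}\,\|x_{t,a_t}\|_{P^{-1}}.
\]

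\textbf{Step 2 (Cauchy--Schwarz across rounds).} Step 1 gives
\[
\widetilde\Psi_T^{\mathrm{ep}}\le\sum_t\widehat\rho_{k(t)}\sqrt{\widehat c_{k(t)}}\,\|x_{t,a_t}\|_{(A_{t,\ell_t}+G_{\mathrm{off}})^{-1}}.
\]
Applying Cauchy--Schwarz,
\[
\widetilde\Psi_T^{\mathrm{ep}}\le\Bigl(\sum_t\widehat\rho_{k(t)}^2\widehat c_{k(t)}\Bigr)^{1/2}\Bigl(\sum_t\|x_{t,a_t}\|^2_{(A_{t,\ell_t}+G_{\mathrm{off}})^{-1}}\Bigr)^{1/2}.
\]
The first factor equals $\sqrt{\mathcal S_{\mathrm{ep}}}$ after grouping rounds by epoch, because $\widehat\rho_k$ and $\widehat c_k$ are constant within epoch $k$ (they are computed before $\tau_k$).

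For the second factor, I split by layer and invoke the layerwise elliptical-potential bound used in the proof of Theorem~\ref{thm:main} (Appendix~\ref{app:proof-main}). That bound gives
\[
\sum_{t\in Q_\ell}\min\{1,\|x_{t,a_t}\|^2_{(A_{t,\ell}+G_{\mathrm{off}})^{-1}}\}\le2\Lambda_\ell.
\]
Here the truncation is harmless: $G_{\mathrm{off}}\succeq I$ and $\|x\|_2\le1$ imply $\|x\|^2_{P^{-1}}\le1$. Summing over $\ell$ bounds the second factor by $\sqrt{2\widetilde\Lambda_T}$. This yields
\[
\widetilde\Psi_T^{\mathrm{ep}}\le\sqrt{2\mathcal S_{\mathrm{ep}}\widetilde\Lambda_T}=\sqrt{\mathcal S_{\mathrm{ep}}/T}\,\sqrt{2T\widetilde\Lambda_T},
\]
and substituting gives the claimed bound.

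\textbf{Main obstacle.} The delicate point is ensuring $\Lambda_\ell$ is exactly the log-det increment that dominates $\sum_{t\in Q_\ell}\|x_{t,a_t}\|^2_{(A_{t,\ell}+G_{\mathrm{off}})^{-1}}$ with constant 2. This requires that the pooled Gram at round $t$ includes all earlier layer-$\ell$ plays, so that the standard $\log\det$ telescoping applies. If the appendix's $\Lambda_\ell$ is instead defined via $(Q_\ell,A_\ell^{\mathrm{fin}})$ with a different constant, the factor 8 and the $\sqrt 2$ must be reconciled with that convention. The structure of the argument is the same either way. I would also check that the doubling schedule is needed only for the probability and confidence-split bookkeeping inherited from Theorem~\ref{thm:epoch-doubling-regret}, not in the bound itself.
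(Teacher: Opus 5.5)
Your proposal is correct and is essentially the paper's argument: the paper proves Theorem~\ref{thm:epoch-doubling-regret} by exactly your Steps~1--2 to obtain $\widetilde\Psi_T^{\mathrm{ep}}\le\sqrt{2\widetilde\Lambda_T\mathcal S_{\mathrm{ep}}}$ (pointwise bound $\widehat\psi_{k,t,\ell_t}(x_{t,a_t})\le\sqrt{\widehat c_k}\,\|x_{t,a_t}\|_{(A_{t,\ell_t}+G_{\mathrm{off}})^{-1}}$, Cauchy--Schwarz with epoch grouping, Lemma~\ref{lem:pooled-potential}), and the corollary is then just your final factoring. Your flagged obstacle does not arise, since $A_{t,\ell}$ contains all earlier layer-$\ell$ plays and Lemma~\ref{lem:pooled-potential} gives the untruncated bound $\sum_{t\in Q_\ell}\|x_{t,a_t}\|^2_{(A_{t,\ell}+G_{\mathrm{off}})^{-1}}\le 2\Lambda_\ell$ with $\Lambda_\ell=\log\bigl(\det(A_\ell^{\mathrm{fin}}+G_{\mathrm{off}})/\det(G_{\mathrm{off}})\bigr)$.
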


\begin{remark}
Corollary~\ref{cor:epoch-adaptivity-penalty} isolates the price of learning the certificate from
data. The fixed-certificate pooled branch pays the deterministic prefactor
$\Gamma_{\mathrm{pool}}+\rho\sqrt{c_{\mathrm{align}}}$, whereas the learned version pays
$\Gamma_{\mathrm{pool}}+\sqrt{\mathcal S_{\mathrm{ep}}/T}$.
Thus the learned rule approaches fixed-certificate behavior when the epoch-wise estimator gap
stabilizes quickly and the learned geometries $\widehat M_{k,\mathrm{bias}}$ stay well aligned with
$G_{\mathrm{off}}$, so that the average penalty $\mathcal S_{\mathrm{ep}}/T$ is small.
Conversely, persistent disagreement between online and offline estimates in highly covered directions
enlarges $\widehat\rho_k$ or $\widehat c_k$, widens the learned pooled term, and drives the method
back toward the online fallback.
\end{remark}

\section{Experiments}
\label{sec:experiments}
\FloatBarrier

We visualize how regret responds to increasing coordinate-split mismatch between $\theta_*$ and $\theta^\dagger$ under randomized contexts. The main experiment uses
$K{=}5$ arms and $d{=}5$ features with normalized random contexts. We fix
$\theta_*=(s,1,1,1,1)$ and $\theta^\dagger=(1,s,1,1,1)$ with
$s\in\{1.1,2,10\}$ in the three panels of Figure~\ref{fig:exp-main}. These regimes move from mild to
severe coordinate-split mismatch: the online parameter puts extra mass on the first coordinate while
the offline parameter puts matching mass on the second.

We compare a standard SupLinUCB baseline, Ellipsoidal-MINUCB, 3D Ellipsoidal-MINUCB(Algorithm~\ref{alg:epoch-ell-minucb}), a warm-started
SupLinUCB, and a non-adaptive policy that uses only the offline estimate. Full hyperparameters and
implementation details are in Appendix~\ref{app:more-experiments}.

\begin{figure}[H]
  \centering
  \includegraphics[width=\linewidth]{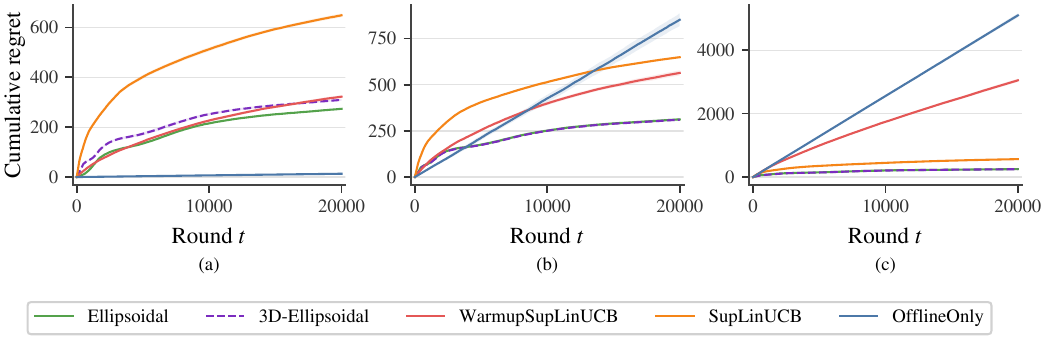}
  \caption{Cumulative regret curves}
  \label{fig:exp-main}
\end{figure}

Figure~\ref{fig:exp-main} shows that larger $s$ increases mismatch between the highlighted coordinates.
The ordering of curves is consistent with the min-structure in Theorem~\ref{thm:main}, and the
adaptive variant tracks the same phases with additional conservatism.

\paragraph{Additional experiments.}
Appendix~\ref{app:more-experiments} reports three additional experiments: randomized bias generation that
trades a multiplicative component against an additive offset; fixed $\theta_*$ with varying alignment of
the offline mean; and \emph{offline} covariate support held strictly inside the \emph{online} support, so
that the offline design matrix is rank-deficient in some directions when the two parameters align.
\FloatBarrier

\section{Related Work}
\label{sec:related}

A broad offline-to-online literature studies how historical data, hints, or source models can
reduce the cost of online learning. Warm-start and transfer approaches combine supervised and bandit
feedback \citep{zhang2019}, use initial hints \citep{cutkosky2022}, handle deficient support in
logged data \citep{tran2021}, and transfer across contextual bandit tasks \citep{cai2024}. Other
work considers weighted hypothesis transfer \citep{bilaj2022}, partially observed confounded
offline data \citep{tennenholtz2020}, offline logs in larger sequential settings
\citep{wagenmaker2023,kausik2024}, and offline-to-online interaction in pricing
\citep{zhang2025contextualpricing,bu2020icml,bu2024featurepricing}. A related offline contextual
bandit and off-policy learning line learns or evaluates policies from logged feedback under support,
variance, or pessimism constraints \citep{dudik2011,swaminathan2015,wang2024,gabbianelli2024}. These
works establish many ways in which past data can help decision-making, but their main object is not
the online regret value of biased offline evidence after deployment.

The closest comparisons come from two directions. In linear bandits with offline data, prior
observations can lower exploration cost through offline Gram matrices and design-based exploration
when the offline and online parameters coincide \citep{vijayan2025oope}. In multi-armed bandits with
biased offline data, MIN-UCB shows that a learner should exploit offline samples only when the
mismatch is controlled, and otherwise fall back toward online learning \citep{cheung2024}; related
biased offline-to-online questions have also been studied in combinatorial and dueling bandit
variants \citep{zhou2025hybrid,he2025learninggap}. \citet{zhou2026regret} studied the biased offline to online setting under noncontextual linear bandit model where the bias is modeled through a known Euclidean bound which fails to capture the directional information in the offline data. These lines provide the two ingredients we use:
coverage in a linear model and controlled use of biased offline information. Our focus is matrix-shaped bias budgets together with offline design geometry in a shared-feature contextual model, rather than arm-level transfer alone.

At the algorithmic level, our online branch builds on the optimism-based foundations of
contextual and linear bandits: LinUCB and SupLinUCB \citep{li2010,chu2011}, the OFUL analysis
\citep{abbasi2011}, and linear Thompson-style approaches \citep{agrawal2013}. We also relate to
misspecification-robust bandits, where the online reward model may deviate from the assumed linear
class. Classic and follow-up work includes \citet{ghosh2017} and \citet{takemura2021}; more recent
developments include \citet{foster2020misspec} and \citet{krishnamurthy2021}. Here the online model
remains linear; the difficulty is that the \emph{offline} parameter may differ from the online one in
a structured, direction-dependent way. Our contribution is to encode that mismatch through a
directional bias certificate and use it inside a hybrid pooled/online optimistic rule. For the adaptive
part, we are not aware of prior regret guarantees that learn a directional bias certificate from biased
offline data in linear contextual bandits.

\section{Conclusion}
\label{sec:conclusion}

This paper investigates how to use biased offline data in linear contextual bandits. We propose a novel direction-aware framework to measure the bias of the offline data and use them safely with comprehensive theoretical analysis and empirical validation. It makes bandit algorithms easier to apply in practice by relaxing how historical data must relate to the live environment. Future directions include extensions to other online learning settings, such as generalized linear models and online Markov decision processes. Also, it would also be interesting to develop refined lower bounds in the non-diagonal settings which are more challenging.

\bibliographystyle{plainnat}
\bibliography{references}

\clearpage
\appendix
\section{Intuition behind the directional bias certificate}
\label{app:directional-bias-bound}

In this section, we discuss the intuition behind the directional bias certificate in detail. The core idea is to express the error $\Delta:=\theta_*-\theta^\dagger$ in the linear combination of the eigenvectors of $M_{\mathrm{bias}}$ and get a clear view of the bias along each specific eigenvector.

Let
\[
\Delta:=\theta_*-\theta^\dagger.
\]
The directional bias certificate
\[
\|\Delta\|_{M_{\mathrm{bias}}}\le \rho
\]
is equivalent to
\[
\Delta^\top M_{\mathrm{bias}}\Delta\le \rho^2.
\]
We can write the eigen-decomposition of $M_{\mathrm{bias}}$ as
\[
M_{\mathrm{bias}}=Q\Lambda Q^\top,
\qquad
Q=[q_1,\ldots,q_d],
\qquad
Q^\top Q=I,
\qquad
\Lambda=\mathrm{diag}(\lambda_1,\ldots,\lambda_d),
\]
where $\lambda_j>0$ and $q_j$ are the eigenvectors of $M_{\mathrm{bias}}$. Define
\[
\alpha:=Q^\top\Delta,
\qquad
\alpha_j=q_j^\top\Delta.
\]
Here, $\alpha_j$ can be viewed as the error projected to the $j$-th eigenvector. Then we have the following expression:
$$
\qquad
\Delta=Q\alpha=\sum_{j=1}^d \alpha_j q_j.$$
By definition, we have
\[
\Delta^\top M_{\mathrm{bias}}\Delta
=
\Delta^\top Q\Lambda Q^\top\Delta
=
(Q^\top\Delta)^\top\Lambda(Q^\top\Delta)
=
\alpha^\top\Lambda\alpha
=
\sum_{j=1}^d \lambda_j\alpha_j^2.
\]
Therefore
\[
\|\Delta\|_{M_{\mathrm{bias}}}\le \rho
\quad\Longleftrightarrow\quad
\sum_{j=1}^d \lambda_j\alpha_j^2\le \rho^2.
\]
In particular, for each eigen-direction $q_j$,
\[
|\alpha_j|=|q_j^\top\Delta|\le \frac{\rho}{\sqrt{\lambda_j}}.
\]
Till now, we can see that a larger $\lambda_j$ typically means a small error along $q_j$. Under such circumstance, the offline data should be informative along $q_j$. Otherwise, the offline data is misleading along $q_j$.

For any feature direction $x$, we have
$$|x^{\top}(\theta_*-\theta^\dagger)| \le \rho \|x\|_{M_{\mathrm{bias}}^{-1}}.$$

The right hand side in the above inequality can be written as:
$$\|x\|_{M_{\mathrm{bias}}^{-1}}^2
=
\sum_{j=1}^d \frac{(q_j^\top x)^2}{\lambda_j}.$$
It means that directions aligned with small eigenvalues are amplified by $M_{\mathrm{bias}}^{-1}$, while directions
aligned with large eigenvalues are shrunk. Thus a direction with most of its energy on small-$\lambda_j$ eigendirections makes the above bound looser leading to a large worst-case bias width in the analysis of the UCB radius.

\section{An Impossibility Result}
\label{app:isotropic-too-coarse}

The main text models bias through a directional bias certificate
$\|\theta_*-\theta^\dagger\|_{M_{\mathrm{bias}}}\le \rho$. The next theorem explains why replacing
this by a single Euclidean summary $\|\theta_*-\theta^\dagger\|_2\le \rho$ is too coarse.  Under a finite-action Bernoulli linear bandit with dimension
$d=3$, parameter norm bound $S=1$, and $|\cA|=3$, we have the following theorem:

\begin{theorem}[Isotropic bounds do not identify transferable directions]
\label{thm:rho-only-nonident}
There exist universal constants $c_0,c_1>0$ such that the following holds. Fix
$T\ge 1$, $\rho_0\in(0,1/4]$, and
\[
0 < \rho \le \min\!\left\{\frac{\rho_0}{4},\, c_0 T^{-1/2}\right\}.
\]
Then there exist three Bernoulli linear bandit instances
$I_+,I_-,I_{\mathrm{good}}$ sharing a common offline parameter $\theta^\dagger$ and a common
action set $\cA_t\equiv\cA$, such that, writing $\theta_*^I$ for the online parameter of instance~$I$,
the offline data $D_{\mathrm{off}}$ has the same distribution under
$I_+,I_-,I_{\mathrm{good}}$, and
\[
\norm{\theta_*^I-\theta^\dagger}_2\le \rho
\quad
\forall I\in\{I_+,I_-,I_{\mathrm{good}}\}.
\]
Nevertheless, the corresponding online problems have different hardness:
\[
\inf_{\pi}
\sup_{I\in\{I_+,I_-\}}
\E_I[R_T(\pi)]
\ge
c_1 \frac{\rho^2 T}{\rho_0};
\]
where the infimum is over all non-anticipatory policies. In contrast, there exists a deterministic
fixed-action policy $\pi_{\mathrm{good}}$ such that
\[
\E_{I_{\mathrm{good}}}[R_T(\pi_{\mathrm{good}})]=0.
\]
Consequently, a $\rho$-only isotropic summary cannot determine which offline directions transfer.
\end{theorem}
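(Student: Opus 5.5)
We will construct the three instances explicitly in $d=3$ with the fixed action set $\cA=\{a_0,a_+,a_-\}$. Set $\theta^\dagger:=(1/2,0,0)$ and take the actions
\[
a_0=e_1,\qquad a_\pm=\tfrac12 e_1+\tfrac12 e_1\ \pm\ \text{(a small $e_2$ component)}.
\]
Concretely, $x_{a_0}=e_1$ and $x_{a_\pm}=e_1\pm \rho_0 e_2$, all scaled by $1/\sqrt{1+\rho_0^2}$ if needed so that $\|x\|_2\le1$. Then $x^\top\theta^\dagger=1/2$ for every action, so all Bernoulli means lie in $[0,1]$, and offline information alone cannot distinguish the arms. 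The offline data law depends only on $\theta^\dagger$ and on the fixed covariates, which are taken identical across instances; hence $D_{\mathrm{off}}$ has the same distribution under all three instances, and any policy's behaviour differs only through online observations.

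The online parameters are chosen as follows:
\[
\theta_*^{\pm}=\theta^\dagger\pm\epsilon e_2+\kappa e_3\ \text{-type perturbation},\qquad \theta_*^{\mathrm{good}}=\theta^\dagger-\rho e_1 .
\]
For the hard pair we use $\theta_*^{\pm}=\theta^\dagger\pm\rho e_2-\eta e_1$. With $\eta:=\rho^2/\rho_0$ (adjusted by constants so that $\|\theta_*^\pm-\theta^\dagger\|_2\le\rho$, which requires $\eta\le\rho$ and is implied by $\rho\le\rho_0/4$), the rewards under $I_\pm$ become $1/2-\eta$ for $a_0$ and $1/2-\eta\pm\rho_0\rho$ for $a_\pm$. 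The pair should instead be tuned so that the gap between the correct and the incorrect $a_\pm$ is $\Theta(\rho^2/\rho_0)$. The cleaner choice is to let the hidden sign act along a direction whose reward effect is $\Delta:=\rho^2/\rho_0$. Under $I_+$ the optimal arm is $a_+$, and choosing $a_-$ or $a_0$ costs at least $\Delta$; under $I_-$ the roles of $a_+$ and $a_-$ are swapped. For $I_{\mathrm{good}}$ we place the bias along $-e_2$ with the $e_1$ shift negative and an $e_3$ component arranged so that $a_0$ strictly dominates; the fixed policy always playing $a_0$ then has zero regret. The same radius $\rho$ bounds all three biases in $\ell_2$.

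For the lower bound we use the standard two-point argument (Le Cam / Bretagnolle--Huber). Let $N_-$ denote the number of pulls of arms other than the optimal one. Then
\[
\E_{+}[R_T]+\E_{-}[R_T]\ \ge\ \Delta\,\frac{T}{2}\,\bigl(1-\mathrm{TV}(\Prb_+^{T},\Prb_-^{T})\bigr).
\]
By divergence decomposition, $\mathrm{KL}(\Prb_+^T\|\Prb_-^T)\le\sum_t \E_+\,\mathrm{kl}(\mu_+(a_t),\mu_-(a_t))$. Since all means lie in $[1/4,3/4]$, each Bernoulli KL is at most $C\,(\text{mean gap})^2\le C(2\rho\|x\|)^2\le C'\rho^2$, so the total KL is at most $C'\rho^2T\le C'c_0^2$. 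Choosing $c_0$ small makes the total variation at most $1/2$, which yields $\sup_\pm\E[R_T]\ge c_1\Delta T=c_1\rho^2T/\rho_0$.

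The main obstacle is the geometry itself: making the per-pull reward gap between the two instances small, of order $\rho$, so that the information budget stays $O(\rho^2 T)$, while the regret gap between the competing arms is $\rho^2/\rho_0$, all within an $\ell_2$ ball of radius $\rho$ and with valid Bernoulli means. I would try first to build the actions $a_\pm$ with a $\rho_0$-scaled $e_2$ component plus an $e_3$ offset of size $\rho^2/\rho_0$, so that the sign of the bias decides the optimal arm by exactly $\Theta(\rho^2/\rho_0)$. The remaining work is routine verification of the constants: $\rho\le\rho_0/4$ keeps the biases in the ball, and $\rho_0\le1/4$ keeps the means bounded away from $0$ and $1$.
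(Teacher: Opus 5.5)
Your lower-bound machinery is fine and matches the paper. The offline law is identical because $\theta^\dagger$ and the covariates are shared. The per-round Bernoulli KL is at most a constant times $\rho^2$ because the means stay away from $0$ and $1$, so the total KL is $O(c_0^2)$. Bretagnolle--Huber on the event that $a_+$ is pulled at least $T/2$ times then finishes.

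The gap is that the theorem is an existence statement, so the construction \emph{is} the proof, and you never commit to instances that satisfy all constraints at once. Your one explicit candidate does not give a gap of order $\rho^2/\rho_0$. Since $\|e_1\pm\rho_0e_2\|_2=\sqrt{1+\rho_0^2}>1$, you must rescale, and either way it fails:
\begin{itemize}
\item If you rescale all three actions, $x_{a_+}-x_{a_0}=\rho_0e_2/\sqrt{1+\rho_0^2}$ and $\theta^\dagger$ has no $e_2$ component. So always playing $a_0$ has regret at most $\rho_0\rho T$ on both hard instances. This is below $\rho^2T/\rho_0$ whenever $\rho>\rho_0^2$, a regime the hypotheses allow.
\item If you rescale only $a_\pm$, then $x^\top\theta^\dagger=1/2$ fails for $a_\pm$. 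Because $\theta^\dagger$ puts mass $1/2$ on $e_1$, normalization costs $a_\pm$ about $\rho_0^2/4$ relative to $a_0$. A bias $b$ with $\|b\|_2\le\rho$ recovers at most $\rho\|x_{a_+}-x_{a_0}\|_2\le\rho\rho_0\le\rho_0^2/4$. Hence $a_0$ is optimal under both $I_+$ and $I_-$ unless $\rho$ is within $O(\rho_0^3)$ of $\rho_0/4$, and even then its deficit is only $O(\rho_0^4)$.
\end{itemize}
The rest of the proposal describes the properties you want, not instances that have them. This includes ``a direction whose reward effect is $\Delta$'', an $e_3$ offset of size $\rho^2/\rho_0$, and an $e_3$ component in $\theta_*^{\mathrm{good}}$, although none of your actions has an $e_3$ coordinate.

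The missing idea is to separate the common baseline $1/2$ from the subspace in which the arms compete. The paper lifts each action to $x=(e_0+a)/\sqrt2$ and each parameter to $\theta=e_0/\sqrt2+\sqrt2\,\vartheta$, so that $x^\top\theta=\frac12+a^\top\vartheta$ with $a$ a unit vector and $\vartheta$ small. It then takes $\bar\rho=\rho/\sqrt2$ and
\begin{itemize}
\item $\vartheta^\dagger=\rho_0e_1$, $\vartheta_\pm=\rho_0e_1\pm\bar\rho e_2$, $\vartheta_{\mathrm{good}}=(\rho_0+\bar\rho)e_1$;
\item $a_0=e_1$ and $a_\pm=\vartheta_\pm/\|\vartheta_\pm\|_2$.
\end{itemize}
Every bias then has $\ell_2$-norm exactly $\rho$. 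Under $\vartheta_\pm$ the aligned arm $a_\pm$ beats the others by at least $\sqrt{\rho_0^2+\bar\rho^2}-\rho_0\ge\bar\rho^2/(4\rho_0)=\rho^2/(8\rho_0)$. Under $\vartheta_{\mathrm{good}}$, $a_0$ is strictly optimal.

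Since your KL bound is already the crude $O(\rho^2)$ per round, you also do not need a gap of exactly $\rho^2/\rho_0$. In the same lifted coordinates, take $\vartheta^\dagger=0$, $\vartheta_\pm=\pm\bar\rho e_2$, $\vartheta_{\mathrm{good}}=\bar\rho e_1$, $a_0=e_1$ and $a_\pm=(e_1\pm e_2)/\sqrt2$. This gives gap $\rho/2\ge 2\rho^2/\rho_0$, which already suffices.
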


\begin{proof}
Let $e_0,e_1,e_2$ denote the canonical basis of $\R^3$, and set
\[
\bar{\rho} := \frac{\rho}{\sqrt{2}}.
\]
Define the three actions
\[
a_0 := e_1,
\qquad
a_+ := \frac{\rho_0 e_1 + \bar{\rho} e_2}{\sqrt{\rho_0^2+\bar{\rho}^2}},
\qquad
a_- := \frac{\rho_0 e_1 - \bar{\rho} e_2}{\sqrt{\rho_0^2+\bar{\rho}^2}},
\]
and lift them to
\[
x^{(0)} := \frac{e_0+a_0}{\sqrt{2}},
\qquad
x^{(+)} := \frac{e_0+a_+}{\sqrt{2}},
\qquad
x^{(-)} := \frac{e_0+a_-}{\sqrt{2}}.
\]
Each lifted action has norm $1$, so the finite action set
\[
\cA_t := \{x^{(0)},x^{(+)},x^{(-)}\}
\qquad \forall t
\]
fits the main setup with $K=3$.

Now define the common offline parameter and the three online parameters by
\[
\tilde\theta^\dagger
:=
\frac{1}{\sqrt{2}}e_0 + \sqrt{2}\rho_0 e_1,
\]
\[
\tilde\theta_+
:=
\frac{1}{\sqrt{2}}e_0 + \sqrt{2}\bigl(\rho_0 e_1 + \bar{\rho} e_2\bigr),
\qquad
\tilde\theta_-
:=
\frac{1}{\sqrt{2}}e_0 + \sqrt{2}\bigl(\rho_0 e_1 - \bar{\rho} e_2\bigr),
\]
\[
\tilde\theta_{\mathrm{good}}
:=
\frac{1}{\sqrt{2}}e_0 + \sqrt{2}(\rho_0+\bar{\rho})e_1.
\]
Write these parameters as
$\theta_+,\theta_-,\theta_{\mathrm{good}}$, with common offline parameter $\theta^\dagger$.
Take any deterministic offline covariates $\{z_i\}$ with $\|z_i\|_2\le 1$ and
$z_i^\top \theta^\dagger \in [0,1]$, and let the offline rewards be Bernoulli with means
$z_i^\top \theta^\dagger$. Online rewards are also Bernoulli, with mean $x^\top \theta$ when action
$x\in\cA_t$ is played under parameter $\theta$.

All displayed online means lie in $[1/2,13/16]$: the smallest is
$1/2 + (\rho_0^2-\bar{\rho}^2)/\sqrt{\rho_0^2+\bar{\rho}^2} \ge 1/2$, and the largest is
$1/2+\rho_0+\bar{\rho} \le 13/16$ because $\rho_0\le 1/4$ and $0<\rho\le \rho_0/4$.
Hence rewards are in $[0,1]$, and the centered Bernoulli noise is conditionally sub-Gaussian.

The three instances share the same offline parameter $\theta^\dagger$ and the same deterministic
offline covariates, so the law of $D_{\mathrm{off}}$ is identical across all three instances. Also,
\[
\norm{\theta_+ - \theta^\dagger}_2
=
\norm{\theta_- - \theta^\dagger}_2
=
\norm{\theta_{\mathrm{good}} - \theta^\dagger}_2
=
\rho,
\]
so the same isotropic directional bias certificate is valid throughout.

For the hard family, let $P_+$ and $P_-$ denote the laws of the full online interaction under
$\theta_+$ and $\theta_-$, respectively. Let $v:=e_2$, and define the event
\[
E:=\left\{\sum_{t=1}^T \mathbf{1}\{\ip{v}{x_t}\ge 0\}\ge \frac{T}{2}\right\}.
\]
Under $\theta_-$, the unique optimal action in $\cA_t$ is $x^{(-)}$, whose mean
reward equals
\[
(x^{(-)})^\top \theta_- = \frac{1}{2} + \sqrt{\rho_0^2+\bar{\rho}^2}.
\]
For any chosen action $x_t\in\cA_t$, write
\[
x_t = \frac{e_0+a_t}{\sqrt{2}}
\qquad\text{with}\qquad
a_t\in\{a_0,a_+,a_-\}.
\]
Any action with $\ip{v}{x_t}\ge 0$ is either $x^{(0)}$ or $x^{(+)}$, and each satisfies
\[
x_t^\top \theta_-
=
\frac{1}{2} + \rho_0 \ip{e_1}{a_t} - \bar{\rho}\ip{v}{a_t}
\le
\frac{1}{2} + \rho_0.
\]
Hence every round in $E$ incurs regret at least
\[
\sqrt{\rho_0^2+\bar{\rho}^2} - \rho_0
\ge
\frac{\bar{\rho}^2}{4\rho_0}
=
\frac{\rho^2}{8\rho_0},
\]
where the last inequality uses $\bar{\rho} \le \rho_0/4$. Therefore,
\[
R_T(\theta_-)\ge \frac{\rho^2 T}{16\rho_0}
\qquad \text{on } E.
\]
By symmetry,
\[
R_T(\theta_+)\ge \frac{\rho^2 T}{16\rho_0}
\qquad \text{on } E^c.
\]

The offline KL divergence between the two hard instances is zero because the offline law is the same.
For the online interaction, if the chosen action at round $t$ has Bernoulli means
$p_t:=x_t^\top\theta_+$ and $q_t:=x_t^\top\theta_-$, then
\[
\mathrm{kl}(p_t,q_t)
\le
\frac{(p_t-q_t)^2}{q_t(1-q_t)}
\le
\frac{256}{39}(p_t-q_t)^2,
\]
because $q_t\in[1/2,13/16]$. Moreover, $p_t-q_t=0$ for $x_t=x^{(0)}$, while for
$x_t\in\{x^{(+)},x^{(-)}\}$,
\[
|p_t-q_t|
=
\left|x_t^\top(\theta_+-\theta_-)\right|
=
\frac{2\bar{\rho}^2}{\sqrt{\rho_0^2+\bar{\rho}^2}}
\le
\frac{2\bar{\rho}^2}{\rho_0}.
\]
Hence every round contributes at most
\[
\mathrm{KL}(p_t,q_t)
\le
\frac{1024}{39}\frac{\bar{\rho}^4}{\rho_0^2}
\le
\frac{64}{39}\bar{\rho}^2
=
\frac{32}{39}\rho^2,
\]
where the second inequality again uses $\bar{\rho}\le \rho_0/4$. By the chain rule,
\[
\mathrm{KL}(P_+\,\|\,P_-)\le \frac{32}{39}\rho^2T.
\]
Thus $\mathrm{KL}(P_+\,\|\,P_-)\le 1/8$ when $c_0$ is chosen small enough. Bretagnolle--Huber then
gives
\[
P_+(E^c)+P_-(E)\ge \frac{1}{2}e^{-1/8}.
\]
Combining the last three displays yields
\[
\max\{\E_+[R_T],\, \E_-[R_T]\}
\ge
\frac{e^{-1/8}}{64}\frac{\rho^2 T}{\rho_0}.
\]
This yields the first claim after absorbing constants into $c_1$.

For the good instance, the action $x^{(0)}$ has mean
\[
(x^{(0)})^\top \theta_{\mathrm{good}}
=
\frac{1}{2}+\rho_0+\bar{\rho},
\]
whereas
\[
(x^{(\pm)})^\top \theta_{\mathrm{good}}
=
\frac{1}{2}
+
\frac{\rho_0(\rho_0+\bar{\rho})}{\sqrt{\rho_0^2+\bar{\rho}^2}}
<
\frac{1}{2}+\rho_0+\bar{\rho}.
\]
Thus $x^{(0)}$ is optimal at every round, and the deterministic policy that always plays $x^{(0)}$
has zero regret, yielding the second claim.

$D_{\mathrm{off}}$ has the same law across the good and hard instances, so any measurable summary
$S=\Phi(D_{\mathrm{off}})$ also has the same distribution. Hence no offline-only summary can
distinguish the easy instance from the hard family.
\end{proof}

\section{Notation}
\label{app:notation}

Table~\ref{tab:notation} collects recurrent symbols. The layerwise procedure is stated as
Algorithm~\ref{alg:ellipsoidal-minucb} in the main text.

\begin{table}[ht]
\centering
\caption{Main notation.}
\label{tab:notation}
\small
\setlength{\tabcolsep}{5pt}
\begin{tabular}{@{}p{0.26\linewidth}p{0.68\linewidth}@{}}
\toprule
\textbf{Symbol} & \textbf{Meaning} \\
\midrule
$T,K,d$ & Horizon, number of actions per round, feature dimension. \\
$\theta_*$ & Unknown online reward parameter; $\|\theta_*\|_2\le S$. \\
$\theta^\dagger$ & Offline regression parameter in $y_i=z_i^\top\theta^\dagger+\xi_i$. \\
$D_{\mathrm{off}},n_{\mathrm{off}}$ & Offline sample $\{(z_i,y_i)\}$ and its size. \\
$G_{\mathrm{off}},\hat\theta_{\mathrm{off}},\beta_{\mathrm{off}}(\delta)$ & Offline ridge design
$I+\sum_i z_i z_i^\top$, ridge estimator, and offline concentration radius. \\
$(M_{\mathrm{bias}},\rho)$ & Directional bias certificate: $\|\theta_*-\theta^\dagger\|_{M_{\mathrm{bias}}}\le\rho$. \\
$\cF_{t-1}$ & Filtration generated by past data and current contexts (Section~\ref{sec:setup}). \\
$\ell,L$ & SupLinUCB layer index; $L=\lceil\log_2 T\rceil$. \\
$\cH_{t,\ell}$ & History routed to layer $\ell$ before round $t$. \\
$A_{t,\ell},V_{t,\ell},b_{t,\ell}$ & Layerwise online Gram matrix, $V_{t,\ell}=I+A_{t,\ell}$, and response vector
$b_{t,\ell}$ on layer $\ell$ (Section~\ref{sec:algorithm}). \\
$\hat\theta^{\mathrm{on}}_{t,\ell},\hat\theta^{\mathrm{pool}}_{t,\ell}$ & Online and pooled ridge estimators on layer $\ell$. \\
$\beta^{\mathrm{on}}_{t,\ell},\gamma_{t,\ell}$ & Online SupLin radius and pooled martingale radius. \\
$\psi_{t,\ell}(x)$ & Bias routing term in the pooled radius (Section~\ref{sec:algorithm}). \\
$U^{\mathrm{on}},U^{\mathrm{pool}},U^{\min},L^{\max},w$ & Branch UCB/LCB scores and aggregated width. \\
$\delta_{\mathrm{off}},\delta_{\mathrm{on}},\delta_{t,\ell}^{\mathrm{on}},\delta_{t,\ell}^{\mathrm{pool}}$ & Offline and online union budgets; per-$(t,\ell)$ online and pooled budgets (Theorem~\ref{thm:main}). \\
$\widetilde\Lambda_T,\widetilde\Psi_T$ & Offline design log-det sum and path sum of $\psi$ (Theorem~\ref{thm:main}). \\
$A_\ell^{\mathrm{fin}}$ & Final Gram matrix on layer $\ell$ at horizon $T$. \\
$\Lambda_{\max},\Gamma_{\mathrm{pool}},\Gamma_{\mathrm{spec}}$ & Pooled log-det and spectral shorthands in Theorems~\ref{thm:main} and~\ref{thm:spectral-pooled-envelope}. \\
\bottomrule
\end{tabular}
\end{table}

\section{Proof of Proposition~\ref{prop:offline-region}}

\begin{proof}
Condition on the offline covariates $\{z_i\}_{i=1}^{n_{\mathrm{off}}}$. The imported
self-normalized ridge inequality states that, with probability at least $1-\delta$,
\[
\|\hat\theta_{\mathrm{off}}-\theta^\dagger\|_{G_{\mathrm{off}}}
\le
\beta_{\mathrm{off}}(\delta).
\]
On the same event, the directional bias certificate gives
\[
\|\theta_*-\theta^\dagger\|_{M_{\mathrm{bias}}}\le \rho.
\]
Write
\[
\theta_*-\hat\theta_{\mathrm{off}}
=
(\theta^\dagger-\hat\theta_{\mathrm{off}})+(\theta_*-\theta^\dagger)
=:u+v.
\]
Then
\[
\|u\|_{G_{\mathrm{off}}}
=
\|\hat\theta_{\mathrm{off}}-\theta^\dagger\|_{G_{\mathrm{off}}}
\le \beta_{\mathrm{off}}(\delta),
\qquad
\|v\|_{M_{\mathrm{bias}}}
=
\|\theta_*-\theta^\dagger\|_{M_{\mathrm{bias}}}
\le \rho.
\]
Hence
\[
\theta_*=\hat\theta_{\mathrm{off}}+u+v
\]
with $u$ and $v$ belonging to the two ellipsoids claimed in the proposition, which proves the region
inclusion.
Let
\[
\theta=\hat\theta_{\mathrm{off}}+u+v
\]
with $\|u\|_{G_{\mathrm{off}}}\le \beta_{\mathrm{off}}$ and $\|v\|_{M_{\mathrm{bias}}}\le \rho$.
Then
\[
x^\top \theta
=
x^\top \hat\theta_{\mathrm{off}} + x^\top u + x^\top v.
\]
By Cauchy--Schwarz in the $G_{\mathrm{off}}$ metric,
\[
x^\top u
=
(G_{\mathrm{off}}^{-1/2}x)^\top(G_{\mathrm{off}}^{1/2}u)
\le
\|x\|_{G_{\mathrm{off}}^{-1}}\|u\|_{G_{\mathrm{off}}}
\le
\beta_{\mathrm{off}}\|x\|_{G_{\mathrm{off}}^{-1}}.
\]
Likewise,
\[
x^\top v
=
(M_{\mathrm{bias}}^{-1/2}x)^\top(M_{\mathrm{bias}}^{1/2}v)
\le
\|x\|_{M_{\mathrm{bias}}^{-1}}\|v\|_{M_{\mathrm{bias}}}
\le
\rho\|x\|_{M_{\mathrm{bias}}^{-1}}.
\]
This yields the upper bound
\[
x^\top \theta
\le
x^\top\hat\theta_{\mathrm{off}}
+\beta_{\mathrm{off}}\|x\|_{G_{\mathrm{off}}^{-1}}
+\rho\|x\|_{M_{\mathrm{bias}}^{-1}}.
\]
Equality holds by choosing $u$ and $v$ aligned with the dual directions
$G_{\mathrm{off}}^{-1}x$ and $M_{\mathrm{bias}}^{-1}x$, respectively; hence the display gives the
exact support function.
\end{proof}

\begin{lemma}[Support function of the offline region]
\label{lem:support}
Under the event of Proposition~\ref{prop:offline-region}, every $x\in\R^d$ satisfies
\[
\sup_{\theta \in \cC_{\mathrm{off}}} x^{\top}\theta
=
x^{\top}\hat\theta_{\mathrm{off}}
+ \beta_{\mathrm{off}}\|x\|_{G_{\mathrm{off}}^{-1}}
+ \rho\|x\|_{M_{\mathrm{bias}}^{-1}}.
\]
\end{lemma}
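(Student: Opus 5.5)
The support function of $\cC_{\mathrm{off}}$ is a deterministic statement about a Minkowski sum of two ellipsoids, so the event of Proposition~\ref{prop:offline-region} only fixes $\beta_{\mathrm{off}}:=\beta_{\mathrm{off}}(\delta)$ and $\hat\theta_{\mathrm{off}}$ as given quantities. The plan is to prove the identity by matching an upper bound with an explicit maximizer.

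First, the support function of a Minkowski sum is additive. Since $\cC_{\mathrm{off}}=\{\hat\theta_{\mathrm{off}}\}+E_1+E_2$ with
\[
E_1=\{u:\|u\|_{G_{\mathrm{off}}}\le\beta_{\mathrm{off}}\},\qquad E_2=\{v:\|v\|_{M_{\mathrm{bias}}}\le\rho\},
\]
and the constraints on $u$ and $v$ are decoupled, we have
\[
\sup_{\theta\in\cC_{\mathrm{off}}}x^\top\theta
= x^\top\hat\theta_{\mathrm{off}}+\sup_{u\in E_1}x^\top u+\sup_{v\in E_2}x^\top v .
\]

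Second, each ellipsoidal supremum is evaluated separately. For a positive definite $P$ and radius $r\ge0$, Cauchy--Schwarz gives
\[
x^\top u=(P^{-1/2}x)^\top(P^{1/2}u)\le r\|x\|_{P^{-1}} .
\]
This is exactly the computation already carried out in the proof of Proposition~\ref{prop:offline-region}.

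Third, the upper bound is attained. If $x=0$, both sides equal $x^\top\hat\theta_{\mathrm{off}}$, so there is nothing to prove. Otherwise, choose
\[
u^\star=\beta_{\mathrm{off}}\,\frac{G_{\mathrm{off}}^{-1}x}{\|x\|_{G_{\mathrm{off}}^{-1}}},\qquad
v^\star=\rho\,\frac{M_{\mathrm{bias}}^{-1}x}{\|x\|_{M_{\mathrm{bias}}^{-1}}}.
\]
Then
\[
\|u^\star\|_{G_{\mathrm{off}}}^2=\beta_{\mathrm{off}}^2\,\frac{x^\top G_{\mathrm{off}}^{-1}G_{\mathrm{off}}G_{\mathrm{off}}^{-1}x}{\|x\|_{G_{\mathrm{off}}^{-1}}^2}=\beta_{\mathrm{off}}^2,
\qquad x^\top u^\star=\beta_{\mathrm{off}}\|x\|_{G_{\mathrm{off}}^{-1}},
\]
and the same computation gives $\|v^\star\|_{M_{\mathrm{bias}}}=\rho$ and $x^\top v^\star=\rho\|x\|_{M_{\mathrm{bias}}^{-1}}$. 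Hence $\theta=\hat\theta_{\mathrm{off}}+u^\star+v^\star\in\cC_{\mathrm{off}}$ achieves the bound, and the supremum is a maximum. This also covers the degenerate case $\rho=0$, where $E_2=\{0\}$ and the bias term vanishes.

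There is no real obstacle here. The only points that need care are the normalization when $x=0$ or $\rho=0$, and noting that the equality uses only positive definiteness of $G_{\mathrm{off}}\succeq I$ and $M_{\mathrm{bias}}\succ0$, so $P^{-1/2}$ is well defined in both cases. The high-probability event is needed only to interpret the supremum as a valid UCB for $x^\top\theta_*$, since on that event $\theta_*\in\cC_{\mathrm{off}}$.
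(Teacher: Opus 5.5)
Your proposal is correct and follows the paper's argument: the paper also bounds $x^\top u$ and $x^\top v$ by Cauchy--Schwarz in the $G_{\mathrm{off}}$ and $M_{\mathrm{bias}}$ metrics, and attains the bound by aligning $u$ and $v$ with $G_{\mathrm{off}}^{-1}x$ and $M_{\mathrm{bias}}^{-1}x$. Your explicit normalization of the maximizers and your treatment of the cases $x=0$ and $\rho=0$ fill in details that the paper leaves implicit.
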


\begin{proof}
This is exactly the support-function computation established in the proof of
Proposition~\ref{prop:offline-region}.
\end{proof}

\section{Technical Lemmas}
\label{app:self-normalized}

\begin{lemma}[Self-normalized martingale term]
\label{lem:self-normalized-noise}
Let $(\cF_s)_{s=0}^n$ be a filtration. Suppose $\phi_s \in \R^d$ is $\cF_{s-1}$-measurable with
$\|\phi_s\|_2\le 1$, and $\eta_s$ is conditionally $\sigma$-sub-Gaussian given $\cF_{s-1}$. Define
\[
Z_t:=\sum_{s=1}^t \phi_s \eta_s,
\qquad
V_t:=I+\sum_{s=1}^t \phi_s\phi_s^\top.
\]
Then, for every $\delta\in(0,1)$,
\[
\Prb\!\left(
\|Z_n\|_{V_n^{-1}}
\le
\sigma\sqrt{2\log\frac{\det(V_n)^{1/2}}{\delta}}
\right)
\ge 1-\delta.
\]
\end{lemma}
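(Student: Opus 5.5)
This is the fixed-time version of the self-normalized bound of Abbasi-Yadkori et al., and I will prove it with the method of mixtures. We will in fact obtain the stronger uniform-in-time statement and then specialize to $t=n$.

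\emph{Step 1 (exponential supermartingale).} For each fixed $\lambda\in\R^d$ define
\[
D_t^\lambda:=\exp\!\Bigl(\sum_{s=1}^t \Bigl[\frac{\ip{\lambda}{\phi_s}\eta_s}{\sigma}-\frac12\ip{\lambda}{\phi_s}^2\Bigr]\Bigr).
\]
Because $\phi_s$ is $\cF_{s-1}$-measurable and $\eta_s$ is conditionally $\sigma$-sub-Gaussian, we have
\[
\E\bigl[\exp(\ip{\lambda}{\phi_s}\eta_s/\sigma)\mid\cF_{s-1}\bigr]\le\exp\bigl(\ip{\lambda}{\phi_s}^2/2\bigr).
\]
Hence $D_t^\lambda$ is a nonnegative supermartingale with $\E[D_t^\lambda]\le 1$. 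In the notation of the statement,
\[
D_t^\lambda=\exp\bigl(\ip{\lambda}{Z_t/\sigma}-\tfrac12\|\lambda\|^2_{V_t-I}\bigr).
\]

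\emph{Step 2 (mixing over $\lambda$).} Integrate $\lambda$ against the standard Gaussian density $h=\mathcal N(0,I)$ and set $\bar D_t:=\int D_t^\lambda\,h(\lambda)\,d\lambda$. By Tonelli, $\bar D_t$ is still a nonnegative supermartingale with $\E\bar D_t\le 1$. Completing the square in the Gaussian integral gives the closed form
\[
\bar D_t=\det(V_t)^{-1/2}\exp\Bigl(\tfrac12\|Z_t/\sigma\|^2_{V_t^{-1}}\Bigr).
\]
This computation is the one place requiring care: the quadratic form in the exponent is $-\tfrac12\lambda^\top V_t\lambda+\lambda^\top Z_t/\sigma$, because the prior contributes the identity and $V_t=I+\sum_s\phi_s\phi_s^\top$.

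\emph{Step 3 (Markov / Ville).} For the fixed index $n$, Markov's inequality gives $\Prb(\bar D_n\ge 1/\delta)\le\delta$. Applying Ville's maximal inequality to the supermartingale $\bar D_t$ would instead give the same bound uniformly over all $t$ via a stopping-time argument; this is not needed here but is used later for the masked layerwise subsequences. On the complement event,
\[
\tfrac12\|Z_n\|^2_{V_n^{-1}}/\sigma^2<\log\bigl(\det(V_n)^{1/2}/\delta\bigr).
\]
Rearranging gives exactly the stated bound. The main obstacle is purely the Gaussian-integral identity in Step 2, together with verifying that Tonelli preserves the supermartingale property. Both are routine, and the boundedness $\|\phi_s\|_2\le1$ is not even needed for this statement.
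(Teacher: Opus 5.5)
Your proposal is correct and matches the paper's proof essentially step for step: a fixed-$\nu$ exponential supermartingale, mixing against the standard Gaussian (whose identity covariance combines with $V_n-I$ into $V_n$), the closed-form Gaussian integral, and Markov's inequality at the fixed time $n$. Your Ville aside is not needed anywhere in the paper, which never uses a time-uniform version and instead handles the layerwise subsequences with fixed-horizon masked bounds and a union bound over $(t,\ell)$.
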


\begin{proof}
Fix $\nu\in\R^d$ and define
\[
M_t(\nu)
:=
\exp\left(
\sigma^{-1}\ip{\nu}{Z_t}
-\frac12 \nu^\top(V_t-I)\nu
\right).
\]
Since $\phi_t$ is $\cF_{t-1}$-measurable and $\eta_t$ is conditionally $\sigma$-sub-Gaussian,
\[
\E\!\left[
\exp\left(\sigma^{-1}\eta_t\ip{\nu}{\phi_t}\right)\middle|\cF_{t-1}
\right]
\le
\exp\left(\frac12\ip{\nu}{\phi_t}^2\right).
\]
Therefore
\[
\E\!\left[\frac{M_t(\nu)}{M_{t-1}(\nu)}\middle|\cF_{t-1}\right]\le 1,
\]
so $(M_t(\nu))_{t=0}^n$ is a nonnegative supermartingale and $\E[M_n(\nu)]\le 1$.

Now mix over $\nu$ with the Gaussian density
\[
f(\nu)
:=
\frac{1}{(2\pi)^{d/2}}
\exp\left(-\frac12\nu^\top\nu\right),
\]
and define
\[
\widetilde M_n:=\int_{\R^d} M_n(\nu)f(\nu)\,d\nu.
\]
By Fubini,
\[
\E[\widetilde M_n]
=
\int_{\R^d} \E[M_n(\nu)]f(\nu)\,d\nu
\le
\int_{\R^d} f(\nu)\,d\nu
=1.
\]
Combining the definitions of $M_n(\nu)$ and $f(\nu)$ gives
\[
\widetilde M_n
=
\frac{1}{(2\pi)^{d/2}}
\int_{\R^d}
\exp\left(
\sigma^{-1}\ip{\nu}{Z_n}-\frac12\nu^\top V_n\nu
\right)d\nu.
\]
Completing the square,
\[
\sigma^{-1}\ip{\nu}{Z_n}-\frac12\nu^\top V_n\nu
=
-\frac12\left\|\nu-\sigma^{-1}V_n^{-1}Z_n\right\|_{V_n}^2
+ \frac{1}{2\sigma^2}\|Z_n\|_{V_n^{-1}}^2.
\]
Hence
\[
\widetilde M_n
=
\frac{1}{(2\pi)^{d/2}}
\exp\left(\frac{1}{2\sigma^2}\|Z_n\|_{V_n^{-1}}^2\right)
\int_{\R^d}
\exp\left(
-\frac12\left\|\nu-\sigma^{-1}V_n^{-1}Z_n\right\|_{V_n}^2
\right)d\nu.
\]
The remaining Gaussian integral equals $(2\pi)^{d/2}\det(V_n)^{-1/2}$, so
\[
\widetilde M_n
=
\frac{1}{\det(V_n)^{1/2}}
\exp\left(\frac{1}{2\sigma^2}\|Z_n\|_{V_n^{-1}}^2\right).
\]
Markov's inequality and $\E[\widetilde M_n]\le 1$ imply
\[
\Prb\!\left(\widetilde M_n\ge \delta^{-1}\right)\le \delta.
\]
Rearranging the event $\widetilde M_n\le \delta^{-1}$ yields the claim.
\end{proof}

\begin{lemma}[Self-normalized martingale with an SPD base matrix]
\label{lem:self-normalized-noise-base}
Let $(\cF_s)_{s=0}^n$ be a filtration. Suppose $\phi_s \in \R^d$ is $\cF_{s-1}$-measurable with
$\|\phi_s\|_2\le 1$, and $\eta_s$ is conditionally $\sigma$-sub-Gaussian given $\cF_{s-1}$.
Let $H\in\mathbb S_{++}^d$ be deterministic or, more generally, $\cF_0$-measurable. Define
\[
Z_t:=\sum_{s=1}^t \phi_s \eta_s,
\qquad
V_t:=H+\sum_{s=1}^t \phi_s\phi_s^\top.
\]
Then, for every $\delta\in(0,1)$,
\[
\Prb\!\left(
\|Z_n\|_{V_n^{-1}}
\le
\sigma\sqrt{2\log\frac{\det(V_n)^{1/2}}{\det(H)^{1/2}\delta}}
\right)
\ge 1-\delta.
\]
\end{lemma}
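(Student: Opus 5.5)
The plan is to repeat the method-of-mixtures argument of Lemma~\ref{lem:self-normalized-noise}, with the identity-covariance Gaussian prior replaced by one of covariance $H^{-1}$. The first step is to fix $\nu\in\R^d$ and define
\[
M_t(\nu):=\exp\Bigl(\sigma^{-1}\ip{\nu}{Z_t}-\tfrac12\,\nu^\top(V_t-H)\nu\Bigr),
\]
where $V_t-H=\sum_{s\le t}\phi_s\phi_s^\top$ does not depend on the base matrix. Since $\phi_t$ is $\cF_{t-1}$-measurable and $\eta_t$ is conditionally $\sigma$-sub-Gaussian, the one-step ratio has conditional expectation at most $1$, exactly as before. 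Hence $(M_t(\nu))_t$ is a nonnegative supermartingale and $\E[M_n(\nu)\mid\cF_0]\le 1$.

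The second step is the mixing. Because $H$ is $\cF_0$-measurable, I will take the prior density conditionally on $\cF_0$:
\[
f_H(\nu)=(2\pi)^{-d/2}\det(H)^{1/2}\exp\bigl(-\tfrac12\nu^\top H\nu\bigr).
\]
Set $\widetilde M_n:=\int M_n(\nu)f_H(\nu)\,d\nu$. Conditional Fubini then gives $\E[\widetilde M_n\mid\cF_0]\le 1$, and hence $\E[\widetilde M_n]\le1$. The step that needs care here is that $f_H$ is $\cF_0$-measurable: the supermartingale bound holds for every fixed $\nu$ conditionally on $\cF_0$, so integrating against an $\cF_0$-measurable density is legitimate. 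This is the only real subtlety beyond the deterministic case.

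The final step is the Gaussian integral. The exponent combines to $\sigma^{-1}\ip{\nu}{Z_n}-\tfrac12\nu^\top V_n\nu$. Completing the square in the $V_n$ metric gives
\[
\widetilde M_n=\Bigl(\tfrac{\det H}{\det V_n}\Bigr)^{1/2}\exp\Bigl(\tfrac{1}{2\sigma^2}\|Z_n\|_{V_n^{-1}}^2\Bigr).
\]
Markov's inequality gives $\Prb(\widetilde M_n\ge\delta^{-1})\le\delta$. On the complement, taking logarithms yields
\[
\|Z_n\|_{V_n^{-1}}^2\le 2\sigma^2\log\frac{\det(V_n)^{1/2}}{\det(H)^{1/2}\delta},
\]
which is the claim. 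As a sanity check, the argument reduces to Lemma~\ref{lem:self-normalized-noise} when $H=I$.
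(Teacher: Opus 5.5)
Your proposal is correct and follows essentially the same route as the paper's proof: the supermartingale $M_t(\nu)$ built from $V_t-H$, mixing against the Gaussian density with covariance $H^{-1}$ (justified by conditional Fubini), completing the square to obtain $(\det H/\det V_n)^{1/2}\exp\bigl(\|Z_n\|_{V_n^{-1}}^2/(2\sigma^2)\bigr)$, and then Markov's inequality. The only difference is that you condition on $\cF_0$ where the paper conditions on $H$, which does not affect the argument.
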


\begin{proof}
Condition on $H$ and fix $\nu\in\R^d$. Define
\[
M_t(\nu)
:=
\exp\left(
\sigma^{-1}\ip{\nu}{Z_t}
-\frac12 \nu^\top(V_t-H)\nu
\right).
\]
Exactly as in the proof of Lemma~\ref{lem:self-normalized-noise},
\[
\E\!\left[\frac{M_t(\nu)}{M_{t-1}(\nu)}\middle|\cF_{t-1}\right]\le 1,
\]
so $(M_t(\nu))_{t=0}^n$ is a nonnegative supermartingale and
$\E[M_n(\nu)\mid H]\le 1$.

Now mix over $\nu$ with the Gaussian density having covariance $H^{-1}$,
\[
f_H(\nu)
:=
\frac{\det(H)^{1/2}}{(2\pi)^{d/2}}
\exp\left(-\frac12\nu^\top H\nu\right),
\]
and define
\[
\widetilde M_n:=\int_{\R^d} M_n(\nu)f_H(\nu)\,d\nu.
\]
By Fubini,
\[
\E[\widetilde M_n\mid H]
=
\int_{\R^d}\E[M_n(\nu)\mid H]f_H(\nu)\,d\nu
\le 1.
\]
Combining the definitions gives
\[
\widetilde M_n
=
\frac{\det(H)^{1/2}}{(2\pi)^{d/2}}
\int_{\R^d}
\exp\left(
\sigma^{-1}\ip{\nu}{Z_n}-\frac12\nu^\top V_n\nu
\right)d\nu.
\]
Completing the square,
\[
\sigma^{-1}\ip{\nu}{Z_n}-\frac12\nu^\top V_n\nu
=
-\frac12\left\|\nu-\sigma^{-1}V_n^{-1}Z_n\right\|_{V_n}^2
+ \frac{1}{2\sigma^2}\|Z_n\|_{V_n^{-1}}^2.
\]
The remaining Gaussian integral equals $(2\pi)^{d/2}\det(V_n)^{-1/2}$, so
\[
\widetilde M_n
=
\frac{\det(H)^{1/2}}{\det(V_n)^{1/2}}
\exp\left(\frac{1}{2\sigma^2}\|Z_n\|_{V_n^{-1}}^2\right).
\]
Markov's inequality conditional on $H$ implies
\[
\Prb\!\left(\widetilde M_n\ge \delta^{-1}\middle|H\right)\le \delta.
\]
Rearranging the event $\widetilde M_n\le \delta^{-1}$ yields the claim.
\end{proof}

\begin{proposition}[Ridge confidence ellipsoid]
\label{prop:derive-self-normalized}
Under the assumptions of Lemma~\ref{lem:self-normalized-noise}, let
\[
Y_s=\phi_s^\top \theta + \eta_s,
\qquad
\hat\theta_n:=V_n^{-1}\sum_{s=1}^n \phi_sY_s,
\qquad
\|\theta\|_2\le S.
\]
Then, with probability at least $1-\delta$,
\[
\|\hat\theta_n-\theta\|_{V_n}
\le
\sigma\sqrt{2\log\frac{\det(V_n)^{1/2}}{\delta}}
+S.
\]
\end{proposition}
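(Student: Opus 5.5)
The plan is to reduce the claim to Lemma~\ref{lem:self-normalized-noise} through the standard ridge error decomposition, separating the noise part from the regularization bias. Substituting $Y_s=\phi_s^\top\theta+\eta_s$ into the definition of $\hat\theta_n$ gives
\[
\hat\theta_n
= V_n^{-1}\Bigl(\sum_{s=1}^n \phi_s\phi_s^\top\Bigr)\theta + V_n^{-1}Z_n
= V_n^{-1}(V_n - I)\theta + V_n^{-1}Z_n .
\]
Hence
\[
\hat\theta_n-\theta = V_n^{-1}Z_n - V_n^{-1}\theta,
\]
where $Z_n=\sum_{s=1}^n\phi_s\eta_s$ is exactly the martingale of Lemma~\ref{lem:self-normalized-noise}.

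The next step takes the $V_n$-norm and applies the triangle inequality:
\[
\|\hat\theta_n-\theta\|_{V_n}\le \|V_n^{-1}Z_n\|_{V_n}+\|V_n^{-1}\theta\|_{V_n}
=\|Z_n\|_{V_n^{-1}}+\|\theta\|_{V_n^{-1}} .
\]
This uses the identity $\|V_n^{-1}w\|_{V_n}^2=w^\top V_n^{-1}w=\|w\|_{V_n^{-1}}^2$.

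For the bias term, $V_n\succeq I$ implies $V_n^{-1}\preceq I$. Therefore
\[
\|\theta\|_{V_n^{-1}}\le\|\theta\|_2\le S .
\]

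For the noise term, the hypotheses on $\phi_s$ and $\eta_s$ are exactly those of Lemma~\ref{lem:self-normalized-noise}. On an event of probability at least $1-\delta$ it gives
\[
\|Z_n\|_{V_n^{-1}}\le\sigma\sqrt{2\log\bigl(\det(V_n)^{1/2}/\delta\bigr)} .
\]
Adding the two bounds on this event yields the claim.

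No step here is a real obstacle. The only point needing care is the sign and placement of the regularization residual $-V_n^{-1}\theta$, together with the norm identity $\|V_n^{-1}w\|_{V_n}=\|w\|_{V_n^{-1}}$. The probabilistic content is entirely imported from Lemma~\ref{lem:self-normalized-noise}. That lemma is stated for the fixed index $n$, which is all this proposition requires, so no stopping-time or uniform-in-time refinement is needed.
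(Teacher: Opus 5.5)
Your proposal is correct and follows the paper's proof essentially step for step. You use the identity $V_n(\hat\theta_n-\theta)=Z_n-\theta$ and the triangle inequality in the $V_n^{-1}$-norm, bound the regularization term via $V_n^{-1}\preceq I$, and import the noise bound from Lemma~\ref{lem:self-normalized-noise} at the fixed index $n$.
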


\begin{proof}
Using $Y_s=\phi_s^\top\theta+\eta_s$,
\[
V_n\hat\theta_n
=
\sum_{s=1}^n \phi_sY_s
=
\sum_{s=1}^n \phi_s\phi_s^\top\theta+\sum_{s=1}^n \phi_s\eta_s
=
(V_n-I)\theta+Z_n.
\]
Therefore
\[
V_n(\hat\theta_n-\theta)=Z_n-\theta.
\]
Taking the $V_n^{-1}$ norm and applying the triangle inequality,
\[
\|\hat\theta_n-\theta\|_{V_n}
=
\|Z_n-\theta\|_{V_n^{-1}}
\le
\|Z_n\|_{V_n^{-1}}+\|\theta\|_{V_n^{-1}}.
\]
Lemma~\ref{lem:self-normalized-noise} controls the first term. Since $V_n\succeq I$, we have
$V_n^{-1}\preceq I$, hence
\[
\|\theta\|_{V_n^{-1}}
\le
\|\theta\|_2
\le
S.
\]
Combining the two bounds proves the result.
\end{proof}

\begin{lemma}[Predictable masks]
\label{lem:subsequence-self-normalized}
Let $(\cF_s)_{s=0}^T$ be a filtration. Suppose $\phi_s$ is $\cF_{s-1}$-measurable and $\eta_s$ is
conditionally $\sigma$-sub-Gaussian given $\cF_{s-1}$, and $\|\phi_s\|_2\le 1$.
Let $H\in\mathbb S_{++}^d$ be deterministic or, more generally, $\cF_0$-measurable.
Let $m_s\in[0,1]$ be
$\cF_{s-1}$-measurable. Define
\[
\widetilde Z_t := \sum_{s=1}^t m_s \phi_s\eta_s,
\qquad
\widetilde V_t := H+\sum_{s=1}^t m_s^2 \phi_s\phi_s^\top.
\]
Then, with probability at least $1-\delta$,
\[
\|\widetilde Z_T\|_{\widetilde V_T^{-1}}
\le
\sigma\sqrt{2\log\frac{\det(\widetilde V_T)^{1/2}}{\det(H)^{1/2}\delta}}.
\]
\end{lemma}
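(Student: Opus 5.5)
The plan is to reduce the masked statement to Lemma~\ref{lem:self-normalized-noise-base} by treating the masked features as new features. Set $\tilde\phi_s := m_s\phi_s$. Since $m_s$ and $\phi_s$ are both $\cF_{s-1}$-measurable, $\tilde\phi_s$ is $\cF_{s-1}$-measurable. Since $m_s\in[0,1]$, we also have $\|\tilde\phi_s\|_2\le \|\phi_s\|_2\le 1$. Moreover $m_s\phi_s\eta_s=\tilde\phi_s\eta_s$ and $m_s^2\phi_s\phi_s^\top=\tilde\phi_s\tilde\phi_s^\top$. Hence $\widetilde Z_t=\sum_{s\le t}\tilde\phi_s\eta_s$ and $\widetilde V_t=H+\sum_{s\le t}\tilde\phi_s\tilde\phi_s^\top$, which are exactly the objects $Z_t,V_t$ of Lemma~\ref{lem:self-normalized-noise-base} built from the sequence $(\tilde\phi_s)$.

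The noise assumption is unchanged: $\eta_s$ is still conditionally $\sigma$-sub-Gaussian given $\cF_{s-1}$, and $H$ is $\cF_0$-measurable and SPD. Applying Lemma~\ref{lem:self-normalized-noise-base} with $n=T$ then gives the claimed bound directly.

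To be safe, I would also re-check the one step where the mask enters the supermartingale argument. For fixed $\nu$, the increment ratio is $\exp(\sigma^{-1}\eta_t m_t\langle\nu,\phi_t\rangle-\tfrac12 m_t^2\langle\nu,\phi_t\rangle^2)$. Conditioning on $\cF_{t-1}$, the coefficient $m_t\langle\nu,\phi_t\rangle$ is a fixed scalar, so sub-Gaussianity bounds the conditional expectation by $1$. The Gaussian mixture with covariance $H^{-1}$ and the Markov step then go through verbatim.

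I expect no real obstacle. The only point needing care is measurability: the masks must be predictable, i.e.\ $\cF_{s-1}$-measurable, which is exactly what makes the SupLinUCB layer indicators $m_s=\mathbf 1\{\ell_s=\ell\}$ admissible by Assumption~\ref{asm:linear}. Zero masks simply contribute nothing to either $\widetilde Z$ or $\widetilde V$, so they need no special treatment.
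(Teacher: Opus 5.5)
Your proposal is correct and is essentially the paper's proof: you absorb the mask into the feature via $\tilde\phi_s := m_s\phi_s$, check that it is $\cF_{s-1}$-measurable with $\|\tilde\phi_s\|_2\le 1$, and then invoke Lemma~\ref{lem:self-normalized-noise-base} with base matrix $H$. Your re-check of the supermartingale increment is a valid sanity check, but it is not needed once that lemma is applied.
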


\begin{proof}
Set $\widetilde\phi_s:=m_s\phi_s$. Since $m_s$ and $\phi_s$ are both $\cF_{s-1}$-measurable,
$\widetilde\phi_s$ is predictable. Moreover $\|\widetilde\phi_s\|_2\le \|\phi_s\|_2\le 1$ because
$m_s\in[0,1]$. Applying Lemma~\ref{lem:self-normalized-noise-base} to the predictable pair
$(\widetilde\phi_s,\eta_s)$ with base matrix $H$ yields the claim.
\end{proof}

\section{Confidence Events and Elimination Lemmas}

This section supplies the entire logical chain behind the main theorem: simultaneous confidence,
combined UCB and LCB validity, survival of the optimal arm, and the stopping-layer regret bound.

For each round let
\[
a_t^\star \in \argmax_{a\in[K]} x_{t,a}^\top\theta_*,
\qquad
\Delta_t(a):=x_{t,a_t^\star}^\top\theta_* - x_{t,a}^\top\theta_*.
\]
Let $\cA_{t,\ell}$ denote the set of arms surviving through layer $\ell$ on round $t$, and let
$\ell_t$ be the stopping layer on round $t$. Define the widths
\[
w_{t,\ell}(a)
:=
U_{t,\ell}^{\min}(a)-L_{t,\ell}^{\max}(a),
\qquad
w_{t,\ell}^{\mathrm{on}}(a)
:=
U_{t,\ell}^{\mathrm{on}}(a)-L_{t,\ell}^{\mathrm{on}}(a),
\]
and similarly
$w_{t,\ell}^{\mathrm{pool}}(a):=U_{t,\ell}^{\mathrm{pool}}(a)-L_{t,\ell}^{\mathrm{pool}}(a)$.

\begin{lemma}[Width comparison]
\label{lem:width-comparison}
For every round, layer, and action,
\[
w_{t,\ell}(a)\le \min\{w_{t,\ell}^{\mathrm{on}}(a),w_{t,\ell}^{\mathrm{pool}}(a)\}
\le w_{t,\ell}^{\mathrm{on}}(a),
\]
and
\[
w_{t,\ell}(a)\le w_{t,\ell}^{\mathrm{pool}}(a)
=2\,\mathrm{rad}^{\mathrm{pool}}_{t,\ell}(x_{t,a}).
\]
\end{lemma}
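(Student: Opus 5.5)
The plan is to argue directly from the definitions of the aggregated scores in \eqref{eq:branch-aggregation}, without any probabilistic input; the statement is purely algebraic.

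\textbf{Step 1: compare the aggregated scores with each branch.} Since $U^{\min}_{t,\ell}(a)=\min\{U^{\mathrm{on}}_{t,\ell}(a),U^{\mathrm{pool}}_{t,\ell}(a)\}$, it is at most either branch UCB. Symmetrically, $L^{\max}_{t,\ell}(a)=\max\{L^{\mathrm{on}}_{t,\ell}(a),L^{\mathrm{pool}}_{t,\ell}(a)\}$ is at least either branch LCB. Subtracting gives
\[
w_{t,\ell}(a)\le U^{\mathrm{on}}_{t,\ell}(a)-L^{\mathrm{on}}_{t,\ell}(a)=w^{\mathrm{on}}_{t,\ell}(a),
\qquad
w_{t,\ell}(a)\le U^{\mathrm{pool}}_{t,\ell}(a)-L^{\mathrm{pool}}_{t,\ell}(a)=w^{\mathrm{pool}}_{t,\ell}(a).
\]
Taking the minimum of the two right-hand sides yields the first chain, and the trailing inequality $\min\{\cdot,\cdot\}\le w^{\mathrm{on}}_{t,\ell}(a)$ is trivial.

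\textbf{Step 2: identify the pooled width.} The pooled LCB is defined symmetrically to \eqref{eq:pooled-ucb}, namely $L^{\mathrm{pool}}_{t,\ell}(a)=x_{t,a}^\top\hat\theta^{\mathrm{pool}}_{t,\ell}-\mathrm{rad}^{\mathrm{pool}}_{t,\ell}(x_{t,a})$. The estimator term therefore cancels in the difference, and
\[
w^{\mathrm{pool}}_{t,\ell}(a)=2\,\mathrm{rad}^{\mathrm{pool}}_{t,\ell}(x_{t,a}).
\]
Combined with Step 1, this gives the second display.

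\textbf{Remark on the pooled-branch interpretation.} There is no genuine obstacle here. The only point needing care is that $w_{t,\ell}(a)$ could in principle be negative when the two branch intervals are disjoint. The inequalities remain valid in that case, since they are upper bounds only.
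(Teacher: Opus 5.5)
Your proof is correct and takes essentially the same approach as the paper. The paper invokes the elementary inequality $\min\{u_1,u_2\}-\max\{\ell_1,\ell_2\}\le\min\{u_1-\ell_1,u_2-\ell_2\}$, which is exactly what your Step 1 establishes, and then reads off $w^{\mathrm{pool}}_{t,\ell}(a)=2\,\mathrm{rad}^{\mathrm{pool}}_{t,\ell}(x_{t,a})$ from the symmetric pooled UCB--LCB pair, as in your Step 2.
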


\begin{proof}
By definition,
\[
w_{t,\ell}(a)
=
\min\{U_{t,\ell}^{\mathrm{on}}(a),U_{t,\ell}^{\mathrm{pool}}(a)\}
-\max\{L_{t,\ell}^{\mathrm{on}}(a),L_{t,\ell}^{\mathrm{pool}}(a)\}.
\]
For any numbers $u_1,u_2,\ell_1,\ell_2$,
\[
\min\{u_1,u_2\}-\max\{\ell_1,\ell_2\}
\le
\min\{u_1-\ell_1,u_2-\ell_2\}.
\]
Applying this with the online and pooled endpoints proves the first display. The second follows from
the explicit form of the pooled UCB--LCB pair.
\end{proof}

Choose confidence levels so that
\[
\sum_{t=1}^T\sum_{\ell=0}^{\lceil \log_2 T\rceil}\delta_{t,\ell}^{\mathrm{on}}
\le \delta_{\mathrm{on}},
\qquad
\delta_{\mathrm{off}}+\delta_{\mathrm{on}}
+\sum_{t=1}^T\sum_{\ell=0}^{\lceil \log_2 T\rceil}\delta_{t,\ell}^{\mathrm{pool}}
\le T^{-2},
\]
abbreviate $\beta_{\mathrm{off}}:=\beta_{\mathrm{off}}(\delta_{\mathrm{off}})$, set
\[
L_T:=\log\!\Bigl(\frac{2KT\log T}{\delta_{\mathrm{on}}}\Bigr).
\]
Also define
\[
\cE_{t,\ell}^{\mathrm{on}}
:=
\left\{
\|b_{t,\ell}-A_{t,\ell}\theta_*\|_{V_{t,\ell}^{-1}}
\le
\beta^{\mathrm{on}}_{t,\ell}-S
\right\}.
\]
Define the events
\[
\cE_{\mathrm{off}}
:=
\left\{\|\hat\theta_{\mathrm{off}}-\theta^\dagger\|_{G_{\mathrm{off}}}\le \beta_{\mathrm{off}}\right\},
\]
\[
\cE_{t,\ell}^{\mathrm{pool}}
:=
\left\{\|b_{t,\ell}-A_{t,\ell}\theta_*\|_{(A_{t,\ell}+G_{\mathrm{off}})^{-1}}\le \gamma_{t,\ell}\right\},
\]
\[
\cE_{\mathrm{on}}
:=
\bigcap_{t=1}^T\bigcap_{\ell=0}^{\lceil \log_2 T\rceil}\cE_{t,\ell}^{\mathrm{on}},
\]
and finally set
\[
\cE
:=
\cE_{\mathrm{on}}
\cap
\cE_{\mathrm{off}}
\cap
\bigcap_{t=1}^T\bigcap_{\ell=0}^{\lceil \log_2 T\rceil}\cE_{t,\ell}^{\mathrm{pool}}.
\]

\begin{lemma}[Simultaneous confidence event]
\label{lem:joint-event}
We have $\Prb(\cE)\ge 1-T^{-2}$.
\end{lemma}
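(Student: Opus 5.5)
The plan is a union bound over the three families of events in $\cE$:
\[
\Prb(\cE^c)\le \Prb(\cE_{\mathrm{off}}^c)+\Prb(\cE_{\mathrm{on}}^c)+\sum_{t,\ell}\Prb\bigl((\cE_{t,\ell}^{\mathrm{pool}})^c\bigr).
\]
We will show that these three pieces are at most $\delta_{\mathrm{off}}$, $\delta_{\mathrm{on}}$ and $\sum_{t,\ell}\delta^{\mathrm{pool}}_{t,\ell}$ respectively. The budget constraint then gives $\Prb(\cE^c)\le T^{-2}$.

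\textbf{Offline event.} We condition on the non-adaptive covariates $\{z_i\}$, order the offline samples as a sequence, and apply Proposition~\ref{prop:derive-self-normalized} with $\phi_i=z_i$, $Y_i=y_i$ and $\theta=\theta^\dagger$. The noise $\xi_i$ is conditionally $\sigma$-sub-Gaussian given $z_i$, so this is legitimate. It yields $\|\hat\theta_{\mathrm{off}}-\theta^\dagger\|_{G_{\mathrm{off}}}\le\beta_{\mathrm{off}}(\delta_{\mathrm{off}})$ with probability at least $1-\delta_{\mathrm{off}}$. Integrating over the covariates removes the conditioning.

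\textbf{Online and pooled events, fixed $(t,\ell)$.} We use Lemma~\ref{lem:subsequence-self-normalized} with the full online filtration $(\cF_s)$, features $\phi_s=x_{s,a_s}$, and mask $m_s=\mathbf 1\{s<t,\ \ell_s=\ell\}$. By Assumption~\ref{asm:linear}, both the played feature and the layer assignment are $\cF_{s-1}$-measurable, so this mask is predictable. Since $r_s=x_{s,a_s}^\top\theta_*+\eta_s$, the masked sums satisfy
\[
\widetilde Z_T=b_{t,\ell}-A_{t,\ell}\theta_*, \qquad \widetilde V_T=H+A_{t,\ell}.
\]
For the online event we take $H=I$. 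The bound then reads $\sigma\sqrt{2\log(\det V_{t,\ell}^{1/2}/\delta^{\mathrm{on}}_{t,\ell})}=\beta^{\mathrm{on}}_{t,\ell}-S$, so $\Prb((\cE^{\mathrm{on}}_{t,\ell})^c)\le\delta^{\mathrm{on}}_{t,\ell}$. Summing over $(t,\ell)$ gives $\Prb(\cE_{\mathrm{on}}^c)\le\delta_{\mathrm{on}}$.

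For the pooled event we take $H=G_{\mathrm{off}}$. The bound is then exactly $\gamma_{t,\ell}$ from \eqref{eq:pooled-gamma}. Lemma~\ref{lem:subsequence-self-normalized} requires $H$ to be $\cF_0$-measurable. We arrange this by enlarging $\cF_0$ to contain the offline data, which is harmless: the offline data are independent of the future online noise, so $\eta_s$ remains conditionally sub-Gaussian given the enlarged $\cF_{s-1}$.

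\textbf{Main obstacle.} The delicate point is that the radii $\beta^{\mathrm{on}}_{t,\ell}$ and $\gamma_{t,\ell}$ are evaluated at the random, data-dependent Gram matrix of a random subsequence. The Lemma's bound is stated at the terminal matrix $\widetilde V_T$, which is random, so this is already covered by the mixture argument and no stopping-time argument is needed. Two things must still be checked. First, the mask must truly be predictable: whether round $s$ lands in layer $\ell$ is determined before $r_s$ is revealed. Second, $\delta^{\mathrm{on}}_{t,\ell}$ and $\delta^{\mathrm{pool}}_{t,\ell}$ must be deterministic, so that the union bound over $(t,\ell)$ is valid.
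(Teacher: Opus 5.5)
Your proposal is correct and follows essentially the same route as the paper: a union bound over $\cE_{\mathrm{off}}$, the online events, and the pooled events, with the latter two obtained from Lemma~\ref{lem:subsequence-self-normalized} using the predictable mask $\mathbf{1}\{s<t,\ s\in\cH_{t,\ell}\}$ and base matrices $H=I$ and $H=G_{\mathrm{off}}$ respectively. Your explicit enlargement of $\cF_0$ to contain $D_{\mathrm{off}}$ is a detail the paper leaves implicit; it is genuinely needed, since the actions and hence the masks depend on $\hat\theta_{\mathrm{off}}$. It rests on the tacit assumption, which you correctly flag, that the online noise remains conditionally sub-Gaussian given the offline sample.
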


\begin{proof}
By the offline ridge bound,
\[
\Prb(\cE_{\mathrm{off}})\ge 1-\delta_{\mathrm{off}}.
\]
Fix $(t,\ell)$ and define the predictable mask
\[
m_s^{(t,\ell)}:=\mathbf{1}\{s<t,\ s\in\cH_{t,\ell}\},
\qquad s=1,\dots,T.
\]
Because the action set at round $s$ is revealed before the layer assignment and action choice, the
event $\{s\in\cH_{t,\ell}\}$ is determined by information available at time $s-1$ together with the
current contexts; hence $m_s^{(t,\ell)}$ is $\cF_{s-1}$-measurable. Therefore
\[
b_{t,\ell}-A_{t,\ell}\theta_*
=
\sum_{s=1}^{T} m_s^{(t,\ell)}x_{s,a_s}\eta_s,
\qquad
\eta_s:=r_s-x_{s,a_s}^\top\theta_*,
\]
and
\[
A_{t,\ell}+G_{\mathrm{off}}
=
G_{\mathrm{off}}+\sum_{s=1}^{T} m_s^{(t,\ell)}x_{s,a_s}x_{s,a_s}^\top.
\]
Applying Lemma~\ref{lem:subsequence-self-normalized} with
$\phi_s=x_{s,a_s}$, mask $m_s^{(t,\ell)}$, and base matrix $H=G_{\mathrm{off}}$ gives
\[
\|b_{t,\ell}-A_{t,\ell}\theta_*\|_{(A_{t,\ell}+G_{\mathrm{off}})^{-1}}
\le
\sigma\sqrt{
2\log
\frac{
\det(A_{t,\ell}+G_{\mathrm{off}})^{1/2}
}{
\det(G_{\mathrm{off}})^{1/2}\delta_{t,\ell}^{\mathrm{pool}}
}
}
\]
with probability at least $1-\delta_{t,\ell}^{\mathrm{pool}}$, which is exactly the event
$\cE_{t,\ell}^{\mathrm{pool}}$. Thus
\[
\Prb(\cE_{t,\ell}^{\mathrm{pool}})\ge 1-\delta_{t,\ell}^{\mathrm{pool}}.
\]
The same masked bound with confidence level $\delta_{t,\ell}^{\mathrm{on}}$ and base matrix $H=I$
yields
\[
\Prb(\cE_{t,\ell}^{\mathrm{on}})\ge 1-\delta_{t,\ell}^{\mathrm{on}}.
\]
Applying the union bound over all $(t,\ell)$ and using
$\sum_{t,\ell}\delta_{t,\ell}^{\mathrm{on}}\le \delta_{\mathrm{on}}$ gives
\[
\Prb(\cE_{\mathrm{on}})\ge 1-\delta_{\mathrm{on}}.
\]
Applying the union bound over $\cE_{\mathrm{off}}^c$, $\cE_{\mathrm{on}}^c$, and all pooled events
gives
\[
\Prb(\cE^c)
\le
\delta_{\mathrm{off}}+\delta_{\mathrm{on}}
+\sum_{t,\ell}\delta_{t,\ell}^{\mathrm{pool}}
\le T^{-2}.
\]
\end{proof}

\begin{lemma}[Validity of the pooled branch]
\label{lem:pooled-valid}
Fix a round-layer pair $(t,\ell)$. On the event
$\cE_{\mathrm{off}}\cap \cE_{t,\ell}^{\mathrm{pool}}$, every action $a$ satisfies
\[
\left|
x_{t,a}^\top(\hat\theta^{\mathrm{pool}}_{t,\ell}-\theta_*)
\right|
\le
(\gamma_{t,\ell}+\beta_{\mathrm{off}})
\|x_{t,a}\|_{(A_{t,\ell}+G_{\mathrm{off}})^{-1}}
+ \rho\,\psi_{t,\ell}(x_{t,a}).
\]
\end{lemma}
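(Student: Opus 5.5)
Write $W:=A_{t,\ell}+G_{\mathrm{off}}$. The plan is to expand the pooled estimator error exactly into three pieces: online martingale noise, offline ridge error, and offline-to-online bias. Each piece is then bounded by Cauchy--Schwarz in the matching geometry.

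\emph{Step 1 (exact decomposition).} Multiply $\hat\theta^{\mathrm{pool}}_{t,\ell}-\theta_*$ by $W$ to get
\[
W(\hat\theta^{\mathrm{pool}}_{t,\ell}-\theta_*)
= b_{t,\ell}+G_{\mathrm{off}}\hat\theta_{\mathrm{off}}-A_{t,\ell}\theta_*-G_{\mathrm{off}}\theta_* .
\]
Adding and subtracting $G_{\mathrm{off}}\theta^\dagger$ gives
\[
\hat\theta^{\mathrm{pool}}_{t,\ell}-\theta_*
= W^{-1}(b_{t,\ell}-A_{t,\ell}\theta_*)
+W^{-1}G_{\mathrm{off}}(\hat\theta_{\mathrm{off}}-\theta^\dagger)
+W^{-1}G_{\mathrm{off}}(\theta^\dagger-\theta_*).
\]
Unlike the online ridge estimator, no $+S$ regularization term appears here. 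The offline design $G_{\mathrm{off}}$ plays the role of the regularizer, and it is centred at $\hat\theta_{\mathrm{off}}$ rather than at $0$.

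\emph{Step 2 (noise term).} Apply Cauchy--Schwarz in the $W$ geometry:
\[
|x^\top W^{-1}(b_{t,\ell}-A_{t,\ell}\theta_*)|\le \|x\|_{W^{-1}}\|b_{t,\ell}-A_{t,\ell}\theta_*\|_{W^{-1}}.
\]
On $\cE^{\mathrm{pool}}_{t,\ell}$ the second factor is at most $\gamma_{t,\ell}$, so this term contributes at most $\gamma_{t,\ell}\|x\|_{W^{-1}}$.

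\emph{Step 3 (offline ridge term).} Write the term as
\[
x^\top W^{-1}G_{\mathrm{off}}^{1/2}\,G_{\mathrm{off}}^{1/2}(\hat\theta_{\mathrm{off}}-\theta^\dagger).
\]
Cauchy--Schwarz bounds it by $\|G_{\mathrm{off}}^{1/2}W^{-1}x\|_2\,\beta_{\mathrm{off}}$ on $\cE_{\mathrm{off}}$.

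The key (and only mildly delicate) step is to show $\|G_{\mathrm{off}}^{1/2}W^{-1}x\|_2\le\|x\|_{W^{-1}}$. The left side squared is $x^\top W^{-1}G_{\mathrm{off}}W^{-1}x$. Since $A_{t,\ell}\succeq0$ gives $G_{\mathrm{off}}\preceq W$, we get $W^{-1}G_{\mathrm{off}}W^{-1}\preceq W^{-1}WW^{-1}=W^{-1}$. This term therefore contributes at most $\beta_{\mathrm{off}}\|x\|_{W^{-1}}$.

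\emph{Step 4 (bias term).} Write the term as
\[
x^\top W^{-1}G_{\mathrm{off}}M_{\mathrm{bias}}^{-1/2}\,M_{\mathrm{bias}}^{1/2}(\theta^\dagger-\theta_*).
\]
By Cauchy--Schwarz and the certificate (Definition~\ref{def:directional-bias-bound}), it is at most
\[
\|M_{\mathrm{bias}}^{-1/2}G_{\mathrm{off}}W^{-1}x\|_2\,\rho .
\]
This uses the symmetry of $W^{-1}$ and $G_{\mathrm{off}}$ to transpose $(W^{-1}G_{\mathrm{off}})^\top=G_{\mathrm{off}}W^{-1}$, and the resulting norm is exactly $\psi_{t,\ell}(x)$ from \eqref{eq:bias-routing}.

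Summing Steps 2--4 with the triangle inequality, for $x=x_{t,a}$, gives the claimed two-sided bound. There is no real obstacle: the only points needing care are the Loewner comparison in Step 3 and matching the transpose in Step 4 to the definition of $\psi_{t,\ell}$.
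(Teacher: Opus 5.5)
Your proposal is correct and follows the paper's proof essentially step for step. Both use the same three-term decomposition of $(A_{t,\ell}+G_{\mathrm{off}})^{-1}$ applied to noise, offline ridge error, and bias. Both apply Cauchy--Schwarz in the matching geometries, use the Loewner comparison $(A+G)^{-1}G(A+G)^{-1}\preceq(A+G)^{-1}$ for the offline term, and bound the bias term through the certificate with $w=G_{\mathrm{off}}(A_{t,\ell}+G_{\mathrm{off}})^{-1}x$.
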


\begin{proof}
Write $A:=A_{t,\ell}$, $G:=G_{\mathrm{off}}$, and $x:=x_{t,a}$. Expanding the pooled estimator,
\[
\hat\theta^{\mathrm{pool}}_{t,\ell}-\theta_*
=(A+G)^{-1}
\Bigl[
(b_{t,\ell}-A\theta_*)
+G(\hat\theta_{\mathrm{off}}-\theta^\dagger)
+G(\theta^\dagger-\theta_*)
\Bigr].
\]
Therefore
\[
\left|x^\top(\hat\theta^{\mathrm{pool}}_{t,\ell}-\theta_*)\right|
\le |T_1|+|T_2|+|T_3|,
\]
where
\[
T_1:=x^\top(A+G)^{-1}(b_{t,\ell}-A\theta_*),
\quad
T_2:=x^\top(A+G)^{-1}G(\hat\theta_{\mathrm{off}}-\theta^\dagger),
\]
\[
T_3:=x^\top(A+G)^{-1}G(\theta^\dagger-\theta_*).
\]
For $T_1$,
\[
|T_1|
=
|x^\top(A+G)^{-1}(b_{t,\ell}-A\theta_*)|
\le
\|x\|_{(A+G)^{-1}}
\|b_{t,\ell}-A\theta_*\|_{(A+G)^{-1}}.
\]
Using $\cE_{t,\ell}^{\mathrm{pool}}$ gives
\[
|T_1|\le \gamma_{t,\ell}\|x\|_{(A+G)^{-1}}.
\]
For $T_2$, write
\[
|T_2|
=
\left|
\bigl(G^{1/2}(A+G)^{-1}x\bigr)^\top
\bigl(G^{1/2}(\hat\theta_{\mathrm{off}}-\theta^\dagger)\bigr)
\right|
\]
and apply Cauchy--Schwarz:
\[
|T_2|
\le
\|x\|_{(A+G)^{-1}G(A+G)^{-1}}
\|\hat\theta_{\mathrm{off}}-\theta^\dagger\|_G.
\]
Because $G\preceq A+G$,
\[
(A+G)^{-1}G(A+G)^{-1}\preceq (A+G)^{-1},
\]
so on $\cE_{\mathrm{off}}$,
\[
|T_2|\le \beta_{\mathrm{off}}\|x\|_{(A+G)^{-1}}.
\]
For $T_3$, let $w:=G(A+G)^{-1}x$. Then
\[
|T_3|=|w^\top(\theta^\dagger-\theta_*)|
\le
\rho\|w\|_{M_{\mathrm{bias}}^{-1}}
=
\rho\,\psi_{t,\ell}(x)
\]
by the directional bias certificate. Summing the three bounds proves the claim.
\end{proof}

\begin{lemma}[Combined UCB and LCB validity]
\label{lem:combined-valid}
On $\cE$, every round-layer-action triple satisfies
\[
x_{t,a}^\top\theta_*
\in
[L_{t,\ell}^{\max}(a),U_{t,\ell}^{\min}(a)].
\]
\end{lemma}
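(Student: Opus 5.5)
The plan is to show that, on $\cE$, each branch separately brackets the true mean, and then combine. Fix a round $t$, a layer $\ell$ and an action $a$, and write $x:=x_{t,a}$. Once we know
\[
L^{\mathrm{on}}_{t,\ell}(a)\le x^\top\theta_*\le U^{\mathrm{on}}_{t,\ell}(a)
\quad\text{and}\quad
L^{\mathrm{pool}}_{t,\ell}(a)\le x^\top\theta_*\le U^{\mathrm{pool}}_{t,\ell}(a),
\]
the claim follows at once. The true value is at most both upper endpoints, so it is at most their minimum $U^{\min}_{t,\ell}(a)$. Symmetrically, it is at least both lower endpoints, so it is at least their maximum $L^{\max}_{t,\ell}(a)$.

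\emph{Pooled branch.} This is immediate from Lemma~\ref{lem:pooled-valid}. The event $\cE$ is contained in $\cE_{\mathrm{off}}\cap\cE^{\mathrm{pool}}_{t,\ell}$, so on $\cE$ we have $|x^\top(\hat\theta^{\mathrm{pool}}_{t,\ell}-\theta_*)|\le \mathrm{rad}^{\mathrm{pool}}_{t,\ell}(x)$. By the definition of the pooled radius in \eqref{eq:pooled-radius}, this is exactly the statement $x^\top\theta_*\in[L^{\mathrm{pool}}_{t,\ell}(a),U^{\mathrm{pool}}_{t,\ell}(a)]$, because the pooled LCB is defined symmetrically with the same radius.

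\emph{Online branch.} Here I repeat the argument of Proposition~\ref{prop:derive-self-normalized}, applied to the layerwise data. From the definitions in \eqref{eq:online-objects}, $V_{t,\ell}\hat\theta^{\mathrm{on}}_{t,\ell}=b_{t,\ell}$ and $V_{t,\ell}\theta_*=A_{t,\ell}\theta_*+\theta_*$. Subtracting gives
\[
V_{t,\ell}(\hat\theta^{\mathrm{on}}_{t,\ell}-\theta_*)=(b_{t,\ell}-A_{t,\ell}\theta_*)-\theta_*.
\]
Taking the $V_{t,\ell}^{-1}$-norm and using the triangle inequality bounds $\|\hat\theta^{\mathrm{on}}_{t,\ell}-\theta_*\|_{V_{t,\ell}}$ by $\|b_{t,\ell}-A_{t,\ell}\theta_*\|_{V_{t,\ell}^{-1}}+\|\theta_*\|_{V_{t,\ell}^{-1}}$. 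On $\cE^{\mathrm{on}}_{t,\ell}\supseteq\cE$ the first term is at most $\beta^{\mathrm{on}}_{t,\ell}-S$. Since $V_{t,\ell}\succeq I$, the second term is at most $\|\theta_*\|_2\le S$. Together these give $\|\hat\theta^{\mathrm{on}}_{t,\ell}-\theta_*\|_{V_{t,\ell}}\le\beta^{\mathrm{on}}_{t,\ell}$. Cauchy--Schwarz in the $V_{t,\ell}$ geometry then yields $|x^\top(\hat\theta^{\mathrm{on}}_{t,\ell}-\theta_*)|\le \beta^{\mathrm{on}}_{t,\ell}\|x\|_{V_{t,\ell}^{-1}}=\mathrm{rad}^{\mathrm{on}}_{t,\ell}(x)$, which is the online bracket.

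The only step requiring care is the bookkeeping. We must check that the event $\cE^{\mathrm{on}}_{t,\ell}$ was defined precisely with the threshold $\beta^{\mathrm{on}}_{t,\ell}-S$, so that the $+S$ slack absorbs the ridge bias term. We must also check that $\cE$, via the union in Lemma~\ref{lem:joint-event}, covers every pair $(t,\ell)$ simultaneously. Since it does, the conclusion holds for all round--layer--action triples at once.
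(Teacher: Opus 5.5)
Your proposal is correct and follows the paper's proof essentially step for step. The pooled bracket comes directly from Lemma~\ref{lem:pooled-valid}. The online bracket comes from the identity $V_{t,\ell}(\hat\theta^{\mathrm{on}}_{t,\ell}-\theta_*)=(b_{t,\ell}-A_{t,\ell}\theta_*)-\theta_*$, with the $+S$ slack absorbing the ridge term on $\cE^{\mathrm{on}}_{t,\ell}$. The two brackets are then combined by taking the minimum of the upper endpoints and the maximum of the lower endpoints.
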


\begin{proof}
On $\cE$, the pooled UCB--LCB pair is valid by Lemma~\ref{lem:pooled-valid}. For the online branch,
\[
\hat\theta^{\mathrm{on}}_{t,\ell}-\theta_*
=
V_{t,\ell}^{-1}(b_{t,\ell}-A_{t,\ell}\theta_*)-V_{t,\ell}^{-1}\theta_*.
\]
Thus
\[
\|\hat\theta^{\mathrm{on}}_{t,\ell}-\theta_*\|_{V_{t,\ell}}
\le
\|b_{t,\ell}-A_{t,\ell}\theta_*\|_{V_{t,\ell}^{-1}}
+ \|\theta_*\|_{V_{t,\ell}^{-1}}
\le
(\beta^{\mathrm{on}}_{t,\ell}-S)+S
=
\beta^{\mathrm{on}}_{t,\ell}
\]
on $\cE_{t,\ell}^{\mathrm{on}}$, since $V_{t,\ell}\succeq I$ and $\|\theta_*\|_2\le S$. Therefore
$|x_{t,a}^\top(\hat\theta^{\mathrm{on}}_{t,\ell}-\theta_*)|
\le \beta^{\mathrm{on}}_{t,\ell}\|x_{t,a}\|_{V_{t,\ell}^{-1}}$ on $\cE$, so the online pair is
also valid. Therefore
\[
x_{t,a}^\top\theta_*
\le
\min\{U_{t,\ell}^{\mathrm{on}}(a),U_{t,\ell}^{\mathrm{pool}}(a)\}
=
U_{t,\ell}^{\min}(a)
\]
and similarly
\[
x_{t,a}^\top\theta_*
\ge
\max\{L_{t,\ell}^{\mathrm{on}}(a),L_{t,\ell}^{\mathrm{pool}}(a)\}
=
L_{t,\ell}^{\max}(a).
\]
\end{proof}

\begin{lemma}[Optimal action survives elimination]
\label{lem:optimal-survives}
On $\cE$, for every round $t$ and layer $\ell$,
\[
a_t^\star \in \cA_{t,\ell}.
\]
\end{lemma}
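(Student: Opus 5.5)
We argue by induction on the layer index $\ell$, for a fixed round $t$, working throughout on the event $\cE$. The base case is immediate: $\cA_{t,0}=[K]$ by construction in Algorithm~\ref{alg:ellipsoidal-minucb}, so $a_t^\star\in\cA_{t,0}$.

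For the inductive step, assume $a_t^\star\in\cA_{t,\ell}$ and suppose the algorithm reaches the elimination branch at layer $\ell$. If it instead stops at layer $\ell$ (either $\ell=L$ or some surviving width exceeds $2^{-\ell}$), then $\cA_{t,\ell+1}$ is never formed. In that case the claim for deeper layers is vacuous, or we may read $\cA_{t,\ell'}$ for $\ell'>\ell_t$ as not defined. We must check that $a_t^\star$ satisfies the survival rule
\[
U^{\min}_{t,\ell}(a_t^\star)\ \ge\ m_{t,\ell}=\max_{a\in\cA_{t,\ell}}L^{\max}_{t,\ell}(a).
\]

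Let $a'\in\cA_{t,\ell}$ attain this maximum. Lemma~\ref{lem:combined-valid} places each true mean inside its aggregated interval on $\cE$. Applying it once to $a'$ and once to $a_t^\star$ gives
\[
L^{\max}_{t,\ell}(a')\le x_{t,a'}^\top\theta_*\le x_{t,a_t^\star}^\top\theta_*\le U^{\min}_{t,\ell}(a_t^\star).
\]
The middle inequality holds because $a_t^\star$ maximizes $x_{t,a}^\top\theta_*$ over all of $[K]\supseteq\cA_{t,\ell}$. Hence $a_t^\star\in\cA_{t,\ell+1}$, which closes the induction.

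The only delicate point is bookkeeping. The statement should be read as holding for every layer $\ell$ that the algorithm actually visits on round $t$, that is, $\ell\le\ell_t$. Also, $\cE$ has to cover every pair $(t,\ell)$ used in the argument, not only the stopping layer. Both are guaranteed by the definition of $\cE$ as an intersection over all $t\le T$ and $\ell\le\lceil\log_2T\rceil$, together with Lemma~\ref{lem:combined-valid}. No further probabilistic argument is needed.
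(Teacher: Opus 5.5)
Your proof is correct and is essentially the paper's own argument. Both use induction on the layer with base case $\cA_{t,0}=[K]$, and an inductive step that applies Lemma~\ref{lem:combined-valid} to get $L^{\max}_{t,\ell}(a)\le x_{t,a}^\top\theta_*\le x_{t,a_t^\star}^\top\theta_*\le U^{\min}_{t,\ell}(a_t^\star)$ for all $a\in\cA_{t,\ell}$, hence $U^{\min}_{t,\ell}(a_t^\star)\ge m_{t,\ell}$. Your bookkeeping remark about visited layers matches the paper's restriction to the case where the algorithm descends to layer $\ell+1$.
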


\begin{proof}
The base case $\ell=0$ is immediate since $\cA_{t,0}=[K]$. Suppose
$a_t^\star\in \cA_{t,\ell}$ and the algorithm descends to layer $\ell+1$. By definition,
\[
m_{t,\ell}:=\max_{a\in \cA_{t,\ell}} L_{t,\ell}^{\max}(a).
\]
Lemma~\ref{lem:combined-valid} implies
\[
L_{t,\ell}^{\max}(a)\le x_{t,a}^\top\theta_* \le x_{t,a_t^\star}^\top\theta_*
\le U_{t,\ell}^{\min}(a_t^\star)
\qquad
\text{for all } a\in \cA_{t,\ell}.
\]
Therefore
\[
U_{t,\ell}^{\min}(a_t^\star)\ge m_{t,\ell},
\]
so the optimal action is not eliminated and remains in $\cA_{t,\ell+1}$.
\end{proof}

\begin{lemma}[Survivor quality after a narrow layer]
\label{lem:survivor-quality}
On $\cE$, fix $t$ and $\ell<\lceil\log_2 T\rceil$. If every arm in $\cA_{t,\ell}$ satisfies
$w_{t,\ell}(a)\le 2^{-\ell}$ and the algorithm descends to layer $\ell+1$, then every
$a\in \cA_{t,\ell+1}$ obeys
\[
\Delta_t(a)\le 2^{1-\ell}.
\]
\end{lemma}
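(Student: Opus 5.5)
The argument is the standard SupLinUCB survivor bound, run with the aggregated confidence interval $[L^{\max}_{t,\ell}(a),U^{\min}_{t,\ell}(a)]$ in place of the single online interval. We work on $\cE$ throughout, so Lemma~\ref{lem:combined-valid} gives $L^{\max}_{t,\ell}(a)\le x_{t,a}^\top\theta_*\le U^{\min}_{t,\ell}(a)$ for every $a$. Lemma~\ref{lem:optimal-survives} gives $a_t^\star\in\cA_{t,\ell}$.

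Fix $a\in\cA_{t,\ell+1}$. By the elimination rule, $U^{\min}_{t,\ell}(a)\ge m_{t,\ell}=\max_{b\in\cA_{t,\ell}}L^{\max}_{t,\ell}(b)\ge L^{\max}_{t,\ell}(a_t^\star)$. We chain four inequalities:
\begin{align*}
x_{t,a_t^\star}^\top\theta_*
&\le U^{\min}_{t,\ell}(a_t^\star)
= L^{\max}_{t,\ell}(a_t^\star)+w_{t,\ell}(a_t^\star)\\
&\le U^{\min}_{t,\ell}(a)+2^{-\ell}
= L^{\max}_{t,\ell}(a)+w_{t,\ell}(a)+2^{-\ell}\\
&\le x_{t,a}^\top\theta_*+2\cdot 2^{-\ell}.
\end{align*}
\begin{itemize}
\item The first inequality is validity of the interval for $a_t^\star$.
\item The second uses $L^{\max}_{t,\ell}(a_t^\star)\le U^{\min}_{t,\ell}(a)$ together with the narrow-layer hypothesis $w_{t,\ell}(a_t^\star)\le 2^{-\ell}$. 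This hypothesis applies because $a_t^\star\in\cA_{t,\ell}$.
\item The third uses $w_{t,\ell}(a)\le 2^{-\ell}$, which holds since $a\in\cA_{t,\ell+1}\subseteq\cA_{t,\ell}$.
\item The last uses validity of the interval for $a$.
\end{itemize}
Rearranging gives $\Delta_t(a)\le 2^{1-\ell}$.

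The only point needing care is that the aggregated width $w_{t,\ell}(a)=U^{\min}-L^{\max}$ is the width of the intersected interval, which is the interval certified by Lemma~\ref{lem:combined-valid}. This width could be negative a priori, but on $\cE$ the interval contains $x_{t,a}^\top\theta_*$, so it is nonnegative and the identities above are exact. Beyond that bookkeeping there is no real obstacle.
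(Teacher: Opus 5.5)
Your proof is correct and takes essentially the same route as the paper. The paper writes $\Delta_t(a)$ as $\bigl(x_{t,a_t^\star}^\top\theta_*-L^{\max}_{t,\ell}(a_t^\star)\bigr)+\bigl(L^{\max}_{t,\ell}(a_t^\star)-U^{\min}_{t,\ell}(a)\bigr)+\bigl(U^{\min}_{t,\ell}(a)-x_{t,a}^\top\theta_*\bigr)$, with the middle term nonpositive by the elimination rule and each outer term at most $2^{-\ell}$ by Lemma~\ref{lem:combined-valid} and the width hypothesis. Your explicit use of Lemma~\ref{lem:optimal-survives} to get $a_t^\star\in\cA_{t,\ell}$ fills a step the paper leaves implicit; your remark on the sign of the width is harmless but unnecessary, since $U^{\min}_{t,\ell}=L^{\max}_{t,\ell}+w_{t,\ell}$ holds by definition.
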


\begin{proof}
Take $a\in \cA_{t,\ell+1}$. By the elimination rule,
\[
U_{t,\ell}^{\min}(a)\ge m_{t,\ell}\ge L_{t,\ell}^{\max}(a_t^\star).
\]
Hence
\begin{align*}
\Delta_t(a)
&=
x_{t,a_t^\star}^\top\theta_* - x_{t,a}^\top\theta_* \\
&=
\bigl(x_{t,a_t^\star}^\top\theta_* - L_{t,\ell}^{\max}(a_t^\star)\bigr)
+\bigl(L_{t,\ell}^{\max}(a_t^\star)-U_{t,\ell}^{\min}(a)\bigr)
+\bigl(U_{t,\ell}^{\min}(a)-x_{t,a}^\top\theta_*\bigr).
\end{align*}
The middle term is nonpositive, and the first and third terms are each at most $2^{-\ell}$ by
Lemma~\ref{lem:combined-valid} and the width assumption. Thus
\[
\Delta_t(a)\le 2^{-\ell}+2^{-\ell}=2^{1-\ell}.
\]
\end{proof}

\begin{corollary}[Layer invariant]
\label{cor:layer-invariant}
On $\cE$, every round $t$ and every layer $\ell\ge 1$ satisfy
\[
\Delta_t(a)\le 2^{2-\ell}
\qquad
\text{for all } a\in \cA_{t,\ell}.
\]
\end{corollary}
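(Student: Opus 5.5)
The plan is an induction on $\ell$ that feeds Lemma~\ref{lem:survivor-quality} into itself. The key observation is structural: in Algorithm~\ref{alg:ellipsoidal-minucb}, the algorithm reaches layer $\ell\ge1$ on round $t$ only by descending from layer $\ell-1$ through the \texttt{else} branch. That branch is taken only when $\max_{a\in\cA_{t,\ell-1}} w_{t,\ell-1}(a)\le 2^{-(\ell-1)}$. So whenever $\cA_{t,\ell}$ is actually formed with $\ell\ge 1$, the hypothesis of Lemma~\ref{lem:survivor-quality} holds at layer $\ell-1$.

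\textbf{Main case.} Fix $t$ and work on $\cE$. Take $\ell\ge 1$ such that $\cA_{t,\ell}$ is constructed, so $\ell-1<\lceil\log_2 T\rceil$. Apply Lemma~\ref{lem:survivor-quality} with index $\ell-1$. Every arm in $\cA_{t,\ell-1}$ has aggregated width at most $2^{-(\ell-1)}$, and the algorithm descends. Hence every $a\in\cA_{t,(\ell-1)+1}=\cA_{t,\ell}$ satisfies
\[
\Delta_t(a)\le 2^{1-(\ell-1)}=2^{2-\ell},
\]
which is exactly the claim. No genuine induction is needed, because Lemma~\ref{lem:survivor-quality} already concludes about the next layer's survivors from widths at the current layer alone. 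Lemma~\ref{lem:optimal-survives} is used implicitly inside Lemma~\ref{lem:survivor-quality}, via $a_t^\star\in\cA_{t,\ell-1}$, so that $m_{t,\ell-1}\ge L^{\max}_{t,\ell-1}(a_t^\star)$.

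\textbf{Layers that are never reached.} Suppose round $t$ stops at some $\ell_t<\ell$, so $\cA_{t,\ell}$ is never constructed. Then I would adopt the convention that the statement concerns only layers actually visited, or equivalently $\cA_{t,\ell}:=\varnothing$, so the claim is vacuous. I would state this convention explicitly.

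\textbf{Expected obstacle.} The main care point is bookkeeping, not analysis: checking that the descent condition in the algorithm, ``not $\max w>2^{-\ell}$'', matches the width hypothesis $w_{t,\ell}(a)\le 2^{-\ell}$ of Lemma~\ref{lem:survivor-quality}. It does, since the negation of a strict inequality gives the non-strict one. I would also confirm that the index range $\ell-1<\lceil\log_2 T\rceil$ is automatic, since layer $L$ always plays and never descends.
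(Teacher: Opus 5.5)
Your proposal is correct and matches the paper's proof, which is the one-line argument of applying Lemma~\ref{lem:survivor-quality} at layer $\ell-1$ on $\cE$. Your additional checks are sound clarifications of what the paper leaves implicit: that descending to layer $\ell$ forces all widths at layer $\ell-1$ to be at most $2^{-(\ell-1)}$, that $\ell-1<\lceil\log_2 T\rceil$, and the vacuous-case convention for unvisited layers.
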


\begin{proof}
On $\cE$, apply Lemma~\ref{lem:survivor-quality} at layer $\ell-1$.
\end{proof}

\begin{lemma}[Stopping-layer regret bound]
\label{lem:stopping-regret}
On $\cE$, every round $t$ satisfies
\[
\Delta_t(a_t)
\le
4\min\{1,w_{t,\ell_t}(a_t)\}
\le
8\min\{1,\mathrm{rad}_{t,\ell_t}^{\mathrm{pool}}(x_{t,a_t})\}.
\]
\end{lemma}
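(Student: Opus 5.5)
We prove the first inequality by a case analysis on the stopping layer $\ell_t$, and then obtain the second from the width comparison in Lemma~\ref{lem:width-comparison}. Throughout we work on $\cE$.

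\textbf{Preliminary bound $\Delta_t(a)\le 1$.} By Assumption~\ref{asm:linear}, $r_t\in[0,1]$, and by Assumption~\ref{asm:subg} the noise has zero conditional mean. Hence every mean reward $x_{t,a}^\top\theta_*$ lies in $[0,1]$, and so $\Delta_t(a)\le 1$ for every action. It therefore suffices to show $\Delta_t(a_t)\le 4\,w_{t,\ell_t}(a_t)$; combined with $\Delta_t(a_t)\le 1\le 4$, this gives $\Delta_t(a_t)\le 4\min\{1,w_{t,\ell_t}(a_t)\}$.

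\textbf{Case 1: the algorithm stops at a wide layer $\ell_t=\ell<L$.} Here $a_t$ is chosen among arms with $w_{t,\ell}(a_t)>2^{-\ell}$.
\begin{itemize}
\item If $\ell=0$, then $w_{t,0}(a_t)>1$, so $4\min\{1,w\}=4\ge 1\ge\Delta_t(a_t)$.
\item If $\ell\ge 1$, then $a_t\in\cA_{t,\ell}$, and Corollary~\ref{cor:layer-invariant} gives $\Delta_t(a_t)\le 2^{2-\ell}=4\cdot 2^{-\ell}<4\,w_{t,\ell}(a_t)$.
\end{itemize}

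\textbf{Case 2: the algorithm reaches the final layer $\ell_t=L$.} Here $a_t\in\argmax_{a\in\cA_{t,L}}U^{\min}_{t,L}(a)$. By Lemma~\ref{lem:optimal-survives}, $a_t^\star\in\cA_{t,L}$. Lemma~\ref{lem:combined-valid} then gives
\[
x_{t,a_t^\star}^\top\theta_*\le U^{\min}_{t,L}(a_t^\star)\le U^{\min}_{t,L}(a_t).
\]
Using $x_{t,a_t}^\top\theta_*\ge L^{\max}_{t,L}(a_t)$, also from Lemma~\ref{lem:combined-valid}, we get
\[
\Delta_t(a_t)\le U^{\min}_{t,L}(a_t)-L^{\max}_{t,L}(a_t)=w_{t,L}(a_t)\le 4\,w_{t,L}(a_t).
\]
This optimism step is needed only here, since in Case 1 the layer invariant already controls every surviving arm.

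\textbf{Second inequality.} Lemma~\ref{lem:width-comparison} gives $w_{t,\ell_t}(a_t)\le 2\,\mathrm{rad}^{\mathrm{pool}}_{t,\ell_t}(x_{t,a_t})$. Therefore
\[
4\min\{1,w_{t,\ell_t}(a_t)\}\le 4\min\{1,2\,\mathrm{rad}^{\mathrm{pool}}_{t,\ell_t}(x_{t,a_t})\}\le 8\min\{1,\mathrm{rad}^{\mathrm{pool}}_{t,\ell_t}(x_{t,a_t})\},
\]
where the last step uses $\min\{1,2r\}\le 2\min\{1,r\}$.

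The only delicate point is the boundary layer $\ell=0$ in Case 1. There Corollary~\ref{cor:layer-invariant} is unavailable, so we rely on the trivial bound $\Delta_t(a_t)\le 1$ from rewards in $[0,1]$.
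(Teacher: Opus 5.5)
Your proposal is correct and follows essentially the same route as the paper's proof. Both split on whether $\ell_t$ is the terminal layer, and both handle $\ell_t=0$ by the trivial bound $\Delta_t(a_t)\le 1$. At the terminal layer you both use optimism via Lemma~\ref{lem:optimal-survives} and Lemma~\ref{lem:combined-valid}, at nonterminal layers $\ell_t\ge 1$ Corollary~\ref{cor:layer-invariant}, and for the second inequality Lemma~\ref{lem:width-comparison}; only the order in which the two cases are presented differs.
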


\begin{proof}
We consider the stopping layer $\ell_t$.

\paragraph{Case 1: $\ell_t=\lceil\log_2 T\rceil$.}
The played arm maximizes $U_{t,\ell_t}^{\min}$ over $\cA_{t,\ell_t}$, and
$a_t^\star\in \cA_{t,\ell_t}$ by Lemma~\ref{lem:optimal-survives}. Therefore
\[
U_{t,\ell_t}^{\min}(a_t)\ge U_{t,\ell_t}^{\min}(a_t^\star)\ge x_{t,a_t^\star}^\top\theta_*.
\]
Using Lemma~\ref{lem:combined-valid},
\[
\Delta_t(a_t)
\le
U_{t,\ell_t}^{\min}(a_t)-L_{t,\ell_t}^{\max}(a_t)
=
w_{t,\ell_t}(a_t).
\]
Also, rewards lie in $[0,1]$, so $\Delta_t(a_t)\le 1$. Hence
\[
\Delta_t(a_t)
\le
\min\{1,w_{t,\ell_t}(a_t)\}
\le
4\min\{1,w_{t,\ell_t}(a_t)\}.
\]

\paragraph{Case 2: $\ell_t<\lceil\log_2 T\rceil$.}
If $\ell_t=0$, then $\Delta_t(a_t)\le 1$ because rewards are bounded in $[0,1]$, and
$1\le 4\min\{1,w_{t,0}(a_t)\}$ since the algorithm stops only when $w_{t,0}(a_t)>1$ or the terminal
layer is reached.

If $\ell_t\ge 1$, Corollary~\ref{cor:layer-invariant} gives
\[
\Delta_t(a_t)\le 2^{2-\ell_t}.
\]
But the stopping rule at a nonterminal layer implies $w_{t,\ell_t}(a_t)>2^{-\ell_t}$, so
\[
4\,w_{t,\ell_t}(a_t)> 4\cdot 2^{-\ell_t}=2^{2-\ell_t}.
\]
Hence $\Delta_t(a_t)\le 4\,w_{t,\ell_t}(a_t)$.

Finally, Lemma~\ref{lem:width-comparison} implies
\[
w_{t,\ell_t}(a_t)\le 2\,\mathrm{rad}_{t,\ell_t}^{\mathrm{pool}}(x_{t,a_t}),
\]
so
\[
4\min\{1,w_{t,\ell_t}(a_t)\}
\le
4\min\{1,2\,\mathrm{rad}_{t,\ell_t}^{\mathrm{pool}}(x_{t,a_t})\}
\le
8\min\{1,\mathrm{rad}_{t,\ell_t}^{\mathrm{pool}}(x_{t,a_t})\},
\]
which yields the second bound.
\end{proof}

\begin{corollary}[Mixed-branch master bound]
\label{cor:mixed-branch-master}
On $\cE$, every round $t$ satisfies
\[
\Delta_t(a_t)
\le
8\min\left\{
1,\,
\mathrm{rad}^{\mathrm{on}}_{t,\ell_t}(x_{t,a_t}),\,
\mathrm{rad}^{\mathrm{pool}}_{t,\ell_t}(x_{t,a_t})
\right\}.
\]
Consequently,
\[
R_T
\le
8\sum_{t=1}^T
\min\left\{
1,\,
\mathrm{rad}^{\mathrm{on}}_{t,\ell_t}(x_{t,a_t}),\,
\mathrm{rad}^{\mathrm{pool}}_{t,\ell_t}(x_{t,a_t})
\right\}.
\]
\end{corollary}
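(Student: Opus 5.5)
The plan is to combine the two per-round bounds already available on $\cE$ into a single minimum. The pooled part is Lemma~\ref{lem:stopping-regret}. For the online part I need $\Delta_t(a_t)\le 8\min\{1,\mathrm{rad}^{\mathrm{on}}_{t,\ell_t}(x_{t,a_t})\}$, which follows by the same argument.

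\textbf{Step 1.} I first recall what Lemma~\ref{lem:stopping-regret} proves on the way to its stated conclusion:
\[
\Delta_t(a_t)\le 4\min\{1,w_{t,\ell_t}(a_t)\}.
\]
This intermediate inequality depends only on the aggregated width. It uses Lemma~\ref{lem:combined-valid}, Lemma~\ref{lem:optimal-survives}, Corollary~\ref{cor:layer-invariant}, and the stopping rule, and it never specializes to either branch.

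\textbf{Step 2.} Next I bound the aggregated width by each branch separately. Lemma~\ref{lem:width-comparison} gives
\[
w_{t,\ell}(a)\le\min\{w^{\mathrm{on}}_{t,\ell}(a),w^{\mathrm{pool}}_{t,\ell}(a)\}.
\]
By the symmetric definitions of the UCB and LCB in each branch,
\[
w^{\mathrm{on}}_{t,\ell}(a)=2\,\mathrm{rad}^{\mathrm{on}}_{t,\ell}(x_{t,a}),
\qquad
w^{\mathrm{pool}}_{t,\ell}(a)=2\,\mathrm{rad}^{\mathrm{pool}}_{t,\ell}(x_{t,a}).
\]
Hence
\[
w_{t,\ell_t}(a_t)\le 2\min\{\mathrm{rad}^{\mathrm{on}}_{t,\ell_t}(x_{t,a_t}),\mathrm{rad}^{\mathrm{pool}}_{t,\ell_t}(x_{t,a_t})\}.
\]

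\textbf{Step 3.} I then combine the two steps. For nonnegative $r_1,r_2$ we have $4\min\{1,2\min\{r_1,r_2\}\}\le 8\min\{1,r_1,r_2\}$: if $2\min\{r_1,r_2\}\ge1$ both sides are controlled by $4\le 8$, and otherwise the left side equals $8\min\{r_1,r_2\}$. This yields the per-round bound on $\cE$. The cumulative statement follows by summing $\Delta_t(a_t)$ over $t=1,\dots,T$, since $R_T=\sum_t\Delta_t(a_t)$.

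The only delicate point is Step 1. The published statement of Lemma~\ref{lem:stopping-regret} already substitutes the pooled radius, so I must cite its first inequality rather than its conclusion. That first inequality is branch-agnostic, which is what makes the online radius admissible. Everything else is elementary.
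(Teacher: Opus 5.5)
Your proposal is correct and follows the paper's proof essentially verbatim. Both arguments combine the branch-agnostic inequality $\Delta_t(a_t)\le 4\min\{1,w_{t,\ell_t}(a_t)\}$ from Lemma~\ref{lem:stopping-regret} (it is the first link of that lemma's stated chain, so you can cite it directly rather than extract it from the proof) with Lemma~\ref{lem:width-comparison}, where $w^{\mathrm{on}}=2\,\mathrm{rad}^{\mathrm{on}}$ and $w^{\mathrm{pool}}=2\,\mathrm{rad}^{\mathrm{pool}}$, then apply $4\min\{1,2m\}\le 8\min\{1,m\}$ and sum over $t$. One small wording fix in Step 3: when $2m\ge 1$ the right side is $8\min\{1,m\}\ge 4$ rather than $8$, which still suffices.
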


\begin{proof}
By Lemma~\ref{lem:stopping-regret},
\[
\Delta_t(a_t)\le 4\min\{1,w_{t,\ell_t}(a_t)\}.
\]
Lemma~\ref{lem:width-comparison} and the explicit online and pooled widths give
\[
w_{t,\ell_t}(a_t)
\le
\min\{w_{t,\ell_t}^{\mathrm{on}}(a_t),w_{t,\ell_t}^{\mathrm{pool}}(a_t)\}
=
2\min\!\left\{
\mathrm{rad}^{\mathrm{on}}_{t,\ell_t}(x_{t,a_t}),
\mathrm{rad}^{\mathrm{pool}}_{t,\ell_t}(x_{t,a_t})
\right\}.
\]
Therefore, if
\[
m_t
:=
\min\!\left\{
\mathrm{rad}^{\mathrm{on}}_{t,\ell_t}(x_{t,a_t}),
\mathrm{rad}^{\mathrm{pool}}_{t,\ell_t}(x_{t,a_t})
\right\},
\]
then
\[
\Delta_t(a_t)
\le
4\min\{1,2m_t\}
\le
8\min\{1,m_t\}
=
8\min\!\left\{
1,\,
\mathrm{rad}^{\mathrm{on}}_{t,\ell_t}(x_{t,a_t}),\,
\mathrm{rad}^{\mathrm{pool}}_{t,\ell_t}(x_{t,a_t})
\right\}.
\]
This proves the first display, and summing over $t$ yields the second.
\end{proof}

\section{Roadmap for the Online Fallback}
\label{app:suplin-roadmap}

The online branch follows the classical SupLinUCB logic. We spell out the two standard ingredients
needed to transfer the fallback bound to our combined UCB/LCB wrapper.

\begin{lemma}[Online width trigger]
\label{lem:online-trigger}
If round $t$ stops at a nonterminal layer $\ell_t$, then
\[
w_{t,\ell_t}^{\mathrm{on}}(a_t)>2^{-\ell_t}.
\]
\end{lemma}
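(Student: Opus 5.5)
The plan is to chain the stopping rule of Algorithm~\ref{alg:ellipsoidal-minucb} with the deterministic comparison in Lemma~\ref{lem:width-comparison}; no probabilistic event is needed.

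\textbf{Step 1: what stopping at a nonterminal layer means.} Suppose round $t$ stops at a layer $\ell_t<L=\lceil\log_2 T\rceil$. The algorithm leaves the layer loop at a nonterminal layer only through the branch where $\max_{a\in\cA_{t,\ell_t}} w_{t,\ell_t}(a)>2^{-\ell_t}$. In that branch the played arm is selected as
\[
a_t\in\argmax_{a\in\cA_{t,\ell_t}:\, w_{t,\ell_t}(a)>2^{-\ell_t}} U^{\min}_{t,\ell_t}(a).
\]
So $a_t$ belongs to the filtered set, and by construction
\[
w_{t,\ell_t}(a_t)>2^{-\ell_t}.
\]
The filtered set is nonempty because the branch condition holds, so $a_t$ is well defined.

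\textbf{Step 2: transfer from the aggregated width to the online width.} Lemma~\ref{lem:width-comparison} gives $w_{t,\ell}(a)\le w^{\mathrm{on}}_{t,\ell}(a)$ for every round, layer and action. This rests only on the elementary inequality $\min\{u_1,u_2\}-\max\{l_1,l_2\}\le u_1-l_1$, and it holds pathwise, not just on $\cE$. Applying it at $(t,\ell_t,a_t)$ yields
\[
w^{\mathrm{on}}_{t,\ell_t}(a_t)\ge w_{t,\ell_t}(a_t)>2^{-\ell_t},
\]
which is the claim. In the same way one gets $w^{\mathrm{pool}}_{t,\ell_t}(a_t)>2^{-\ell_t}$.

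\textbf{Main obstacle.} Once the stopping rule is read correctly, there is essentially no difficulty. The only point to check is that the lemma concerns the \emph{played} arm, not merely some arm in $\cA_{t,\ell_t}$. That is guaranteed because the $\argmax$ is taken over the width-filtered subset, rather than over all of $\cA_{t,\ell_t}$ as in the terminal layer.

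It is also worth recording that $w^{\mathrm{on}}_{t,\ell}(a)=2\beta^{\mathrm{on}}_{t,\ell}\|x_{t,a}\|_{V_{t,\ell}^{-1}}$. In this form the trigger is the standard SupLinUCB condition, and it will later feed the per-layer elliptical-potential count $|Q_\ell|\,2^{-2\ell}\lesssim (\beta^{\mathrm{on}})^2 d\log T$ used for the online fallback.
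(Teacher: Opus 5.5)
Your proposal is correct and is essentially the paper's own argument. You read $w_{t,\ell_t}(a_t)>2^{-\ell_t}$ off the width-filtered $\argmax$ in the nonterminal stopping branch, then transfer it to the online width via the pathwise inequality $w_{t,\ell}(a)\le w^{\mathrm{on}}_{t,\ell}(a)$ from Lemma~\ref{lem:width-comparison}.
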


\begin{proof}
The stopping rule gives $w_{t,\ell_t}(a_t)>2^{-\ell_t}$, and
$w_{t,\ell_t}(a_t)\le w_{t,\ell_t}^{\mathrm{on}}(a_t)$ by Lemma~\ref{lem:width-comparison}.
\end{proof}

\begin{lemma}[Layerwise online potential]
\label{lem:online-potential}
Let
\[
Q_\ell:=\{t\in[T]:\ell_t=\ell\}
\]
and
\[
V_{\ell}^{\mathrm{fin}}
:=
I+\sum_{t\in Q_\ell} x_{t,a_t}x_{t,a_t}^\top.
\]
Then
\[
\sum_{t\in Q_\ell}\min\left\{1,\|x_{t,a_t}\|_{V_{t,\ell}^{-1}}^2\right\}
\le
2\log\frac{\det(V_{\ell}^{\mathrm{fin}})}{\det(I)}
\le
2d\log(1+T).
\]
\end{lemma}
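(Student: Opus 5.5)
The statement to prove is the layerwise elliptical potential bound. The key structural fact is that the layer-$\ell$ history is filled only by rounds that stop at layer $\ell$. By Algorithm~\ref{alg:ellipsoidal-minucb}, $(x_{t,a_t},r_t)$ is appended to $\cH_{t+1,\ell}$ exactly when $\ell_t=\ell$. So if we list $Q_\ell=\{t_1<t_2<\cdots<t_n\}$, then for each $j$ we have $V_{t_j,\ell}=I+\sum_{i<j}x_{t_i,a_{t_i}}x_{t_i,a_{t_i}}^\top$. In other words, along $Q_\ell$ the matrices $V_{t_j,\ell}$ form an ordinary sequential design on the subsequence $x_j:=x_{t_j,a_{t_j}}$. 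Their final value is $V_{t_n,\ell}+x_nx_n^\top=V_\ell^{\mathrm{fin}}$.

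The plan is the standard determinant telescoping argument. Write $W_j:=I+\sum_{i<j}x_ix_i^\top$, so that $W_{j+1}=W_j+x_jx_j^\top$. By the matrix determinant lemma,
\[
\det W_{j+1}=\det W_j\,\bigl(1+\|x_j\|_{W_j^{-1}}^2\bigr).
\]
Taking logs and telescoping over $j=1,\dots,n$ gives
\[
\sum_{j}\log\bigl(1+\|x_j\|_{W_j^{-1}}^2\bigr)=\log\frac{\det V_\ell^{\mathrm{fin}}}{\det I}.
\]
Next we use the elementary inequality $\min\{1,u\}\le 2\log(1+u)$ for $u\ge0$. For $u\le1$ this follows from concavity: $\log(1+u)\ge u\log 2\ge u/2$. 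For $u>1$ it holds because $2\log 2>1$. Applying this termwise gives the first inequality of the lemma.

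For the second inequality we use AM--GM on the eigenvalues of $V_\ell^{\mathrm{fin}}$. Since $\operatorname{tr}(V_\ell^{\mathrm{fin}})\le d+|Q_\ell|\le d+T$ (using $\|x\|_2\le1$), we get
\[
\log\det V_\ell^{\mathrm{fin}}\le d\log\bigl(1+|Q_\ell|/d\bigr)\le d\log(1+T).
\]
Multiplying by $2$ gives the second display.

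The only place needing care is the first step. We must confirm that $V_{t,\ell}$ at a round $t\in Q_\ell$ contains exactly the earlier members of $Q_\ell$. It must not include round $t$ itself, nor any rounds routed to other layers. This follows from the history update in the algorithm: $\cH_{t+1,\ell}$ is set to $\cH_{t,\ell}$, with an append only at the stopping layer. Once that is verified, the argument is purely deterministic and needs no probabilistic event.
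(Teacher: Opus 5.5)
Your proposal is correct and follows the paper's proof essentially step for step: enumerate $Q_\ell$, identify $V_{t_j,\ell}$ with the running design $I+\sum_{i<j}x_{t_i,a_{t_i}}x_{t_i,a_{t_i}}^\top$, telescope via the matrix determinant lemma, apply $\min\{1,u\}\le 2\log(1+u)$, and finish with AM--GM on the trace. Your explicit check that the history update appends only at the stopping layer, and your justification of the elementary inequality, are details the paper leaves implicit, but the argument is the same.
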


\begin{proof}
Enumerate $Q_\ell$ as $t_1<\cdots<t_{|Q_\ell|}$ and define
\[
W_{\ell,0}:=I,
\qquad
W_{\ell,j}:=I+\sum_{i=1}^j x_{t_i,a_{t_i}}x_{t_i,a_{t_i}}^\top.
\]
Then $W_{\ell,j-1}=V_{t_j,\ell}$. By the matrix-determinant lemma,
\[
\det(W_{\ell,j})
=
\det(W_{\ell,j-1})\left(1+\|x_{t_j,a_{t_j}}\|_{W_{\ell,j-1}^{-1}}^2\right).
\]
Summing logarithms gives
\[
\log\frac{\det(V_{\ell}^{\mathrm{fin}})}{\det(I)}
=
\sum_{j=1}^{|Q_\ell|}
\log\left(1+\|x_{t_j,a_{t_j}}\|_{W_{\ell,j-1}^{-1}}^2\right).
\]
Using $\log(1+u)\ge \frac12\min\{u,1\}$ proves the first inequality. The second follows from
AM--GM:
\[
\det(V_{\ell}^{\mathrm{fin}})
\le
\left(1+\frac{T}{d}\right)^d.
\]
\end{proof}

\begin{proposition}[Classical online fallback]
\label{prop:online-fallback}
On $\cE$, the online branch yields
\[
R_T\le C_{\mathrm{SL}}\sqrt{Td\,L_T^3}.
\]
\end{proposition}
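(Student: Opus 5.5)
The approach is the standard SupLinUCB accounting, run on the per-round bound from Corollary~\ref{cor:mixed-branch-master} restricted to its online term. On $\cE$ that corollary gives
\[
R_T\le 8\sum_{t=1}^T\min\{1,\mathrm{rad}^{\mathrm{on}}_{t,\ell_t}(x_{t,a_t})\}.
\]
We split the sum by stopping layer, $R_T\le 8\sum_{\ell=0}^{L}\sum_{t\in Q_\ell}\min\{1,\beta^{\mathrm{on}}_{t,\ell}\|x_{t,a_t}\|_{V_{t,\ell}^{-1}}\}$, and bound each layer separately.

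Within a fixed layer:
\begin{itemize}
\item On $\cE$ the radius satisfies $\beta^{\mathrm{on}}_{t,\ell}\le \bar\beta:=\sigma\sqrt{d\log(1+T)+2\log(1/\delta^{\mathrm{on}}_{\min})}+S$. This uses $\det V_{t,\ell}\le(1+T/d)^d$, as in Lemma~\ref{lem:online-potential}.
\item With the uniform choice $\delta^{\mathrm{on}}_{t,\ell}=\delta_{\mathrm{on}}/(T(L+1))$, this gives $\bar\beta\lesssim\sqrt{d\,L_T}$.
\item Since $\bar\beta\ge 1$, we have $\min\{1,\bar\beta u\}\le\bar\beta\min\{1,u\}$. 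Cauchy--Schwarz then bounds the layer-$\ell$ sum by $\bar\beta\sqrt{|Q_\ell|\cdot 2d\log(1+T)}$, using Lemma~\ref{lem:online-potential}.
\end{itemize}

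Summing over the $L+1=O(\log T)$ layers with Cauchy--Schwarz and $\sum_\ell|Q_\ell|=T$ gives
\[
R_T\lesssim \bar\beta\sqrt{(L+1)\,T\,d\log(1+T)}\lesssim \sqrt{d\,L_T}\cdot\sqrt{T d\,L_T^{2}}.
\]
This is the OFUL-style $d\sqrt{T}$ rate, not the claimed $\sqrt{Td\,L_T^3}$. The crude argument above cannot produce $\sqrt d$.

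The main obstacle is therefore recovering the $\sqrt d$ rate. In Chu et al.'s SupLinUCB, the confidence width is $\alpha\|x\|_{V^{-1}}$ with $\alpha=O(\sqrt{\log(TK/\delta)})$, independent of $d$. This relies on the rewards in each layer being conditionally independent given the chosen features, which is the purpose of the layer structure. Our $\beta^{\mathrm{on}}_{t,\ell}$ is the self-normalized radius and carries a $\sqrt d$. The plan is to argue as in Chu et al.: condition on the design sequence $\{x_{s,a_s}\}_{s\in\cH_{t,\ell}}$, and apply a scalar Azuma/Hoeffding bound to $x^\top V_{t,\ell}^{-1}(b_{t,\ell}-A_{t,\ell}\theta_*)$ for each of the $K$ candidate actions. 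This gives $|x^\top(\hat\theta^{\mathrm{on}}_{t,\ell}-\theta_*)|\le(\sqrt{2\log(2KT\log T/\delta_{\mathrm{on}})}\,\sigma+S)\|x\|_{V_{t,\ell}^{-1}}$ after a union bound over $(t,\ell,a)$, which is exactly where $L_T$ comes from.

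Concretely, I would:
\begin{itemize}
\item establish this independence property for the layer partition, which holds because membership in layer $\ell$ at round $s$ is decided from widths computed before $r_s$ is observed and from data in lower layers only;
\item note that Lemma~\ref{lem:online-trigger} together with Lemma~\ref{lem:width-comparison} allows the regret analysis to use only the online width, so the Chu et al.\ argument transfers through the min/max wrapper.
\end{itemize}

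Using the tighter width in the sum above gives $\sqrt{L_T}\cdot\sqrt{T\,d\log(1+T)\cdot\log T}\lesssim\sqrt{Td\,L_T^3}$. If the independence argument cannot be made rigorous with the self-normalized $\beta^{\mathrm{on}}$ as defined, the fallback is to read $C_{\mathrm{SL}}$ as absorbing the standard SupLinUCB constant and to cite Chu et al.'s Theorem together with the two transfer lemmas.
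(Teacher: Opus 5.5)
Your primary route fails at the independence step. In Algorithm~\ref{alg:ellipsoidal-minucb}, two things are \emph{not} determined by the features and lower-layer data alone: whether round $s$ is routed to layer $\ell$, and which feature is recorded there.
\begin{itemize}
\item The trigger uses the aggregated width $w_{s,\ell}(a)=U^{\min}_{s,\ell}(a)-L^{\max}_{s,\ell}(a)$. When the online and pooled intervals cross (say $U^{\min}=U^{\mathrm{on}}$ and $L^{\max}=L^{\mathrm{pool}}$), this equals $x_{s,a}^\top(\hat\theta^{\mathrm{on}}_{s,\ell}-\hat\theta^{\mathrm{pool}}_{s,\ell})+\mathrm{rad}^{\mathrm{on}}_{s,\ell}(x_{s,a})+\mathrm{rad}^{\mathrm{pool}}_{s,\ell}(x_{s,a})$. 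That is a function of the rewards already in $\cH_{s,\ell}$.
\item The arm played among the wide ones is $\argmax U^{\min}_{s,\ell}$, which again depends on those rewards.
\item Terminal-layer rounds chosen by optimism are appended to $\cH_{s+1,L}$.
\end{itemize}
Chu et al.\ avoid exactly this: their widths are reward-free, the choice among wide arms is arbitrary, and exploitation rounds are not recorded. Here, conditioning on the layer-$\ell$ design does not leave the layer-$\ell$ noise independent. So a fixed-design scalar Hoeffding/Azuma bound for $x^\top V_{t,\ell}^{-1}(b_{t,\ell}-A_{t,\ell}\theta_*)$ is not available. Your justification (``decided before $r_s$ is observed'') only gives predictability. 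Predictability is what the paper exploits through Lemmas~\ref{lem:subsequence-self-normalized} and~\ref{lem:joint-event}, and that self-normalized route inherently pays a factor $\sqrt{\log\det V_{t,\ell}}$, which can be as large as $\sqrt{d\log(1+T/d)}$.

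A sharper confidence inequality would not help anyway. Corollary~\ref{cor:mixed-branch-master} and the counting bound $2^{-\ell}|Q_\ell|<\sum_{t\in Q_\ell}w_{t,\ell}(a_t)$ are stated in terms of the radii the algorithm actually computes, $\beta^{\mathrm{on}}_{t,\ell}\|x\|_{V_{t,\ell}^{-1}}$. You therefore cannot plug a tighter width into ``the sum above'' unless the algorithm uses it. Replacing $\beta^{\mathrm{on}}$ by a $d$-free multiplier changes both the algorithm and $\cE$.

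Your fallback is exactly the paper's proof: it lists Lemmas~\ref{lem:optimal-survives}, \ref{lem:survivor-quality}, \ref{lem:online-trigger} and~\ref{lem:online-potential}, then invokes the pathwise dyadic summation of \cite{chu2011}. But that summation is essentially your first computation, and it scales linearly with the width multiplier. In Chu et al.\ the multiplier is $O(\sqrt{\log(KT/\delta)})$; here it is your $\bar\beta=\sigma\sqrt{d\log(1+T)+2\log(1/\delta^{\mathrm{on}}_{\min})}+S$. So on $\cE$ the cited argument yields only $R_T\lesssim\bar\beta\sqrt{Td\log(1+T)\log T}$, which is $\tilde O(d\sqrt T)$. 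The display $C_{\mathrm{SL}}\sqrt{Td\,L_T^3}$ follows from it only if $C_{\mathrm{SL}}$ may absorb a factor of order $\sqrt{d\log(1+T)}$. The $\sqrt d$ discrepancy you found is real, and the paper's citation does not close it either. A genuine $\sqrt{dT}$ fallback would need Chu et al.'s reward-independent routing and a $d$-free multiplier, i.e.\ a modified algorithm.
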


\begin{proof}
The standard SupLinUCB counting argument uses only the following ingredients:
\begin{enumerate}
\item the optimal action survives elimination at every layer;
\item every arm surviving a narrow layer is near-optimal;
\item stopping at a nonterminal layer shows a large online width;
\item the layerwise online widths satisfy an elliptical-potential bound.
\end{enumerate}
In our wrapper these are supplied, respectively, by
Lemmas~\ref{lem:optimal-survives}, \ref{lem:survivor-quality},
\ref{lem:online-trigger}, and \ref{lem:online-potential}. The proof of
Theorem~1 in \cite{chu2011} is purely pathwise once those four ingredients hold, so the same dyadic
summation yields $R_T\le C_{\mathrm{SL}}\sqrt{Td\,L_T^3}$ on $\cE$.
\end{proof}

\section{Proof of Theorem~\ref{thm:main}}
\label{app:proof-main}

We now prove the pooled part of the theorem in full detail.

For each layer let
\[
Q_\ell:=\{t\in[T]:\ell_t=\ell\},
\qquad
A_\ell^{\mathrm{fin}}
:=
\sum_{t\in Q_\ell}x_{t,a_t}x_{t,a_t}^\top.
\]
Define
\[
\Lambda_\ell
:=
\log\frac{\det(A_\ell^{\mathrm{fin}}+G_{\mathrm{off}})}{\det(G_{\mathrm{off}})},
\qquad
\widetilde\Lambda_T
:=
\sum_{\ell=0}^{\lceil\log_2 T\rceil}\Lambda_\ell,
\qquad
\widetilde\Psi_T
:=
\sum_{t=1}^T \psi_{t,\ell_t}(x_{t,a_t}).
\]
Also set
\[
\delta_{\ell,\min}^{\mathrm{pool}}
:=
\min_{t\le T}\delta_{t,\ell}^{\mathrm{pool}},
\qquad
\gamma_\ell^{\mathrm{fin}}
:=
\sigma\sqrt{\Lambda_\ell+2\log(1/\delta_{\ell,\min}^{\mathrm{pool}})},
\]
\[
\Lambda_{\max}
:=
\max_{0\le \ell\le \lceil\log_2 T\rceil}\Lambda_\ell,
\qquad
\delta_{\min}^{\mathrm{pool}}
:=
\min_{t,\ell}\delta_{t,\ell}^{\mathrm{pool}},
\qquad
\Gamma_{\mathrm{pool}}
:=
\beta_{\mathrm{off}}
+
\sigma\sqrt{\Lambda_{\max}+2\log(1/\delta_{\min}^{\mathrm{pool}})},
\]
and
\[
\Gamma_{\mathrm{spec}}
:=
\beta_{\mathrm{off}}
+
\sigma\sqrt{\Phi_{G_{\mathrm{off}}}(T)+2\log(1/\delta_{\min}^{\mathrm{pool}})}.
\]

\begin{lemma}[Layerwise pooled elliptical potential]
\label{lem:pooled-potential}
For every layer $\ell$,
\[
\sum_{t\in Q_\ell}\|x_{t,a_t}\|_{(A_{t,\ell}+G_{\mathrm{off}})^{-1}}^2
\le
2\log\frac{\det(A_\ell^{\mathrm{fin}}+G_{\mathrm{off}})}{\det(G_{\mathrm{off}})}.
\]
\end{lemma}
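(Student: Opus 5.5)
The plan is to mirror the proof of Lemma~\ref{lem:online-potential}, replacing the base matrix $I$ by $G_{\mathrm{off}}$. The one difference is that the truncation $\min\{1,\cdot\}$ is not needed, because the offline ridge design already enforces $G_{\mathrm{off}}\succeq I$.

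\textbf{Step 1: identify the layer's Gram matrices.} Enumerate $Q_\ell=\{t_1<\cdots<t_n\}$. By construction of Algorithm~\ref{alg:ellipsoidal-minucb}, $(x_{t,a_t},r_t)$ is appended to $\cH_{t+1,\ell}$ exactly when $\ell_t=\ell$. Hence $\cH_{t_j,\ell}$ consists precisely of the rounds $t_1,\dots,t_{j-1}$. Setting
\[
W_{\ell,j}:=G_{\mathrm{off}}+\sum_{i=1}^{j}x_{t_i,a_{t_i}}x_{t_i,a_{t_i}}^\top,
\]
we get $W_{\ell,0}=G_{\mathrm{off}}$, $W_{\ell,j-1}=A_{t_j,\ell}+G_{\mathrm{off}}$, and $W_{\ell,n}=A_\ell^{\mathrm{fin}}+G_{\mathrm{off}}$.

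\textbf{Step 2: telescope the determinants.} The matrix determinant lemma gives
\[
\det W_{\ell,j}=\det W_{\ell,j-1}\bigl(1+u_j\bigr),
\qquad
u_j:=\|x_{t_j,a_{t_j}}\|^2_{W_{\ell,j-1}^{-1}}.
\]
Taking logarithms and summing over $j$ yields
\[
\log\frac{\det(A_\ell^{\mathrm{fin}}+G_{\mathrm{off}})}{\det G_{\mathrm{off}}}=\sum_{j=1}^{n}\log(1+u_j).
\]

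\textbf{Step 3: remove the truncation.} This is the only point that differs from the online potential lemma. Since $W_{\ell,j-1}\succeq G_{\mathrm{off}}\succeq I$, we have $W_{\ell,j-1}^{-1}\preceq I$. Together with $\|x_{t,a}\|_2\le 1$ from Assumption~\ref{asm:linear}, this gives $u_j\le 1$. On $[0,1]$, concavity of $\log(1+u)$ gives $\log(1+u)\ge u\log 2\ge u/2$. Therefore $u_j\le 2\log(1+u_j)$ for every $j$, and summing over $j$ gives the claim.

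No step is a serious obstacle. The only care needed is in Step 1: confirming that the layer-$\ell$ history used at round $t_j$ contains exactly the earlier rounds of $Q_\ell$. This follows directly from the append rule in the algorithm.
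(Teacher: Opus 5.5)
Your proposal is correct and follows essentially the same route as the paper's proof. Both telescope the determinants via the matrix determinant lemma starting from $W_{\ell,0}=G_{\mathrm{off}}$, then use $G_{\mathrm{off}}\succeq I$ to get $u_j\le 1$ and hence $\log(1+u_j)\ge u_j/2$ without truncation. Your explicit justification from the append rule that $\mathcal{H}_{t_j,\ell}$ contains exactly $t_1,\dots,t_{j-1}$ is a step the paper leaves implicit.
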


\begin{proof}
Enumerate $Q_\ell$ as $t_1<\cdots<t_{|Q_\ell|}$ and define
\[
W_{\ell,0}:=G_{\mathrm{off}},
\qquad
W_{\ell,j}:=G_{\mathrm{off}}+\sum_{i=1}^j x_{t_i,a_{t_i}}x_{t_i,a_{t_i}}^\top.
\]
Then $W_{\ell,j-1}=A_{t_j,\ell}+G_{\mathrm{off}}$ and
$W_{\ell,|Q_\ell|}=A_\ell^{\mathrm{fin}}+G_{\mathrm{off}}$. The matrix-determinant lemma yields
\[
\det(W_{\ell,j})
=
\det(W_{\ell,j-1})
\left(
1+\|x_{t_j,a_{t_j}}\|_{W_{\ell,j-1}^{-1}}^2
\right).
\]
Taking logarithms and summing,
\[
\log\frac{\det(A_\ell^{\mathrm{fin}}+G_{\mathrm{off}})}{\det(G_{\mathrm{off}})}
=
\sum_{j=1}^{|Q_\ell|}
\log\left(1+\|x_{t_j,a_{t_j}}\|_{W_{\ell,j-1}^{-1}}^2\right).
\]
Since $G_{\mathrm{off}}\succeq I$ and $W_{\ell,j-1}\succeq G_{\mathrm{off}}$, we have
$W_{\ell,j-1}^{-1}\preceq I$, hence for $\|x_{t_j,a_{t_j}}\|_2\le 1$,
\[
\|x_{t_j,a_{t_j}}\|_{W_{\ell,j-1}^{-1}}^2\le 1.
\]
Therefore $\log(1+u)\ge u/2$ on this range, and the claim follows.
\end{proof}

\begin{corollary}[Pooled variance accumulation]
\label{cor:pooled-variance}
We have
\[
\sum_{t=1}^T \|x_{t,a_t}\|_{(A_{t,\ell_t}+G_{\mathrm{off}})^{-1}}
\le
\sqrt{2T\,\widetilde\Lambda_T}.
\]
\end{corollary}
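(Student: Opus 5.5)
The plan is to reduce the global sum to layerwise sums and then combine them with two applications of Cauchy--Schwarz. Since every round stops at exactly one layer, the sets $Q_\ell=\{t:\ell_t=\ell\}$, $\ell=0,\dots,\lceil\log_2 T\rceil$, partition $[T]$. We therefore write
\[
\sum_{t=1}^T \|x_{t,a_t}\|_{(A_{t,\ell_t}+G_{\mathrm{off}})^{-1}}
=\sum_{\ell=0}^{\lceil\log_2 T\rceil}\sum_{t\in Q_\ell}\|x_{t,a_t}\|_{(A_{t,\ell}+G_{\mathrm{off}})^{-1}}.
\]
On each layer, for $t\in Q_\ell$, the matrix $A_{t,\ell}$ is built from exactly the earlier rounds of $Q_\ell$, which is the setup of Lemma~\ref{lem:pooled-potential}.

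For a fixed layer, apply Cauchy--Schwarz over $Q_\ell$:
\[
\sum_{t\in Q_\ell}\|x_{t,a_t}\|_{(A_{t,\ell}+G_{\mathrm{off}})^{-1}}
\le \sqrt{|Q_\ell|\sum_{t\in Q_\ell}\|x_{t,a_t}\|^2_{(A_{t,\ell}+G_{\mathrm{off}})^{-1}}}.
\]
By Lemma~\ref{lem:pooled-potential}, the sum of squares is at most $2\Lambda_\ell$. No truncation $\min\{1,\cdot\}$ is needed, because $G_{\mathrm{off}}\succeq I$ already forces each squared norm to be at most $1$. This gives the per-layer bound $\sqrt{2|Q_\ell|\Lambda_\ell}$.

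Summing over layers, apply Cauchy--Schwarz once more, now over $\ell$:
\[
\sum_\ell\sqrt{2|Q_\ell|\Lambda_\ell}\le\sqrt{2\Bigl(\sum_\ell|Q_\ell|\Bigr)\Bigl(\sum_\ell\Lambda_\ell\Bigr)}=\sqrt{2T\,\widetilde\Lambda_T},
\]
where the equality uses $\sum_\ell|Q_\ell|=T$. Equivalently, one can apply a single Cauchy--Schwarz over all $t$ and bound the total sum of squares by $\sum_\ell 2\Lambda_\ell$.

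The only step needing care is the indexing check. We must confirm that $A_{t,\ell}$, for $t\in Q_\ell$, equals $W_{\ell,j-1}-G_{\mathrm{off}}$ in the enumeration used in Lemma~\ref{lem:pooled-potential}. This holds because the algorithm appends $(x_{t,a_t},r_t)$ to $\cH_{t+1,\ell}$ only at the stopping layer $\ell=\ell_t$, so each layer's history consists of precisely the rounds of $Q_\ell$ preceding $t$. Once this is verified, the argument is purely deterministic.
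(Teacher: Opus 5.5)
Your proposal is correct and follows the paper's proof essentially verbatim: a per-layer Cauchy--Schwarz, then Lemma~\ref{lem:pooled-potential} to bound each layer's squared sum by $2\Lambda_\ell$, then a second Cauchy--Schwarz across layers using $\sum_\ell |Q_\ell| = T$. Your explicit check that $\cH_{t,\ell}$ consists exactly of the earlier rounds of $Q_\ell$ (so that $A_{t_j,\ell}+G_{\mathrm{off}}=W_{\ell,j-1}$) is a detail the paper leaves implicit inside the lemma's proof.
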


\begin{proof}
For each layer,
\[
\sum_{t\in Q_\ell}\|x_{t,a_t}\|_{(A_{t,\ell}+G_{\mathrm{off}})^{-1}}
\le
\sqrt{|Q_\ell|}
\left(
\sum_{t\in Q_\ell}\|x_{t,a_t}\|_{(A_{t,\ell}+G_{\mathrm{off}})^{-1}}^2
\right)^{1/2}
\]
by Cauchy--Schwarz. Apply Lemma~\ref{lem:pooled-potential}, then sum over $\ell$ and apply
Cauchy--Schwarz again across layers.
\end{proof}

\begin{proof}[Proof of Theorem~\ref{thm:main}]
On $\cE$, Proposition~\ref{prop:online-fallback} gives the fallback term
$C_{\mathrm{SL}}\sqrt{Td\,L_T^3}$. It therefore remains to prove the pooled term.

By Lemma~\ref{lem:stopping-regret},
\[
R_T
\le
8\sum_{t=1}^T \min\left\{1,\mathrm{rad}_{t,\ell_t}^{\mathrm{pool}}(x_{t,a_t})\right\}
\le
8\sum_{t=1}^T \mathrm{rad}_{t,\ell_t}^{\mathrm{pool}}(x_{t,a_t}),
\]
since $\min\{1,r\}\le r$ for $r\ge 0$. Expanding the pooled radius,
\[
R_T
\le
8\sum_{\ell=0}^{\lceil\log_2 T\rceil}\sum_{t\in Q_\ell}
\bigl(\gamma_{t,\ell_t}+\beta_{\mathrm{off}}\bigr)
\|x_{t,a_t}\|_{(A_{t,\ell_t}+G_{\mathrm{off}})^{-1}}
+8\rho\sum_{t=1}^T \psi_{t,\ell_t}(x_{t,a_t}).
\]
For every $t\in Q_\ell$, we have $\delta_{t,\ell}^{\mathrm{pool}}\ge \delta_{\ell,\min}^{\mathrm{pool}}$
and $A_{t,\ell}\preceq A_\ell^{\mathrm{fin}}$, so
\[
\det(A_{t,\ell}+G_{\mathrm{off}})\le \det(A_\ell^{\mathrm{fin}}+G_{\mathrm{off}})
\]
and hence $\gamma_{t,\ell}\le \gamma_\ell^{\mathrm{fin}}$.
Therefore
\[
R_T
\le
8\sum_{\ell=0}^{\lceil\log_2 T\rceil}
\bigl(\beta_{\mathrm{off}}+\gamma_\ell^{\mathrm{fin}}\bigr)
\sum_{t\in Q_\ell}\|x_{t,a_t}\|_{(A_{t,\ell}+G_{\mathrm{off}})^{-1}}
+ 8\rho\,\widetilde\Psi_T.
\]
For each layer, Cauchy--Schwarz and Lemma~\ref{lem:pooled-potential} give
\[
\sum_{t\in Q_\ell}\|x_{t,a_t}\|_{(A_{t,\ell}+G_{\mathrm{off}})^{-1}}
\le
\sqrt{|Q_\ell|}
\left(
\sum_{t\in Q_\ell}\|x_{t,a_t}\|_{(A_{t,\ell}+G_{\mathrm{off}})^{-1}}^2
\right)^{1/2}
\le
\sqrt{2|Q_\ell|\Lambda_\ell}.
\]
Substituting this into the previous display proves the theorem's first pooled bound:
\[
R_T
\le
8\sum_{\ell=0}^{\lceil\log_2 T\rceil}
\bigl(\beta_{\mathrm{off}}+\gamma_\ell^{\mathrm{fin}}\bigr)\sqrt{2|Q_\ell|\Lambda_\ell}
+ 8\rho\,\widetilde\Psi_T.
\]
For the coarser simplification, note that $\delta_{\ell,\min}^{\mathrm{pool}}\ge\delta_{\min}^{\mathrm{pool}}$
and $\Lambda_\ell\le \Lambda_{\max}$, so every $\gamma_\ell^{\mathrm{fin}}$ is at most
$\Gamma_{\mathrm{pool}}-\beta_{\mathrm{off}}$. Hence
\[
\sum_{\ell=0}^{\lceil\log_2 T\rceil}
\bigl(\beta_{\mathrm{off}}+\gamma_\ell^{\mathrm{fin}}\bigr)\sqrt{2|Q_\ell|\Lambda_\ell}
\le
\Gamma_{\mathrm{pool}}
\sum_{\ell=0}^{\lceil\log_2 T\rceil}\sqrt{2|Q_\ell|\Lambda_\ell}.
\]
Applying Cauchy--Schwarz across layers and using $\sum_\ell |Q_\ell|=T$ yields
\[
\sum_{\ell=0}^{\lceil\log_2 T\rceil}\sqrt{2|Q_\ell|\Lambda_\ell}
\le
\sqrt{2T\,\widetilde\Lambda_T},
\]
which proves the theorem's second pooled bound. Both the fallback term and the pooled terms hold on
the same event $\cE$, so taking the minimum of the two completes the proof.
\end{proof}

\section{Proof of Theorem~\ref{thm:spectral-pooled-envelope}}

\begin{proof}
Write $L_{\max}=\lceil\log_2 T\rceil+1$, matching the number of SupLinUCB layers indexed by
$\ell\in\{0,\ldots,\lceil\log_2 T\rceil\}$.
Fix $\ell$ and diagonalize $G_{\mathrm{off}}=U\,\mathrm{diag}(g_1,\ldots,g_d)\,U^\top$ with $U$ orthogonal.
Let $\widetilde A_\ell:=U^\top A_\ell^{\mathrm{fin}}U$.
Hadamard's inequality gives
\[
\det(A_\ell^{\mathrm{fin}}+G_{\mathrm{off}})
=
\det(\widetilde A_\ell+\mathrm{diag}(g_1,\ldots,g_d))
\le
\prod_{j=1}^d\bigl(g_j+(\widetilde A_\ell)_{jj}\bigr).
\]
Therefore
\[
\Lambda_\ell
=
\log\frac{\det(A_\ell^{\mathrm{fin}}+G_{\mathrm{off}})}{\det(G_{\mathrm{off}})}
\le
\sum_{j=1}^d
\log\left(1+\frac{(\widetilde A_\ell)_{jj}}{g_j}\right).
\]
Because $\sum_j (\widetilde A_\ell)_{jj}=\mathrm{tr}(A_\ell^{\mathrm{fin}})\le |Q_\ell|$ and each played feature has
$\|x_{t,a_t}\|_2\le 1$, Definition~\ref{def:waterfill-info} yields
$\Lambda_\ell\le \Phi_{G_{\mathrm{off}}}(|Q_\ell|)$.
Summing over layers,
\[
\widetilde\Lambda_T
\le
\sum_{\ell=0}^{\lceil\log_2 T\rceil}\Phi_{G_{\mathrm{off}}}(|Q_\ell|).
\]
The map $B\mapsto \Phi_G(B)$ is nondecreasing and concave in $B$: it is the value function of maximizing a sum of
concave coordinate functions $\log(1+a_j/g_j)$ subject to $a_j\ge 0$ and $\sum_j a_j\le B$.
Hence Jensen's inequality implies
\[
\frac{1}{L_{\max}}\sum_{\ell=0}^{\lceil\log_2 T\rceil}\Phi_{G_{\mathrm{off}}}(|Q_\ell|)
\le
\Phi_{G_{\mathrm{off}}}\!\left(\frac{1}{L_{\max}}\sum_{\ell=0}^{\lceil\log_2 T\rceil}|Q_\ell|\right)
=
\Phi_{G_{\mathrm{off}}}\!\left(\frac{T}{L_{\max}}\right),
\]
since $\sum_\ell |Q_\ell|=T$.
Multiplying by $L_{\max}$ proves the spectral envelope bound on $\widetilde\Lambda_T$.

Define
\[
c_{\mathrm{align}}
:=
\lambda_{\max}\!\left(G_{\mathrm{off}}^{1/2}M_{\mathrm{bias}}^{-1}G_{\mathrm{off}}^{1/2}\right).
\]
Then $G_{\mathrm{off}}^{1/2}M_{\mathrm{bias}}^{-1}G_{\mathrm{off}}^{1/2}\preceq c_{\mathrm{align}}I$, hence, conjugating by
$G_{\mathrm{off}}^{1/2}$,
\[
G_{\mathrm{off}}M_{\mathrm{bias}}^{-1}G_{\mathrm{off}}\preceq c_{\mathrm{align}}G_{\mathrm{off}}.
\]
For the alignment bound on $\widetilde\Psi_T$, repeat the Cauchy--Schwarz argument used previously for the pooled regime:
write $A:=A_{t,\ell}$, $G:=G_{\mathrm{off}}$, and $M:=M_{\mathrm{bias}}$.
Under $GM^{-1}G\preceq c_{\mathrm{align}}G$,
\[
\psi_{t,\ell}(x)^2
\le
c_{\mathrm{align}}\,\|x\|_{(A+G)^{-1}}^2,
\]
then sum along the trajectory and invoke Lemma~\ref{lem:pooled-potential} layer by layer to obtain
$\sum_{t=1}^T \psi_{t,\ell_t}(x_{t,a_t})^2\le 2c_{\mathrm{align}}\widetilde\Lambda_T$, hence
$\widetilde\Psi_T\le\sqrt{2c_{\mathrm{align}}T\,\widetilde\Lambda_T}$.

Finally, substitute these inequalities into the pooled branch bound from Theorem~\ref{thm:main}:
\[
R_T
\le
8\Gamma_{\mathrm{pool}}\sqrt{2T\,\widetilde\Lambda_T}
+
8\rho\sqrt{2c_{\mathrm{align}}T\,\widetilde\Lambda_T},
\]
so
\[
R_T
\le
8\bigl(\Gamma_{\mathrm{pool}}+\rho\sqrt{c_{\mathrm{align}}}\bigr)
\sqrt{2T\,\widetilde\Lambda_T}.
\]
Since $\Lambda_\ell\le \Phi_{G_{\mathrm{off}}}(|Q_\ell|)\le \Phi_{G_{\mathrm{off}}}(T)$ for every
layer, we also have $\Lambda_{\max}\le \Phi_{G_{\mathrm{off}}}(T)$ and therefore
$\Gamma_{\mathrm{pool}}\le \Gamma_{\mathrm{spec}}$.
Substituting this and the bound
\(
\widetilde\Lambda_T\le L_{\max}\Phi_{G_{\mathrm{off}}}(T/L_{\max})
\)
proves the displayed theorem.
The online fallback is unchanged from Theorem~\ref{thm:main}.
\end{proof}

\begin{proof}[Proof of Corollary~\ref{cor:pooled-better-regime}]
The first display is exactly the intermediate inequality obtained at the end of the proof of
Theorem~\ref{thm:spectral-pooled-envelope}. Squaring the pooled term yields
\[
\left(
8\bigl(\Gamma_{\mathrm{pool}}+\rho\sqrt{c_{\mathrm{align}}}\bigr)
\sqrt{2T\,\widetilde\Lambda_T}
\right)^2
=
128\,T\,\widetilde\Lambda_T
\bigl(\Gamma_{\mathrm{pool}}+\rho\sqrt{c_{\mathrm{align}}}\bigr)^2.
\]
Hence the stated condition implies that the pooled term is at most
$C_{\mathrm{SL}}\sqrt{Td\,L_T^3}$, i.e.\ no larger than the online fallback.
\end{proof}

\section{Proof of Theorem~\ref{thm:lower}}
\label{app:proof-lower}

\begin{proof}
It is enough to prove the theorem with a fixed absolute constant $m_0=32$. Fix any
(possibly randomized) policy $\pi$, and fix any subset $S\subset[d]$ with
\[
m:=|S|\ge 32.
\]
Write
\[
N:=N(S)=\sum_{i\in S} n_i,
\qquad
\underline v:=v(S)=\min_{i\in S}\frac{\rho}{\sqrt{b_i}}.
\]

\paragraph{Case 1: $N=0$.}
Since $n_i=0$ for every $i\in S$, the offline sample carries no information about the coordinates in
$S$. Consider the baseline instance
\[
P_0:\qquad \theta^*=\theta^\dagger=0,
\]
and, for each $j\in S$, the alternative
\[
P_j:\qquad
\theta_j^*=\theta_j^\dagger=\Delta,
\qquad
\theta_i^*=\theta_i^\dagger=0\quad\text{for }i\neq j.
\]
Each $P_j$ belongs to $\mathcal H_{\mathrm{diag}}(G_{\mathrm{off}},M_{\mathrm{bias}},\rho)$ because
the bias is zero.

Let
\[
N_j:=\sum_{t=1}^T \mathbf{1}\{a_t=j\}.
\]
Under $P_0$, there exists $j^\star\in S$ with
\[
\E_{P_0}[N_{j^\star}]\le \frac{T}{m},
\]
since $\sum_{j\in S}\E_{P_0}[N_j]\le T$. Define
\[
E:=\{N_{j^\star}>T/2\}.
\]
Then Markov's inequality gives
\[
\Pr_{P_0}(E)\le \frac{\E_{P_0}[N_{j^\star}]}{T/2}\le \frac{2}{m}\le \frac{1}{16}.
\]
The offline laws of $P_0$ and $P_{j^\star}$ are identical, and the online KL is
\[
\mathrm{KL}(P_0,P_{j^\star})
=
\frac{\Delta^2}{2}\E_{P_0}[N_{j^\star}]
\le
\frac{\Delta^2T}{2m}.
\]
Choose
\[
\Delta:=\frac{1}{8}\sqrt{\frac{m}{T}}.
\]
Then $\mathrm{KL}(P_0,P_{j^\star})\le 1/128<1/100$. By Bretagnolle--Huber,
\[
\Pr_{P_{j^\star}}(E^c)
\ge
\frac12 e^{-1/100}-\frac{1}{16}
> c_0
\]
for a universal constant $c_0>0$.

Under $P_{j^\star}$, arm $j^\star$ is uniquely optimal with gap $\Delta$. On $E^c$, the optimal arm
is pulled at most $T/2$ times, so at least $T/2$ rounds incur gap $\Delta$. Hence
\[
\E_{P_{j^\star}}[R_T]
\ge
\frac{\Delta T}{2}\Pr_{P_{j^\star}}(E^c)
\ge
c_1\sqrt{mT}.
\]
When $N=0$,
\[
\Psi(S)=\sqrt{\frac{mT^2}{T+N}}+\min\left\{\frac{NT}{T+N}\underline v,\,
\sqrt{\frac{mNT}{T+N}}\right\}=\sqrt{mT},
\]
so
\[
\sup_{I\in \mathcal H_{\mathrm{diag}}(G_{\mathrm{off}},M_{\mathrm{bias}},\rho)}
\E_I[R_T]
\ge
c_1\Psi(S).
\]

\paragraph{Case 2: $N>0$.}
Define the clipped bias size
\[
\bar v:=\min\left\{\underline v,\;\frac{1}{32}\sqrt{\frac{m(T+N)}{NT}}\right\}.
\]
Consider the baseline instance
\[
P_0:\qquad \theta^*=\theta^\dagger=0,
\]
and, for each $j\in S$, the alternative
\[
P_j:\qquad
\theta_j^*=\mu,
\qquad
\theta_j^\dagger=\mu-\bar v,
\qquad
\theta_i^*=\theta_i^\dagger=0\quad\text{for }i\neq j,
\]
where
\[
\mu:=\frac{N}{N+T}\bar v+\frac{1}{64}\sqrt{\frac{m}{N+T}}.
\]
Since $\bar v\le \underline v\le \rho/\sqrt{b_j}$ for every $j\in S$, each $P_j$ belongs to
$\mathcal H_{\mathrm{diag}}(G_{\mathrm{off}},M_{\mathrm{bias}},\rho)$.

Again write
\[
N_j:=\sum_{t=1}^T \mathbf{1}\{a_t=j\}.
\]
Under $P_0$, define
\[
B_1:=\{j\in S:\; n_j>4N/m\},
\qquad
B_2:=\{j\in S:\; \E_{P_0}[N_j]>4T/m\}.
\]
Since
\[
\sum_{j\in S} n_j=N
\qquad\text{and}\qquad
\sum_{j\in S}\E_{P_0}[N_j]\le T,
\]
we have
\[
|B_1|\le \frac{m}{4},
\qquad
|B_2|\le \frac{m}{4}.
\]
Therefore there exists an arm $j^\star\in S\setminus(B_1\cup B_2)$ such that
\begin{equation}
\label{eq:diag-lb-jstar}
n_{j^\star}\le \frac{4N}{m},
\qquad
\E_{P_0}[N_{j^\star}]\le \frac{4T}{m}.
\end{equation}

Only arm $j^\star$ differs between $P_0$ and $P_{j^\star}$. Since offline and online rewards are
Gaussian with unit variance,
\[
\mathrm{KL}(P_0,P_{j^\star})
=
\frac12\left(
n_{j^\star}(\mu-\bar v)^2+\E_{P_0}[N_{j^\star}]\mu^2
\right).
\]
Using \eqref{eq:diag-lb-jstar},
\[
\mathrm{KL}(P_0,P_{j^\star})
\le
\frac12\left(
\frac{4N}{m}(\mu-\bar v)^2+\frac{4T}{m}\mu^2
\right).
\]
Set
\[
a:=\frac{4N}{m},
\qquad
b:=\frac{4T}{m}.
\]
Then
\[
\mu=\frac{a}{a+b}\bar v+\frac{1}{32\sqrt{a+b}},
\]
because \(a+b=4(N+T)/m\). A direct expansion gives
\[
a(\mu-\bar v)^2+b\mu^2=\frac{ab}{a+b}\bar v^2+\frac{1}{32^2}.
\]
Also,
\[
\frac{ab}{a+b}=\frac{4NT}{m(N+T)}
\]
and, by the definition of $\bar v$,
\[
\bar v^2\le \frac{1}{32^2}\frac{m(T+N)}{NT}.
\]
Hence
\[
\frac{ab}{a+b}\bar v^2\le \frac{4}{32^2},
\]
so
\[
\mathrm{KL}(P_0,P_{j^\star})
\le
\frac12\left(\frac{4}{32^2}+\frac{1}{32^2}\right)
=
\frac{5}{2048}
<
\frac{1}{100}.
\tag{$\ast$}
\]

Define
\[
E:=\{N_{j^\star}>T/2\}.
\]
By Markov's inequality and \eqref{eq:diag-lb-jstar},
\[
\Pr_{P_0}(E)\le \frac{\E_{P_0}[N_{j^\star}]}{T/2}\le \frac{8}{m}\le \frac14,
\]
since $m\ge 32$. Bretagnolle--Huber and \((\ast)\) give
\[
\Pr_{P_{j^\star}}(E^c)
\ge
\frac12 e^{-1/100}-\frac14
> c_2
\]
for a universal constant $c_2>0$.

In instance $P_{j^\star}$, arm $j^\star$ is uniquely optimal with gap $\mu$. On $E^c$, the optimal
arm is pulled at most $T/2$ times, so at least $T/2$ rounds incur gap $\mu$. Therefore
\[
\E_{P_{j^\star}}[R_T]
\ge
\frac{\mu T}{2}\Pr_{P_{j^\star}}(E^c)
\ge
c_3\,\mu T.
\]
Finally,
\[
\mu T
=
\frac{NT}{N+T}\bar v+\frac{1}{64}\sqrt{\frac{mT^2}{N+T}},
\]
and by the definition of $\bar v$,
\[
\frac{NT}{N+T}\bar v
=
\min\left\{
\frac{NT}{N+T}\,\underline v,\;
\frac{1}{32}\sqrt{\frac{mNT}{N+T}}
\right\}.
\]
Hence
\[
\sup_{I\in \mathcal H_{\mathrm{diag}}(G_{\mathrm{off}},M_{\mathrm{bias}},\rho)}
\E_I[R_T]
\ge
c_4\left[
\sqrt{\frac{mT^2}{T+N}}
+
\min\left\{
\frac{NT}{T+N}\,\underline v,\;
\sqrt{\frac{mNT}{T+N}}
\right\}
\right]
=
c_4\,\Psi(S).
\]

Both cases show that, for every subset $S\subset[d]$ with $|S|\ge 32$,
\[
\sup_{I\in \mathcal H_{\mathrm{diag}}(G_{\mathrm{off}},M_{\mathrm{bias}},\rho)}
\E_I[R_T]
\ge
c_5\,\Psi(S)
\ge
c_5\min\{\sqrt{dT},\Psi(S)\},
\]
since $\min\{\sqrt{dT},\Psi(S)\}\le \Psi(S)$. Taking the maximum over all such subsets proves the
theorem with \(m_0=32\).
\end{proof}

\section{Data-driven directional bias certificates: definitions and proofs (Section~\ref{sec:epoch-wise-ell-minucb})}
\label{app:epoch-learned-proofs}

\subsection{Definitions and Algorithm}
\label{app:epoch-dd-details}

In this section, we give the detailed definitions and proofs in
Section~\ref{sec:epoch-wise-ell-minucb}. We first define the epoch-wise pooled width and the full
algorithm, then record the geometric identity behind the learned certificate, and finally prove the
validity and regret guarantees together with the doubling-epoch refinement. The pseudocode of the algorithms in given in Algorithm~\ref{alg:epoch-ell-minucb}.

On rounds $t\in\{\tau_k,\ldots,\tau_{k+1}-1\}$ one replaces
$\rho\,\psi_{t,\ell}$ of the main text by $\widehat\rho_k\,\widehat\psi_{k,t,\ell}$ with
\begin{equation}
\label{eq:epoch-psi}
\widehat\psi_{k,t,\ell}(x)
:=
\left\|
\widehat M_{k,\mathrm{bias}}^{-1/2}G_{\mathrm{off}}(A_{t,\ell}+G_{\mathrm{off}})^{-1}x
\right\|_2
\end{equation}
and
\begin{equation}
\label{eq:epoch-pool-rad}
\widehat{\mathrm{rad}}^{\mathrm{pool,ep}}_{t,\ell}(x)
:=
\bigl(\gamma_{t,\ell}+\beta_{\mathrm{off}}\bigr)\|x\|_{(A_{t,\ell}+G_{\mathrm{off}})^{-1}}
+\widehat\rho_k\,\widehat\psi_{k,t,\ell}(x).
\end{equation}

\begin{algorithm}[t]
\caption{Data-Driven Directional Ellipsoidal-MINUCB}
\label{alg:epoch-ell-minucb}
\small
\begin{algorithmic}[1]
\Require $D_{\mathrm{off}}$, $T$, prespecified
\(
1{=}\tau_1{<}\cdots{<}\tau_{K_{\mathrm{ep}}}{\le}T
\)
and
\(
\tau_{K_{\mathrm{ep}}+1}{=}T{+}1
\);
confidence schedules
$\{\delta_{t,\ell}^{\mathrm{on}}\}$,
$\{\delta_{t,\ell}^{\mathrm{pool}}\}$ as in Algorithm~\ref{alg:ellipsoidal-minucb},
$\delta_{\mathrm{off}}$, and
$\{\delta_k\}_{k=1}^{K_{\mathrm{ep}}}$ for $\beta_k^{\mathrm{on}}(\delta_k)$, satisfying
\(
\delta_{\mathrm{off}}{+}\sum_{k=1}^{K_{\mathrm{ep}}}\delta_k{\le}\delta_{\mathrm{bias}}
\)
\State Build
$G_{\mathrm{off}}$, $\hat\theta_{\mathrm{off}}$, and
$\beta_{\mathrm{off}}(\delta_{\mathrm{off}})$ from~$D_{\mathrm{off}}$
\State
$L\gets\lceil\log_2 T\rceil$; initialize
$\cH_{1,\ell}\gets\varnothing$ for $\ell=0,1,\dots,L$
\For{each epoch $k=1,2,\dots,K_{\mathrm{ep}}$}
    \State
    $G_k^{\mathrm{on}}\gets I{+}\sum_{s<\tau_k} x_{s,a_s}x_{s,a_s}^{\top}$;\quad
    $\hat\theta_k^{\mathrm{on}}\gets (G_k^{\mathrm{on}})^{-1}\sum_{s<\tau_k} x_{s,a_s}r_s$
    \State
    $\widehat M_{k,\mathrm{bias}}\gets\bigl((G_k^{\mathrm{on}})^{-1}{+}G_{\mathrm{off}}^{-1}\bigr)^{-1}$;\quad
    $\widehat\Delta_k\gets\hat\theta_k^{\mathrm{on}}-\hat\theta_{\mathrm{off}}$
    \State
    $\widehat\rho_k\gets
    \|\widehat\Delta_k\|_{\widehat M_{k,\mathrm{bias}}}
    {+}\beta_k^{\mathrm{on}}(\delta_k)
    {+}\beta_{\mathrm{off}}(\delta_{\mathrm{off}})$
    \For{each $t=\tau_k,\dots,\tau_{k+1}-1$}
        \State Execute the per-round layer loop of
        Algorithm~\ref{alg:ellipsoidal-minucb} at time~$t$,
        using \eqref{eq:epoch-pool-rad}--\eqref{eq:epoch-psi} in place of
        $\mathrm{rad}^{\mathrm{pool}}_{t,\ell}(\cdot)$ when forming the pooled UCB/LCB
    \EndFor
\EndFor
\end{algorithmic}
\end{algorithm}

\subsection{A Geometric Identity for the Parallel Sum}

We begin with the geometric identity underlying the learned certificate.

\begin{proposition}[Geometric characterization of the parallel sum]
\label{prop:parallel-sum-variational}
For any epoch~$k$ and any vector $x\in\R^d$,
\[
\|x\|_{\widehat M_{k,\mathrm{bias}}}^2
=
\inf_{x=u+v}
\Bigl\{
\|u\|_{G_k^{\mathrm{on}}}^2+\|v\|_{G_{\mathrm{off}}}^2
\Bigr\}.
\]
\end{proposition}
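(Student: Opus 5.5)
The plan is to solve the infimum explicitly as a strictly convex quadratic minimization over a single free variable. Write $A:=G_k^{\mathrm{on}}$ and $B:=G_{\mathrm{off}}$. Both are symmetric positive definite, since each equals $I$ plus a positive semidefinite Gram matrix. Parametrize the decompositions by $u$ and set $v=x-u$. The objective becomes
\[
f(u):=u^\top A u+(x-u)^\top B(x-u).
\]
This is strictly convex with Hessian $2(A+B)\succ 0$, so the infimum is attained at the unique stationary point. The stationarity condition $Au-B(x-u)=0$ gives
\[
u^\star=(A+B)^{-1}Bx,\qquad v^\star=x-u^\star=(A+B)^{-1}Ax.
\]

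Next I substitute and simplify. Using $Au^\star=Bv^\star$, the value is
\[
f(u^\star)=u^{\star\top}Au^\star+v^{\star\top}Bv^\star=u^{\star\top}Bv^\star+v^{\star\top}Bv^\star=(u^\star+v^\star)^\top Bv^\star=x^\top B(A+B)^{-1}Ax.
\]
It then remains to check the matrix identity $B(A+B)^{-1}A=(A^{-1}+B^{-1})^{-1}=\widehat M_{k,\mathrm{bias}}$. For this I write
\[
A^{-1}+B^{-1}=A^{-1}(A+B)B^{-1},
\]
and inverting gives $B(A+B)^{-1}A$. This is exactly the claimed $\|x\|^2_{\widehat M_{k,\mathrm{bias}}}$.

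The only mildly delicate step is this matrix identity together with the symmetry of $B(A+B)^{-1}A$. I handle symmetry by noting that the product equals the inverse of the symmetric matrix $A^{-1}+B^{-1}$, so no separate argument is needed. Attainment of the infimum follows from strict convexity.

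A Schur-complement or Lagrangian-duality argument would give an alternative route, but the direct computation above is complete.
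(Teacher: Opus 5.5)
Your proposal is correct and follows essentially the same route as the paper: you minimize the strictly convex quadratic $u^\top A u+(x-u)^\top B(x-u)$ and obtain $u^\star=(A+B)^{-1}Bx$. You then identify $x^\top B(A+B)^{-1}Ax$ with $\|x\|^2_{\widehat M_{k,\mathrm{bias}}}$ via $A^{-1}+B^{-1}=A^{-1}(A+B)B^{-1}$; using the stationarity relation $Au^\star=Bv^\star$ to evaluate the minimum is a slightly cleaner rendering of the substitution step that the paper states without detail.
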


\begin{proof}
Write $A:=G_k^{\mathrm{on}}$ and $B:=G_{\mathrm{off}}$ for brevity, and let
$M:=(A^{-1}+B^{-1})^{-1}$.
For any decomposition $x=u+v$,
\[
\|u\|_A^2+\|v\|_B^2
=
u^\top A u + (x-u)^\top B(x-u).
\]
This is a strictly convex quadratic in~$u$, whose unique minimizer satisfies
\[
(A+B)u=Bx,
\qquad\text{hence}\qquad
u=(A+B)^{-1}Bx,
\quad
v=(A+B)^{-1}Ax.
\]
Substituting these expressions gives
\[
\inf_{x=u+v}
\Bigl\{
\|u\|_A^2+\|v\|_B^2
\Bigr\}
=
x^\top B(A+B)^{-1}A x.
\]
It remains to identify the matrix on the right with~$M$. Since
\[
A^{-1}+B^{-1}=A^{-1}(A+B)B^{-1},
\]
we have
\[
M
=
(A^{-1}+B^{-1})^{-1}
=
B(A+B)^{-1}A.
\]
Therefore the minimum equals
\[
x^\top M x
=
\|x\|_M^2
=
\|x\|_{\widehat M_{k,\mathrm{bias}}}^2,
\]
which proves the claim.
\end{proof}

\subsection{Validity of the Learned Directional Certificate}

We next verify that the learned pair
\((\widehat M_{k,\mathrm{bias}},\widehat\rho_k)\) defines a valid epoch-wise directional certificate.

\begin{theorem}[Validity of the data-driven directional bias certificate]
\label{thm:epoch-parallel-bound-valid}
Suppose the standard offline ridge and epoch-wise online events hold, i.e.\
\(
\|\theta^\dagger-\hat\theta_{\mathrm{off}}\|_{G_{\mathrm{off}}}\le\beta_{\mathrm{off}}(\delta_{\mathrm{off}})
\)
and, for every~$k$,
\(
\|\theta_*-\hat\theta_k^{\mathrm{on}}\|_{G_k^{\mathrm{on}}}\le\beta_k^{\mathrm{on}}(\delta_k)
\).
If
\(
\delta_{\mathrm{off}}+\sum_{k=1}^{K_{\mathrm{ep}}}\delta_k
\le\delta_{\mathrm{bias}}
\)
then, with probability at least $1-\delta_{\mathrm{bias}}$,
\[
\|\theta_*-\theta^\dagger\|_{\widehat M_{k,\mathrm{bias}}}
\le
\widehat\rho_k
\qquad
\forall k\in\{1,\ldots,K_{\mathrm{ep}}\}.
\]
\end{theorem}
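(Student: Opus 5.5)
The plan is to split $\theta_*-\theta^\dagger$ around the two estimators and apply the triangle inequality in the $\widehat M_{k,\mathrm{bias}}$-norm. Write
\[
\theta_*-\theta^\dagger
=
(\hat\theta_k^{\mathrm{on}}-\hat\theta_{\mathrm{off}})
+(\theta_*-\hat\theta_k^{\mathrm{on}})
+(\hat\theta_{\mathrm{off}}-\theta^\dagger),
\]
so that
\[
\|\theta_*-\theta^\dagger\|_{\widehat M_{k,\mathrm{bias}}}
\le
\|\widehat\Delta_k\|_{\widehat M_{k,\mathrm{bias}}}
+\|\theta_*-\hat\theta_k^{\mathrm{on}}\|_{\widehat M_{k,\mathrm{bias}}}
+\|\hat\theta_{\mathrm{off}}-\theta^\dagger\|_{\widehat M_{k,\mathrm{bias}}}.
\]
The first term already appears in $\widehat\rho_k$. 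It then suffices to bound the second term by $\beta_k^{\mathrm{on}}(\delta_k)$ and the third by $\beta_{\mathrm{off}}(\delta_{\mathrm{off}})$.

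For these two comparisons I will use the variational identity of Proposition~\ref{prop:parallel-sum-variational}. Taking the trivial decompositions $x=x+0$ and $x=0+x$ gives
\[
\|x\|_{\widehat M_{k,\mathrm{bias}}}^2\le\|x\|_{G_k^{\mathrm{on}}}^2
\qquad\text{and}\qquad
\|x\|_{\widehat M_{k,\mathrm{bias}}}^2\le\|x\|_{G_{\mathrm{off}}}^2,
\]
that is, $\widehat M_{k,\mathrm{bias}}\preceq G_k^{\mathrm{on}}$ and $\widehat M_{k,\mathrm{bias}}\preceq G_{\mathrm{off}}$. The same ordering also follows directly from $(A^{-1}+B^{-1})^{-1}\preceq A$, because $A^{-1}+B^{-1}\succeq A^{-1}$ and matrix inversion reverses the Loewner order. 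On the assumed events, this yields
\[
\|\theta_*-\hat\theta_k^{\mathrm{on}}\|_{\widehat M_{k,\mathrm{bias}}}\le\beta_k^{\mathrm{on}}(\delta_k)
\qquad\text{and}\qquad
\|\hat\theta_{\mathrm{off}}-\theta^\dagger\|_{\widehat M_{k,\mathrm{bias}}}\le\beta_{\mathrm{off}}(\delta_{\mathrm{off}}).
\]
Summing the three terms gives $\widehat\rho_k$ for every $k$ simultaneously.

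For the probability statement, the offline event has probability at least $1-\delta_{\mathrm{off}}$ by Proposition~\ref{prop:derive-self-normalized}, applied conditionally on the non-adaptive covariates $\{z_i\}$. For each $k$, the epoch-$k$ online event has probability at least $1-\delta_k$. Here $\beta_k^{\mathrm{on}}(\delta_k)$ is, by hypothesis, a valid self-normalized ridge radius for the adaptively collected data before $\tau_k$; for instance, Lemma~\ref{lem:self-normalized-noise} with predictable $\phi_s=x_{s,a_s}$, followed by the argument of Proposition~\ref{prop:derive-self-normalized}. A union bound over the offline event and the $K_{\mathrm{ep}}$ epoch events shows that all of them hold with probability at least $1-\delta_{\mathrm{off}}-\sum_k\delta_k\ge 1-\delta_{\mathrm{bias}}$, and on this intersection the deterministic argument above applies to all $k$.

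The only delicate point is measurability. The epoch boundaries $\tau_k$ are prespecified, and $\tau_k$ is fixed before the data it uses are collected, so each epoch event is a fixed-time self-normalized bound and needs no stopping-time argument. The rest is the Loewner comparison, which is routine.
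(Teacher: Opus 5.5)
Your proposal is correct and follows essentially the same route as the paper. Both use the three-term decomposition around $\widehat\Delta_k$, the triangle inequality with the Loewner dominations $\widehat M_{k,\mathrm{bias}}\preceq G_k^{\mathrm{on}}$ and $\widehat M_{k,\mathrm{bias}}\preceq G_{\mathrm{off}}$ (which the paper obtains by order reversal of inversion, as in your second justification), and a union bound over the offline event and the $K_{\mathrm{ep}}$ epoch events, with your derivation of the dominations via trivial decompositions in Proposition~\ref{prop:parallel-sum-variational} being only a cosmetic variant.
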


\begin{proof}
With probability at least $1-\delta_{\mathrm{off}}$,
\(
\|\theta^\dagger-\hat\theta_{\mathrm{off}}\|_{G_{\mathrm{off}}}\le\beta_{\mathrm{off}}(\delta_{\mathrm{off}})
\).
For each epoch~$k$, the estimator
$\hat\theta_k^{\mathrm{on}}$ uses only data strictly before~$\tau_k$; the self-normalized ridge
inequality at time~$\tau_k$ yields
\(
\|\theta_*-\hat\theta_k^{\mathrm{on}}\|_{G_k^{\mathrm{on}}}\le\beta_k^{\mathrm{on}}(\delta_k)
\)
on an event of probability at least~$1-\delta_k$ (in the predictable epoch case, the same
time-uniform bound or a union over~$t$ and evaluation at~$\tau_k$ applies). A union bound gives
simultaneous validity with probability at least $1-\delta_{\mathrm{bias}}$ when
\(
\delta_{\mathrm{off}}+\sum_k\delta_k\le\delta_{\mathrm{bias}}
\).

On this event, for fixed~$k$, the parallel-sum definition implies
\(
\widehat M_{k,\mathrm{bias}}\preceq G_k^{\mathrm{on}}
\)
and
\(
\widehat M_{k,\mathrm{bias}}\preceq G_{\mathrm{off}}
\)
because
\(
(G_k^{\mathrm{on}})^{-1}{+}G_{\mathrm{off}}^{-1}\succeq (G_k^{\mathrm{on}})^{-1}
\)
and the symmetric inequality for~$G_{\mathrm{off}}$. Hence, for all~$v$,
\(
\|v\|_{\widehat M_{k,\mathrm{bias}}}\le\|v\|_{G_k^{\mathrm{on}}}
\)
and
\(
\|v\|_{\widehat M_{k,\mathrm{bias}}}\le\|v\|_{G_{\mathrm{off}}}
\).
Decompose
\(
\theta_*-\theta^\dagger
=
\widehat\Delta_k
+(\theta_*-\hat\theta_k^{\mathrm{on}})
-(\theta^\dagger-\hat\theta_{\mathrm{off}})
\)
with
\(
\widehat\Delta_k:=\hat\theta_k^{\mathrm{on}}-\hat\theta_{\mathrm{off}}
\).
The triangle inequality in the~$\widehat M_{k,\mathrm{bias}}$ norm, together with the confidence events and
Loewner domination, gives
\[
\|\theta_*-\theta^\dagger\|_{\widehat M_{k,\mathrm{bias}}}
\le
\|\widehat\Delta_k\|_{\widehat M_{k,\mathrm{bias}}}
+\beta_k^{\mathrm{on}}(\delta_k)
+\beta_{\mathrm{off}}(\delta_{\mathrm{off}})
=\widehat\rho_k.
\]
Since~$k$ is arbitrary, the claim holds for all~$k$.
\end{proof}

\subsection{Proof of Theorem~\ref{thm:epoch-parallel-regret}}
\label{app:k4-proof-epoch-parallel-regret}

\begin{proof}[Proof of Theorem~\ref{thm:epoch-parallel-regret}]
The online branch is unchanged, so the ordinary fallback regret term
$C_{\mathrm{SL}}\sqrt{Td\,L_T^3}$ and its proof are unchanged. For the pooled branch, on the event
of Theorem~\ref{thm:epoch-parallel-bound-valid},
\[
\|\theta_*-\theta^\dagger\|_{\widehat M_{k(t),\mathrm{bias}}}
\le
\widehat\rho_{k(t)}
\qquad\forall t.
\]
The epoch start times are part of the prespecified (or past-measurable) rule, and
$(\widehat M_{k,\mathrm{bias}},\widehat\rho_k)$ uses only data before~$\tau_k$; therefore the pair used in round~$t$
is $\cF_{t-1}$-measurable and the policy remains non-anticipatory, so the martingale and
self-normalized concentration events from the proof of Theorem~\ref{thm:main} are unchanged.
On the directional bias certificate event, for every round-$t$ queried direction~$x$,
\[
|x^\top(\theta_*-\theta^\dagger)|
\le
\widehat\rho_{k(t)}\|x\|_{\widehat M_{k(t),\mathrm{bias}}^{-1}}.
\]
In the pooled confidence step (Lemma~\ref{lem:pooled-valid} in the main appendix proof), the fixed
bias width
\(
\rho\,\|M_{\mathrm{bias}}^{-1/2}G_{\mathrm{off}}(A_{t,\ell}{+}G_{\mathrm{off}})^{-1}x\|_2
\)
is therefore replaced by
$\widehat\rho_{k(t)}\bigl\|\widehat M_{k(t),\mathrm{bias}}^{-1/2}G_{\mathrm{off}}(A_{t,\ell}{+}G_{\mathrm{off}})^{-1}x\bigr\|_2$,
i.e., $\widehat\rho_{k(t)}\widehat\psi_{k(t),t,\ell}(x)$. The UCB/LCB validity, elimination, and
stopping-layer arguments from the proof of Theorem~\ref{thm:main} then go through unchanged except
that the routed bias sum is~$\widetilde\Psi_T^{\mathrm{ep}}$, while the statistical (non-bias) part
of the pooled width still produces the
\(
8\sum_\ell
(\beta_{\mathrm{off}}{+}\gamma_\ell^{\mathrm{fin}})\sqrt{2|Q_\ell|\,\Lambda_\ell}
\)
term by the same pooled elliptical potential calculation. Intersecting the directional bias certificate event with
the original $1-T^{-2}$-probability event for the online, offline, and pooled self-normalized
events yields the stated probability and regret bound.
\end{proof}

\subsection{Doubling-Epoch Refinement}

We now specialize to a standard doubling schedule in order to obtain a closed-form refinement.

A default is
$K_{\mathrm{ep}}:=\lfloor\log_2 T\rfloor+1$,
\(
\tau_k:=2^{k-1}
\),
\(
\tau_{K_{\mathrm{ep}}+1}:=T+1
\).
To combine Theorems~\ref{thm:epoch-parallel-bound-valid}--\ref{thm:epoch-parallel-regret} with
closed-form radii, split
\(
\delta_{\mathrm{off}}:=\delta_{\mathrm{bias}}/2
\)
and
\(
\delta_k:=\delta_{\mathrm{bias}}/(2K_{\mathrm{ep}})
\)
and set
\[
\beta_k^{\mathrm{on}}
:=
\sigma\sqrt{
d\log(1{+}\tau_k)
+2\log(2K_{\mathrm{ep}}/\delta_{\mathrm{bias}})}
+S,
\qquad
\beta_{\mathrm{off}}^{\mathrm{bias}}
:=
\sigma\sqrt{
2\log\bigl(\det(G_{\mathrm{off}})^{1/2}/(\delta_{\mathrm{bias}}/2)\bigr)
}+S.
\]
In each epoch, with $\widehat\Delta_k$ and~$\widehat M_{k,\mathrm{bias}}$ built from~$D_{\mathrm{off}}$ and
online data before~$\tau_k$ (as in
Algorithm~\ref{alg:epoch-ell-minucb}), set
\(
\widehat\rho_k
:=
\|\widehat\Delta_k\|_{\widehat M_{k,\mathrm{bias}}}
+\beta_k^{\mathrm{on}}
+\beta_{\mathrm{off}}^{\mathrm{bias}}.
\)

\begin{theorem}[High probility bound for the data-driven directional bias certificate]
\label{thm:epoch-doubling-regret}
Under the doubling schedule, $\delta$-split, and
$(\widehat\rho_k,\widehat M_{k,\mathrm{bias}})$ from the previous paragraph, define
$n_k:=\tau_{k+1}-\tau_k$,
\[
\widehat c_k
:=
\lambda_{\max}\bigl(
G_{\mathrm{off}}^{1/2}\widehat M_{k,\mathrm{bias}}^{-1}G_{\mathrm{off}}^{1/2}
\bigr),
\qquad
\mathcal S_{\mathrm{ep}}
:=
\sum_{k=1}^{K_{\mathrm{ep}}} n_k\,\widehat\rho_k^2\,\widehat c_k.
\]
With probability at least $1-T^{-2}-\delta_{\mathrm{bias}}$, the regret satisfies
\[
R_T
\le
\min\left\{
C_{\mathrm{SL}}\sqrt{Td\,L_T^3},
\;
8\Gamma_{\mathrm{pool}}\sqrt{2T\,\widetilde\Lambda_T}
+8\sqrt{2\,\widetilde\Lambda_T\,\mathcal S_{\mathrm{ep}}}
\right\},
\]
and also the same bound with
$8\Gamma_{\mathrm{pool}}\sqrt{2T\,\widetilde\Lambda_T}$ replaced by the full layerwise statistical term
\(
8\sum_{\ell}
(\beta_{\mathrm{off}}+\gamma_\ell^{\mathrm{fin}})\sqrt{2|Q_\ell|\,\Lambda_\ell}
\),
where~$\Gamma_{\mathrm{pool}}$ is as in Theorem~\ref{thm:main}.
\end{theorem}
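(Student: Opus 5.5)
The plan is to reduce Theorem~\ref{thm:epoch-doubling-regret} to Theorem~\ref{thm:epoch-parallel-regret}, whose pooled bound already reads
\[
8\Gamma_{\mathrm{pool}}\sqrt{2T\widetilde\Lambda_T}+8\widetilde\Psi_T^{\mathrm{ep}}
\]
(or the layerwise statistical term in place of the first summand), together with the fallback $C_{\mathrm{SL}}\sqrt{Td L_T^3}$. So only two things remain. First, the doubling schedule and the $\delta$-split must meet the hypotheses of Theorem~\ref{thm:epoch-parallel-bound-valid}. Second, the routed bias sum must satisfy
\[
\widetilde\Psi_T^{\mathrm{ep}}\le\sqrt{2\widetilde\Lambda_T\mathcal S_{\mathrm{ep}}}.
\]

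\emph{Validity under the split.} The budgets give $\delta_{\mathrm{off}}+\sum_k\delta_k=\delta_{\mathrm{bias}}/2+K_{\mathrm{ep}}\cdot\delta_{\mathrm{bias}}/(2K_{\mathrm{ep}})=\delta_{\mathrm{bias}}$. The radius $\beta_{\mathrm{off}}^{\mathrm{bias}}$ is exactly $\beta_{\mathrm{off}}(\delta_{\mathrm{bias}}/2)$ by Proposition~\ref{prop:derive-self-normalized}. For $\beta_k^{\mathrm{on}}$, I will apply Proposition~\ref{prop:derive-self-normalized} to the full online prefix before $\tau_k$, with level $\delta_k$. I then bound $\log\det G_k^{\mathrm{on}}\le d\log(1+\tau_k/d)\le d\log(1+\tau_k)$ by AM--GM, as in Lemma~\ref{lem:online-potential}, which makes the stated closed form valid. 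The one subtlety is that the prefix length $\tau_k-1$ is deterministic, because the schedule is fixed in advance, so no extra union over time is needed. The offline event used for the certificate may be a different event from $\cE_{\mathrm{off}}$ in $\cE$, but a union bound still gives total failure probability at most $T^{-2}+\delta_{\mathrm{bias}}$.

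\emph{Bounding $\widetilde\Psi_T^{\mathrm{ep}}$.} Fix a round $t$ in epoch $k$, and write $G=G_{\mathrm{off}}$, $A=A_{t,\ell_t}$. The key estimate is
\[
\widehat\psi_{k,t,\ell}(x)^2
= x^\top(A+G)^{-1}G\widehat M_{k,\mathrm{bias}}^{-1}G(A+G)^{-1}x
\le \widehat c_k\, x^\top(A+G)^{-1}G(A+G)^{-1}x
\le \widehat c_k\|x\|_{(A+G)^{-1}}^2 .
\]
The first inequality holds because $G\widehat M^{-1}G\preceq\widehat c_kG$, by the same conjugation used for $c_{\mathrm{align}}$ in the proof of Theorem~\ref{thm:spectral-pooled-envelope}. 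The second holds because $G\preceq A+G$. Cauchy--Schwarz over all rounds then gives
\[
\widetilde\Psi_T^{\mathrm{ep}}\le\Big(\sum_t\widehat\rho_{k(t)}^2\widehat c_{k(t)}\Big)^{1/2}\Big(\sum_t\|x_{t,a_t}\|^2_{(A_{t,\ell_t}+G)^{-1}}\Big)^{1/2}.
\]
The first factor equals $\sqrt{\mathcal S_{\mathrm{ep}}}$, since each epoch contributes $n_k$ identical terms. The second factor is at most $\sqrt{2\widetilde\Lambda_T}$ after splitting by layers and applying Lemma~\ref{lem:pooled-potential}.

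\emph{Assembly and main obstacle.} Plugging this bound into Theorem~\ref{thm:epoch-parallel-regret} yields both stated forms, with the statistical term unchanged. I expect the main obstacle to be the validity step rather than the algebra. The online estimate $\hat\theta_k^{\mathrm{on}}$ pools all layers and uses adaptively chosen actions, so the self-normalized bound of Lemma~\ref{lem:self-normalized-noise} must be applied at the fixed stopping index $\tau_k-1$. I also need to check that $\beta_{\mathrm{off}}$ inside $\Gamma_{\mathrm{pool}}$, which uses $\delta_{\mathrm{off}}$ from Theorem~\ref{thm:main}, and $\beta_{\mathrm{off}}^{\mathrm{bias}}$ are kept as separate quantities, each charged to its own probability budget.
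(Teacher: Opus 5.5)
Your proposal is correct and follows essentially the same route as the paper: you reduce to Theorem~\ref{thm:epoch-parallel-regret} and bound $\widetilde\Psi_T^{\mathrm{ep}}\le\sqrt{2\widetilde\Lambda_T\mathcal S_{\mathrm{ep}}}$ via $G_{\mathrm{off}}\widehat M_{k,\mathrm{bias}}^{-1}G_{\mathrm{off}}\preceq\widehat c_k G_{\mathrm{off}}$, $G_{\mathrm{off}}\preceq A_{t,\ell}+G_{\mathrm{off}}$, Cauchy--Schwarz over rounds, and Lemma~\ref{lem:pooled-potential} layer by layer. Your extra check that the doubling schedule and $\delta$-split satisfy the hypotheses of Theorem~\ref{thm:epoch-parallel-bound-valid}, including keeping $\beta_{\mathrm{off}}^{\mathrm{bias}}$ separate from the $\beta_{\mathrm{off}}$ inside $\Gamma_{\mathrm{pool}}$, is sound; the paper leaves this step implicit.
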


\begin{proof}[Proof of Theorem~\ref{thm:epoch-doubling-regret}]
This step specializes the epoch-wise bound to the doubling schedule introduced above.
Start from Theorem~\ref{thm:epoch-parallel-regret}. For $t$ in epoch~$k$, write
$A:=A_{t,\ell_t}$ and $G:=G_{\mathrm{off}}$. By definition of~$\widehat\psi$,
\[
\widehat\psi_{k,t,\ell_t}(x_{t,a_t})^2
=
x_{t,a_t}^{\top}
(A+G)^{-1}
G\widehat M_{k,\mathrm{bias}}^{-1}G
(A+G)^{-1}
x_{t,a_t}.
\]
The choice
\(
\widehat c_k := \lambda_{\max}(G^{1/2}\widehat M_{k,\mathrm{bias}}^{-1}G^{1/2})
\)
is equivalent to
\(
G\widehat M_{k,\mathrm{bias}}^{-1}G\preceq \widehat c_k\,G
\)
in Loewner order, hence
\(
\widehat\psi_{k,t,\ell_t}(x_{t,a_t})^2
\le
\widehat c_k
\,x_{t,a_t}^{\top}
(A+G)^{-1}G(A+G)^{-1}x_{t,a_t}
\).
Since $G\preceq A+G$,
\(
(A+G)^{-1}G(A+G)^{-1}\preceq (A+G)^{-1}
\)
and
\[
\widehat\psi_{k,t,\ell_t}(x_{t,a_t})
\le
\sqrt{\widehat c_k}
\,\|x_{t,a_t}\|_{(A_{t,\ell_t}+G_{\mathrm{off}})^{-1}}.
\]
Therefore
\[
\widetilde\Psi_T^{\mathrm{ep}}
\le
\sum_{t=1}^T
\widehat\rho_{k(t)}\sqrt{\widehat c_{k(t)}}
\,\|x_{t,a_t}\|_{(A_{t,\ell_t}+G_{\mathrm{off}})^{-1}}.
\]
By Cauchy--Schwarz,
\[
\widetilde\Psi_T^{\mathrm{ep}}
\le
\Bigl(\sum_{t=1}^T \widehat\rho_{k(t)}^2\widehat c_{k(t)}\Bigr)^{1/2}
\Bigl(\sum_{t=1}^T \|x_{t,a_t}\|_{(A_{t,\ell_t}+G_{\mathrm{off}})^{-1}}^2\Bigr)^{1/2}.
\]
The first factor equals~$\sqrt{\mathcal S_{\mathrm{ep}}}$ because $(\widehat\rho_k,\widehat c_k)$ are
constant on each epoch. The second is at most~$\sqrt{2\,\widetilde\Lambda_T}$ by
Lemma~\ref{lem:pooled-potential} and the definition
\(
\widetilde\Lambda_T=\sum_\ell\Lambda_\ell
\)
in Theorem~\ref{thm:main}. Thus
\(
\widetilde\Psi_T^{\mathrm{ep}}\le \sqrt{2\,\widetilde\Lambda_T\,\mathcal S_{\mathrm{ep}}}
\).
Substitute into the minimum expression of Theorem~\ref{thm:epoch-parallel-regret} and use the
Cauchy--Schwarz step
\(
\sum_\ell
(\beta_{\mathrm{off}}+\gamma_\ell^{\mathrm{fin}})\sqrt{2|Q_\ell|\,\Lambda_\ell}
\le
\Gamma_{\mathrm{pool}}\sqrt{2T\,\widetilde\Lambda_T}
\)
as in the proof of Theorem~\ref{thm:main} to obtain the two displayed bounds.
\end{proof}

\begin{proof}[Proof of Corollary~\ref{cor:epoch-adaptivity-penalty}]
Theorem~\ref{thm:epoch-doubling-regret} gives
\[
R_T
\le
\min\left\{
C_{\mathrm{SL}}\sqrt{Td\,L_T^3},
\;
8\Gamma_{\mathrm{pool}}\sqrt{2T\,\widetilde\Lambda_T}
+8\sqrt{2\,\widetilde\Lambda_T\,\mathcal S_{\mathrm{ep}}}
\right\}
\]
with probability at least $1-T^{-2}-\delta_{\mathrm{bias}}$.
Factor the pooled term as
\[
8\Gamma_{\mathrm{pool}}\sqrt{2T\,\widetilde\Lambda_T}
+8\sqrt{2\,\widetilde\Lambda_T\,\mathcal S_{\mathrm{ep}}}
=
8\Bigl(\Gamma_{\mathrm{pool}}+\sqrt{\mathcal S_{\mathrm{ep}}/T}\Bigr)
\sqrt{2T\,\widetilde\Lambda_T},
\]
which is exactly the displayed bound in the corollary.
\end{proof}

\section{Additional Experiments}
\label{app:more-experiments}

All experiments use $K{=}5$ arms, $d{=}5$ features, horizon $T{=}20000$, sub-Gaussian reward
noise~$\sigma{=}0.1$, and $n_{\mathrm{off}}{=}2000$ offline points unless stated otherwise.
Contexts are independent Gaussian draws in $\mathbb{R}^d$ for each arm and round, normalized to the
Euclidean unit ball. We set ridge $\lambda_{\mathrm{reg}}{=}1$, confidence budget
$\delta{=}0.01$, norm bound $S{=}1$, and offline covariate diversity multiplier
$\mathrm{div}{=}1.0$. Reported curves are means with variability bands over
$n_{\mathrm{runs}}{=}50$ trials with base seed~$42$.
The figures are multi-panel versions of the same protocol, with one panel per setting. All experiments run on a single CPU core.

\paragraph{Randomized bias composition.}
The online parameter is drawn with independent coordinates in~$[0.25,0.55]$.
The offline mean is
\(
\theta^\dagger = s\,\theta_* + u,
\)
where~$s\in(0,1]$ and $u$ is uniform on~$[0,c]^d$ with $c\in\{0,0.2,0.4,0.6,0.8\}$.
The pairs~$(s,c)$ are chosen so that as~$c$ increases, the additive offset~$u$ has larger range while the
scale factor~$s$ toward~$\theta_*$ decreases; at $c=0$ one has $s=1$ and $u=0$ (a pure re-scaling
parameterization of the online vector).
Figure~\ref{fig:app-exp-shift} stacks these five misspecification strengths; we plot Ellipsoidal-MINUCB
and the best purely offline policy on the same axis for each panel.
\begin{figure}[ht]
  \centering
  \includegraphics[width=0.98\linewidth]{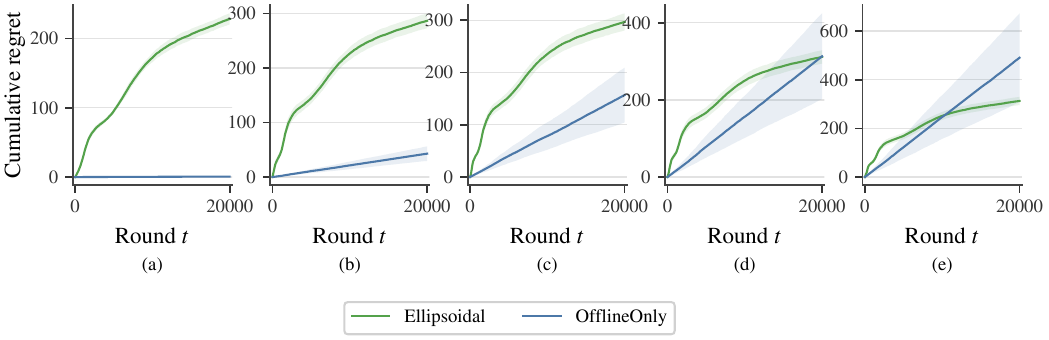}
  \caption{Randomized bias sweep: for each~$c$, mean cumulative regret under the~$(s,u)$ law above
  (left to right, increasing~$c$; five panels in the published figure).}
  \label{fig:app-exp-shift}
\end{figure}

\paragraph{Fixed online parameter, varying offline mean on one coordinate.}
We fix~$\theta_*=(2,1,1,1,1)$ and change only~$\theta^\dagger_1$ among
$\{0,1,2,3,4\}$ (with the remaining coordinates~$1$ in each case).
Figure~\ref{fig:app-exp-ablation} compares the full method to two stripped variants: one that only uses
the offline design matrix in the online branch, and one that only uses the online ridge design (no
offline pooling in the same sense as the main text).
\begin{figure}[ht]
  \centering
  \includegraphics[width=0.98\linewidth]{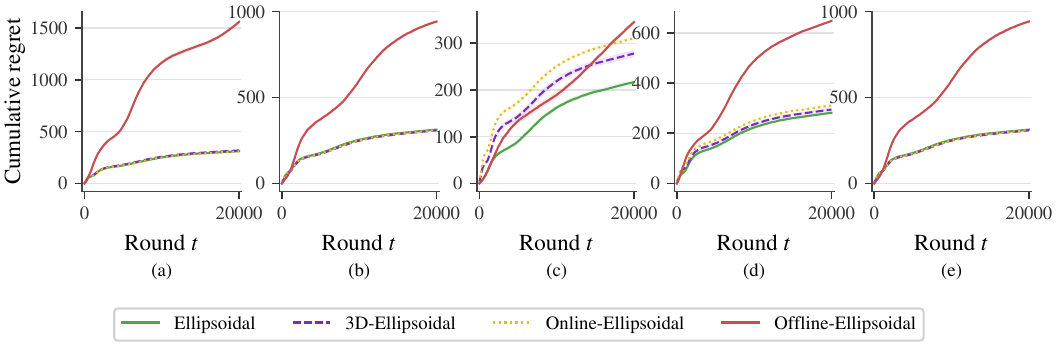}
  \caption{Ablation: five values of~$\theta^\dagger_1$ with~$\theta_*$ as above, same hyperparameters
  otherwise.}
  \label{fig:app-exp-ablation}
\end{figure}

\paragraph{Covariate shift on offline data only.}
The online and offline parameters are generated from the same random law, but offline covariates are
drawn from an axis-aligned box~$\prod_j[\ell_j,h_j]$ inside the support used at play time.
In particular, the first feature uses~$[0,1]$, and coordinates~$2,\ldots,d$ use~$[0,w]$ with~$w$ in a finite
decreasing set that includes $1$, a moderate value, and small values down to~$0.01$.
For small~$w$ the offline design places almost no energy on several directions, so the pooled statistics
in those directions are driven almost entirely by the online data even when~$\theta^\dagger=\theta_*$ on
average.
Figure~\ref{fig:app-exp-div} reports the same algorithm comparison as in Figure~\ref{fig:app-exp-shift}, here indexed by~$w$.
\begin{figure}[ht]
  \centering
  \includegraphics[width=0.98\linewidth]{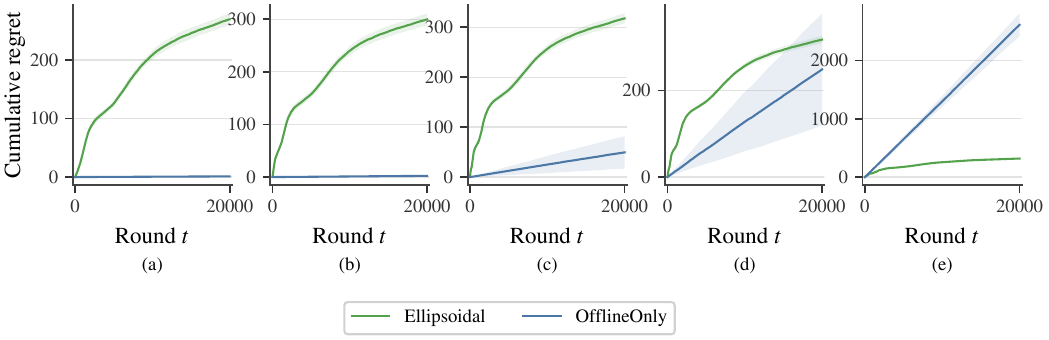}
  \caption{Offline-coverage study: each panel is a value of~$w$ in the family above (narrow
  offline support on several coordinates, wide online support; five panels in the published figure).}
  \label{fig:app-exp-div}
\end{figure}

\section{Broader Impact}
  \label{app: Broader Impact}
  This work studies how to reuse offline data in linear contextual bandits when the offline and online
  environments may differ. A potential positive impact is that safer reuse of historical data can
  reduce online exploration, which may lower deployment cost and reduce exposure to poor actions in
  applications such as recommendation, advertising, and pricing. A key risk is that if offline bias is
  underestimated, the method may over-trust historical data and propagate existing biases or spurious
  correlations. Our framework partly mitigates this through conservative confidence bounds and an
  explicit fallback to the online branch, but practical use in high-stakes settings still requires
  careful auditing and monitoring.

\end{document}